\crefname{equation}{}{}
\title{Sampling Theorems for Unsupervised Learning in Linear Inverse Problems}
\author{Julián Tachella$^1$, Dongdong Chen$^2$ and Mike Davies$^2$}
\date{%
    $^1$ENSL, CNRS, Laboratoire de Physique, Lyon, France\\%
    $^2$University of Edinburgh, School of Engineering, UK\\[2ex]%
}
\newcommand{\signalset}{\mathcal{X}}
\newcommand{\yset}{\mathcal{Y}}
\newcommand{\baseset}{\mathcal{X}_0}
\newcommand{\ntransf}{|\mathcal{G}|}
\newcommand{\red}[1]{\textcolor{black}{#1}}
\newcommand{\JT}[1]{\textcolor{black}{#1}}
\newcommand{\R}[1]{\mathbb{R}^{#1}}
\newcommand{\C}[1]{\mathbb{C}^{#1}}
\newcommand{\vect}[1]{\text{vec}(#1)}
\newcommand{\rangeA}{\mathcal{R}_A}
\newcommand{\rangeAg}{\mathcal{R}_{A_g}}
\newcommand{\nullA}{\mathcal{N}_A}
\newcommand{\rk}[1]{\text{rank}(#1)}
\newcommand{\bdim}[1]{\operatorname{boxdim}\left(#1\right)}
\newcommand{\nullAg}{\mathcal{N}_{A_g}}
\newcommand{\Hilb}{\mathcal{H}}
\newcommand{\group}{\mathcal{G}}
\DeclareMathOperator*{\argmin}{arg\,min}
\newtheorem{theorem}{Theorem}
\newtheorem{lemma}[theorem]{Lemma}
\newtheorem{conjecture}[theorem]{Conjecture}
\newtheorem{proposition}[theorem]{Proposition}
\theoremstyle{definition}
\newtheorem{example}{Example}[section]
\begin{document}

\title{Sensing Theorems for Unsupervised Learning in Linear Inverse Problems}
\author{Julián Tachella$^1$, Dongdong Chen$^2$ and Mike Davies$^2$}

\date{%
    $^1$CNRS, ENSL, Laboratoire de Physique, Lyon, France\\%
    $^2$University of Edinburgh, School of Engineering, UK\\[2ex]%
}

\maketitle

\begin{abstract}
    Solving \JT{an ill-posed} linear inverse problem requires knowledge about the underlying signal model. In many applications, this model is a priori unknown and has to be learned from data.  However, it is impossible to learn the model using observations obtained via a single incomplete measurement operator, as there is no information \JT{about the signal model in the nullspace} of the operator, resulting in a chicken-and-egg problem: to learn the model we need reconstructed signals, but to reconstruct the signals we need to know the model. Two ways to overcome this limitation are using multiple measurement operators or assuming that the signal model is invariant to a certain group action. In this paper, we present necessary and sufficient \JT{sensing} conditions for learning the signal model from \JT{measurement data alone} which only depend on the dimension of the model and the number of operators or properties of the group action that the model is invariant to. As our results are agnostic of the learning algorithm, they shed light into the fundamental limitations of learning from incomplete data and have implications  in a wide range set of practical algorithms, such as dictionary learning, matrix completion and deep neural networks.
\end{abstract}
\vspace{.5mm}
\section{Introduction}
Inverse problems are ubiquitous in science and engineering applications such as computed tomography (CT)~\citep{jin2017deep}, depth ranging~\citep{rapp2020advances} and non-line-of-sight imaging~\citep{o2018confocal}. In this paper, we consider linear inverse problems that consist of reconstructing a signal $x\in\signalset\subset \mathbb{R}^{n}$ from incomplete and noisy measurements $y\in\JT{\yset}\subseteq\mathbb{R}^{m}$, that is
\begin{equation}
    y = Ax + \epsilon
\end{equation}
where $\epsilon$ denotes the noise affecting the measurements. This is generally an ill-posed task due to the incomplete forward operator $A$ with $m< n$ and the presence of noise.
Classical approaches assume a signal \JT{model} using some prior knowledge about the underlying signals. For example, the well-known total variation model~\citep{rudin1992nonlinear} is built on the prior belief that natural images are approximately piecewise smooth.  This approach often yields a loose description of the true model, providing biased and/or suboptimal reconstructions. In recent years, this approach has been replaced by learning \JT{the signal model} directly from data. For example, a common approach is to learn a model consisting of a dictionary of image patches using a dataset of natural images~\citep{studer2012dictionary}. In a similar vein, it is possible to learn directly the reconstruction function $f:y\mapsto x$ via deep neural networks using multiple training pairs $(x_i,y_i)$~\citep{jin2017deep}.

Despite the appeal and better performance of the learning-based approach, in many sensing applications such as medical and scientific imaging we can only access measurements~\JT{$y_i$ which hinders supervised learning solutions.} Moreover, if the measurement process $A$ is incomplete, it is fundamentally impossible to identify the \JT{signal set\footnote{\JT{Throughout the paper we use the terms \emph{signal model} and \emph{signal set} interchangeably to denote the support $\signalset$ of the signal distribution $p(x)$.}} $\signalset$} through measurements alone \JT{even in the absence of noise}, as there is no information about the set of signals in the nullspace of $A$. \JT{In other words, we cannot uniquely identify the signal set $\signalset$ from the noiseless measurement set $\yset=A\signalset$. It is also impossible to learn the reconstruction function $f:y\mapsto x$, independently of the number of observed measurement vectors $y_i$, as explained by the following simple proposition:}

\begin{proposition} \label{prop:simple} \citep{chen2021equivariant}
\JT{Any reconstruction function $f:\mathbb{R}^{m}\mapsto\mathbb{R}^{n}$ of the form
\begin{equation}
\label{eq:unlearnable}
    f(y) = A^{\dagger}y + v(y)
\end{equation}
where $A^{\dagger}$ is the linear pseudo-inverse of $A$ and $v(y)$ is any function whose image belongs to the nullspace of $A$ verifies measurement consistency $Af(y_i)=y_i$. }
\end{proposition} 

\begin{proof} \JT{For any $f$ of the form \cref{eq:unlearnable}, the measurement consistency can be expressed as $Af(y) = AA^{\dagger}y + Av(y)$ where the first term is simply $y$ as $AA^{\dagger}$ is the identity matrix, and $Av(y)=0$ for any $v(y)$ in the nullspace of $A$.}
\end{proof}
Hence the chicken-and-egg problem: in order to reconstruct $x$ we need the signal model, but to learn this model we require some ground truth training data $x_i$.

Here we show that this fundamental limitation can be overcome \JT{if $\signalset$ is low-dimensional,} either by  using information from multiple incomplete \JT{measurement} operators $A_1,\dots,A_{\ntransf}$ or by having a single operator $A$ and exploiting weak prior information associated with group invariance properties of the signal model. 

Multiple measurement operators can provide additional information about the  model if the operators have different nullspaces. This idea was used in various inverse problems such as image inpainting~\citep{studer2012dictionary}, magnetic resonance imaging (MRI)~\citep{liu2020rare} and hyperspectral imaging~\citep{yang2015mixture}.
Recently, the equivariant imaging framework~\citep{chen2021equivariant,chen2021robust} empirically showed that learning the signal model with a single operator $A$ is also possible if the signal model is invariant to a certain group of transformations\footnote{Formally, these transformations constitute the action of a group (also called the representation of a group).} $T_1,\dots,T_{\ntransf}$, as they give access to multiple \emph{virtual} operators $AT_1,\dots,AT_{\ntransf}$ with possibly different nullspaces. 
This strategy offers an appealing way to learn the model, as most real-world signal sets present certain invariances, such as invariance to translations and rotations in images.
\begin{table}[t]
\begin{tabular}{|l|l|l|l|}
\hline
\multicolumn{1}{|c|}{\textbf{Inverse problem}} & \multicolumn{1}{c|}{\begin{tabular}[c]{@{}l@{}} \textbf{Forward} \\  \textbf{operator}\end{tabular}} & \multicolumn{1}{c|}{ \begin{tabular}[c]{@{}l@{}} \textbf{Group action}/ \\  \textbf{Mult. operator}\end{tabular} } & \multicolumn{1}{c|}{\textbf{Algorithm}} \\ \hline
Image inpainting & Binary mask & \begin{tabular}[c]{@{}l@{}}Permutations\\ 2D shifts\end{tabular} & \begin{tabular}[c]{@{}l@{}}Dictionary learning \\\citep{studer2012dictionary} \end{tabular}  \\ \hline
\begin{tabular}[c]{@{}l@{}}Computed \\  tomography\end{tabular} & \begin{tabular}[c]{@{}l@{}}Sparse-view\\  Radon transform\end{tabular} & 2D rotations &
\begin{tabular}[c]{@{}l@{}}Equivariant Imaging \\\citep{chen2021equivariant,chen2021robust}\end{tabular} \\ \hline
\multirow{2}{*}{MRI} & \multirow{2}{*}{
\begin{tabular}[c]{@{}l@{}}Subsampled \\Fourier\end{tabular}}& Multiple op. & 
\begin{tabular}[c]{@{}l@{}}Deep networks \\\citep{liu2020rare}\end{tabular}
 \\ 
\cline{3-4}
 & & 2D rotations & \begin{tabular}[c]{@{}l@{}}Equivariant Imaging \\\citep{chen2021robust}\end{tabular}  \\
\hline
High-speed video & Binary mask & 2D shifts & 
\begin{tabular}[c]{@{}l@{}}Low-rank GMM \\\citep{yang2015mixture}\end{tabular}
\\ \hline
Depth completion & Binary mask & Multiple op. & 
\begin{tabular}[c]{@{}l@{}}Deep networks \\\citep{jaritz2018depth}\end{tabular}
\\ \hline 
\begin{tabular}[c]{@{}l@{}}Hyperspectral \\imaging\end{tabular}
  & \begin{tabular}[c]{@{}l@{}}Spectral domain\\  CS matrix\end{tabular} & Multiple op. & \begin{tabular}[c]{@{}l@{}}Low-rank GMM \\\citep{yang2015mixture}\end{tabular} \\ \hline
  \begin{tabular}[c]{@{}l@{}}Electron \\microscopy\end{tabular}
  & 2D projection & \begin{tabular}[c]{@{}l@{}}3D rotations \\and  shifts\end{tabular} & \begin{tabular}[c]{@{}l@{}}Cryo-GAN \\\citep{gupta2020multi}\end{tabular} \\ \hline
\begin{tabular}[c]{@{}l@{}}Motion \\segmentation\end{tabular} & Binary mask & Permutations & SSC~\citep{yang2015sparse} \\ \hline
\begin{tabular}[c]{@{}l@{}}Single image \\ view synthesis\end{tabular} & Light transport & 3D rotations & \begin{tabular}[c]{@{}l@{}}Coordinate-based \\ neural representation \\\citep{tancik2021learned}
\end{tabular} \\ \hline 
\JT{\begin{tabular}[c]{@{}l@{}}Compressed \\ Sensing
\end{tabular}} & \JT{Random Gaussian} & \JT{Permutations} & \begin{tabular}[c]{@{}l@{}} \JT{Approximate Message} \\ \JT{Passing} \\ \JT{\citep{guo2015near}}\\\JT{\citep{metzler2018unsupervised}}
\end{tabular} \\\hline 
\end{tabular}
\caption{Examples of applications where a low-dimensional signal model is learned via multiple measurement operators or assuming that the model is invariant to a group action.}
\label{tab:applications}
\end{table}

\Cref{tab:applications} presents various inverse problems and related reconstruction algorithms where information in the nullspace of $A$ is obtained using multiple measurement operators or exploiting invariance of the signal model to the action of a group. Despite the empirical successes of these methods, theoretical guarantees for model identification are still lacking: what are the requirements on the measurement operators or model invariance properties? When is it possible learn the model and reconstruct the signals? The two fundamental problems of signal recovery and model identification can be \JT{formally defined} as follows: 
\begin{description}
    \item[Signal Recovery] \JT{Signals $x$ can be uniquely recovered from observations $y=Ax$ if for every measurement vector $y\in\yset$ there is a unique signal $x\in\signalset$ that verifies the observed measurements $y=Ax$.}
    \item[Model Identification] \JT{We define a signal set $\signalset$  to be identifiable within a given a class~$\mathcal{C}$ if it is uniquely defined from the observed measurement sets $\yset_g = A_g \signalset$ with $g=1,\dots,\ntransf.$}
\end{description}
In general, there can be a unique solution for none of the problems, just one or both. There might be a unique solution for signal recovery if the signal model is known, but it might be impossible to \JT{identify} the model 
in the first place. For example, this is the case of blind compressed sensing~\citep{gleichman2011blind}. The converse is also possible, that is, uniquely identifying a model without having enough measurements per sample to uniquely identify the associated signal. For example, multiple single ($m=1$) measurements  $y_i=a_{g_i}^{\top}x_i$ are sufficient to identify a low-dimensional signal subspace~\citep{chen2015rankone,gribonval2017compressive}, however it is not possible to recover the signal $x_i$ linked to each observed measurement $y_i$. 

It is well-known from generalized compressed sensing theory that unique signal recovery is possible if the nullspace of $A$ does not contain any vector in the \red{difference set} of $\signalset$~\citep{bourrier2014fundamental}. This condition can be achieved by \JT{the class of \emph{low-dimensional signal sets}}, i.e., where the  dimension of $\signalset$ is  both smaller than the ambient dimension $n$ and observed measurements $m$. \JT{For example, in standard compressed sensing signals are assumed to belong the $k$-sparse set, where $\signalset$ has dimension $k<m$.}  
On the other hand, to the best of our knowledge, the model identification problem has been mostly analysed in the context of matrix completion~\citep{candes2009exact}, where $\signalset$ is assumed to belong the class of low-dimensional subspaces or union of subspaces~\citep{eriksson2012high}, and \JT{the observations $y_i=A_{g_i}x$ consist of} different permutations of a binary mask, i.e., $A_g=AT_g$ where $A$ is a diagonal matrix indicating the observed entries and $T_1,\dots,T_{\ntransf}$ are permutation matrices. 

In this paper, we study necessary and sufficient conditions \JT{on the number of measurements} to guarantee a unique solution to both problems for \JT{the general class of low-dimensional} signal models, only relying on their intrinsic dimension and weak prior information related to invariance properties of the signal model.


\subsection{Summary of Contributions}

We consider the class of $k$-dimensional signal sets (the definition of dimension is made clear in \Cref{subsec: low-dim models}), and study model identifiability from incomplete measurements  associated with a finite number of measurement operators \JT{where each training sample  consists of an observation of a signal from the set $\signalset$ through one of a set of $\ntransf$ measurement operators}, i.e., $y_i = A_{g_i} x_i+\epsilon_i$ with $g_i\in \{1,\dots,\ntransf\}$, where $A_{g}\in \R{m\times n}$ and $m<n$. We also consider the case where we have a single operator $A$, i.e., $y_i = A x_i+\epsilon_i$, but the signal set is invariant to the action of a group $\group$. Our main results are:

\begin{itemize}
\item When the dimension of the model is large, i.e., $k=n$, the signal model cannot be uniquely identified if $m<n$.
\item When the dimension of the model is small, $k<n$, model uniqueness is possible.

\begin{itemize}
\item We first consider the case where we obtain measurements via a set of $\ntransf$ different operators $A_1,\dots,A_{\ntransf}$.
\begin{itemize}
    \item  A necessary condition for model identification is $m \geq n/ \ntransf$.
    \item Unique model identification is possible via almost every set  of $\ntransf$ operators  $A_1,\dots,A_{\ntransf}\in\R{m\times n}$ \JT{(w.r.t.\ the Lebesgue measure on $\R{\ntransf m n}$)} when $m > k + n/\ntransf$.
\end{itemize}
 \item Second, we consider the case where we have a single operator $A$ but the signal set is invariant to a group of transformations. Our results are stated as a function of $\max_j  c_j/s_j$, where $c_j$ and $s_j$ are the dimension and multiplicity of the $j$th irreducible representation of the group action\footnote{A detailed explanation of these concepts from linear representation theory can be found in \Cref{sec: group learning}.}. 
 \begin{itemize}
     \item A necessary condition for model identification for a compact group is $m \geq \max_j  c_j/s_j$.
     \item We show that, if $\group$ is a cyclic group, unique model identification is possible  by almost every operator $A\in\R{m\times n}$ \JT{(w.r.t. the Lebesgue measure on $\R{mn}$)} with $m > 2k + \max_j  c_j + 1$. \red{As cyclic groups have all irreducibles with dimension $s_j=1$, the bound is equivalent to $m > 2k + \max_j  c_j/s_j + 1$. We conjecture that this last bound  holds for more general groups with $s_j\geq 1$.}
     
    \item We characterize a subset of operators that fail to provide model uniqueness: if the forward operator $A$ is itself equivariant to the action of the group, it is impossible to uniquely identify the model if $m<n$, even for arbitrarily small $k$ and large number of transformations $\ntransf$.
 \end{itemize}
\end{itemize}
 \item We experimentally show that our bounds accurately characterize the performance of popular learning algorithms on synthetic and real datasets.
\end{itemize}
 \begin{figure}[t]
\centering
\includegraphics[width=.9\textwidth]{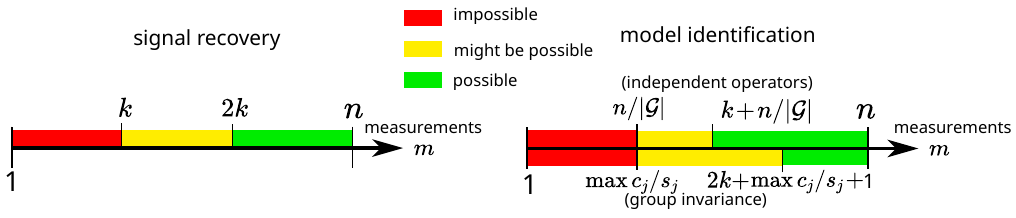}
\caption{Model identification and signal recovery regimes as a function of the number of partial observations $m$ per signal, model dimension $k$, ambient dimension $n$ and number of measurement operators $\ntransf$ (when it is possible to access multiple independent operators) or maximum multiplicity of an invariant subspace $\max_j  c_j/s_j$ (when the signal is group invariant or the operators are related via a group action).}
\label{fig:results_dc}
\end{figure}
\red{As summarized in~\Cref{fig:results_dc}, our results characterize the \JT{sensing} regimes for model identification, complementing the existing results for signal recovery. 
In particular, we provide necessary and sufficient conditions for identifying a low-dimensional \JT{group} invariant signal model from incomplete measurements of a single operator as a function of the properties of the group. 
 For example, consider 1D signals defined on a discrete grid of length $n$. We can uniquely identify a $k$ dimensional model with invariance to cyclic shifts using $m>2k+2$ measurements, whereas for a model with invariance to reflection we can only guarantee indentifiability using $m>2k+n/2+1$ measurements.
 \JT{It is important to note that our identifiability results are purely geometric and do not cover the number of samples needed to learn the model structure (the so-called sample complexity). Quantifying the sample complexity would require additional assumptions on the class of signal models, the noise and the choice of learning algorithm, and is out of scope of the present paper. However, in the experiments, we show that our bounds capture the behaviour of various learning algorithms in practical datasets.}
 }


\subsection{Related Work}
Some preliminary results of this work appear in an earlier manuscript~\citep{tachella2022sampling}, which only covers the case of learning from multiple independent operators. This section discusses other related work by topic, from blind compressed sensing to deep neural networks for inverse problems.
\paragraph{Blind Compressed Sensing}
The fundamental limitation of failing to learn a signal model from incomplete data goes back to  blind compressed sensing~\citep{gleichman2011blind} for the specific case of models exploiting sparsity on an orthogonal dictionary. In order to learn the dictionary from incomplete observations, \citep{gleichman2011blind} imposes additional constraints on the dictionary, while some subsequent papers~\citep{silva2011blind,aghagolzadeh2015new} remove these assumptions by proposing to use multiple operators $A_g$ as studied here. This paper can be seen as a generalization of such results to a much wider class of signal models.

\paragraph{Matrix Completion}
Matrix completion consists of inferring missing entries of a data matrix $Y = [y_1,\dots,y_N]$, whose columns \red{can be seen as partial observations of signals $x_i$, i.e., $y_i = A_{g_i}x_i$ where the operators $A_{g_i}$ select a random subset of $m$ entries of the signal $x_i$. In order to recover the missing entries, it is generally assumed that the signals $x_i$ (the columns of $X=[x_1,\dots,x_N]$) belong to a $k$-dimensional subspace with $k\ll n$. This problem can be viewed as the combination of  model identification, i.e., identifying the $k$-dimensional subspace, and signal recovery, i.e., reconstructing the individual columns.} If the columns are observed via $\ntransf$ sufficiently different patterns $A_{g_i}$ with the same number of entries $m$, a sufficient condition~\citep{pimentel2016characterization} for uniquely recovering almost every subspace model is\footnote{A larger number of measurements $m =\mathcal{O}(k\log n)$ is required to guarantee a stable recovery when the number of patterns $\ntransf$ is large~\citep{candes2009exact}.} $m \geq  (1-1/\ntransf)k + n/\ntransf$. 
A similar necessary condition was  shown in~\citep{pimentel2016information} for the case of \emph{high-rank} matrix completion~\citep{eriksson2012high}, which arises when the samples $x_i$ belong to a union of $k$-dimensional subspaces. 

We show that unique recovery is possible for almost every set of $\ntransf$ operators with $m > k + n/\ntransf$ measurements, however the  theory presented here goes beyond union of subspaces, being also valid for general low-dimensional models.

\paragraph{Multireference Alignment and Electron Microscopy}
In the multireference alignment problem~\citep{aguerrebere2016fundamental}, the goal is to reconstruct a signal $x$ from shifted and incomplete measurements, i.e.,
\begin{align}
    g_i &\sim \group \\
    y_i &= A T_{g_i} x + \epsilon_i
\end{align}
for $i=1,\dots,N$, where the shifts $T_{g_i}$ are a priori unknown. This problem can be generalized beyond shifts by considering transformations related to the action of a group $\group$. This is the case of electron-microcopy imaging, where  $\group$ is the group of rotations and translations of a particle~\citep{gupta2020multi}. 
The multireference alignment problem can be seen as a special case of our framework with a single operator $A$ and a $\group$-invariant signal set of dimension\footnote{For infinite but compact groups, $k=\text{dim}(\group)$.} $k=0$ which consists of all transformations of a single signal. If $\group$ is cyclic (e.g., shifts), our results guarantee model identification for almost every $A$ with $m>\max_j  c_j+1$ measurements.

\paragraph{Deep Nets for Inverse Problems}
Despite providing very competitive results, most deep learning based solvers require measurements and signal pairs $(x_i,y_i)$ in order to learn the reconstruction function $y\mapsto x$. A first step to overcome this limitation is due to Noise2Noise~\citep{lehtinen2018noise2noise}, where the authors show that it is possible to learn from only noisy data. However, their ideas only apply to denoising settings where there is a trivial nullspace, as the operator $A$ is the identity matrix.
In Artifact2Artifact~\citep{liu2020rare}, it was empirically shown that it is possible to exploit different measurement operators to learn the reconstruction function in the context of MRI. 
AmbientGAN~\citep{bora2018ambientgan} proposed to learn the signal distribution  from  incomplete measurements of multiple forward operators, however they only provide reconstruction guarantees for the case where an infinite number of operators is available\footnote{Their result relies on the Cram\'er-Wold theorem, which is discussed in~\Cref{subsec: highdim}}, $\ntransf=\infty$, a condition that is not met in practice.  
Finally, the equivariant imaging framework~\citep{chen2021equivariant} showed empirically that learning the reconstruction function using a single operator $A$ is possible in various imaging inverse problems when the signal model presents some group invariance. Here we provide a theoretical framework to support all of these findings.



\section{Signal Recovery Preliminaries} \label{subsec:CS preliminary}

Let $A^{\dagger}\in \R{n\times m}$ be the linear pseudo-inverse of $A$. 
We denote the range space of $A^{\dagger}$ as $\rangeA$. Its complement, the nullspace of $A$, is denoted as $\nullA$, where $\rangeA \oplus \nullA = \R{n}$ and $\oplus$ denotes the direct sum. Throughout the paper, we assume that the signals are associated with a distribution $p(x)$ supported on the signal set $\signalset\subset \R{n}$. Signal recovery has a unique solution if and only if the forward operator \JT{restricted\footnote{\JT{Note that if $m<n$, the operator $A$ cannot be one-to-one in the whole $\R{n}$.}} to the signal set $\signalset$}  is one-to-one, i.e., if for every pair of signals $x_1,x_2\in\signalset$ where $x_1\neq x_2$ we have that 
\begin{align}
    Ax_1 \neq Ax_2 \\
    A(x_1-x_2) \neq 0 
\end{align}
In other words, there is no vector $x_1-x_2\neq 0$ \JT{with $x_1,x_2\in\signalset$} in the nullspace of $A$.
It is well-known that this is only possible if the signal set $\signalset$ is low-dimensional. There are multiple ways to define the notion of dimensionality of a set in $\R{n}$. In this paper, we focus on the upper box-counting dimension~\citep[Chapter~2]{falconer2004fractal} which is defined for a compact subset $S\subset\R{n}$ as
\begin{equation}
   \bdim{S} = \lim \sup_{\epsilon\to0}  \frac{\log \mathds{N}(S,\epsilon)}{-\log \epsilon}
\end{equation}
where $\mathds{N}(S,\epsilon)$ is the minimum number of closed balls of radius $\epsilon$ with respect to  the norm $\|\cdot\|$ that are required to cover $S$. \JT{This definition of dimension is convenient for the theoretical results, and covers both well behaved models such as compact manifolds where the definition coincides with the more intuitive topological dimension, as well as more general sets}\footnote{\red{As discussed in~\Cref{subsec: low-dim models}, we can extend the box-counting dimension for non-compact conic sets.}}.  The \JT{sensing} mapping is one-to-one for almost every operator $A\in\R{m\times n}$ \JT{when restricted to $\signalset$} if~\citep{sauer1991embedology}
\begin{equation}\label{eq: one-to-one}
    m>\bdim{\Delta\signalset}
\end{equation}
where $\Delta\signalset$ denotes the  \red{difference set} defined as 
\begin{equation}
   \Delta\signalset = \{ \Delta x \in \R{n} | \; \Delta x =x_2 -x_1, \; 
   \; x_1,x_2\in\signalset, \; x_2\neq x_1 \}.
\end{equation}
The term \emph{almost every} means that the complement has Lebesgue measure 0 in the space of linear measurement operators $\R{m\times n}$. \JT{Almost every statements are used to exclude \emph{small} subsets of operators where one-to-oneness fails.} The difference set of models of dimension $k$  generally has dimension $2k$, requiring  $m>2k$ measurements to ensure signal recovery.
For example, the  union of $k$-dimensional subspaces requires at least $2k$ measurements to guarantee one-to-oneness~\citep{blumensath2009uos}. This includes well-known models such as $k$-sparse models  (e.g., convolutional sparse coding~\citep{bristow2013fast}) and co-sparse models (e.g., total variation~\citep{rudin1992nonlinear}).
In the regime $k<m\leq 2k$, the subset of signals where one-to-oneness fails is typically at most $(2k-m)$-dimensional~\citep{sauer1991embedology}. 
While the bound in~\Cref{eq: one-to-one} guarantees \emph{unique} signal recovery, more measurements  
are typically necessary in order to have a \emph{stable} inverse $f:y\mapsto x$, i.e., possessing a certain Lipschitz constant. A detailed discussion can be found for example in~\citep{puy2017recipes}.

\section{\JT{Model Identifiability with} Multiple Operators} \label{sec:multiple ops}

We first focus on the noiseless case to study the intrinsic identifiability problems associated to having only incomplete measurement data. The effect of noise will be discussed in~\Cref{sec: noise}. \JT{In this section, we assume that we observe the measurement sets $\yset_g=A_g\signalset$ defined as}
\begin{equation}
    \JT{\yset_g = \{ y\in\R{m} |  \;y=A_gx, \; x\in\signalset \}}
\end{equation}
\JT{for $g=1,\dots,\ntransf$. In practice, we will only observe a finite number of measurements $y_i$,} where the $i$th signal is associated to one of $\ntransf$ linear operators  with $m<n$, i.e.,
\begin{equation}
    y_i = A_{g_i} x_i
\end{equation}
where $g_i \in \{1,\dots,\ntransf\}$ and $i=1,\dots,N$. While we assume that the measurement operator $A_{g_i}$ is known for all observed signals, it is important to note that we do not know a priori if two observations $(y_i,A_{g_i})$ and $(y_{i'},A_{g_{i'}})$ are related to the same\footnote{\JT{The Noise2Noise approach~\citep{lehtinen2018noise2noise} does require such pairs of observations, which is impractical in many real-world settings. The theory presented here does not require such an assumption. }} \JT{underlying signal $x$}.  We assume that all operators have the same number of measurements $m$ throughout most of this section, however we also discuss the case of different number of measurements in~\Cref{subsec: different m}. 
We begin with a simple toy example illustrating this observation model:

\begin{example}[One-dimensional toy model]\label{ex: 2D toy}
Consider a one-dimensional linear subspace spanned by a triangular signal $\phi \in \R{n}$, i.e., $x_i= c_{i}\phi\in\signalset\subset\mathbb{R}^{n}$ with $c_{i}\in \R{}$, which is only partially observed through a simple masking operator, as illustrated in \Cref{fig:ex 3.1}. The single sensing operator does not provide information outside the observed part, thus there are infinitely many one-dimensional models that fit the observed measurements. 
However, if we also obtain measurements through multiple masks $A_{g_i}$, 
then identifying the model becomes feasible. In this case, solving both model identification and signal recovery problem boils down to low rank matrix completion of the measurement matrix $[y_1,\dots,y_N]$.
\end{example}

\begin{figure}[h]
     \centering
     \includegraphics[width=.5\textwidth]{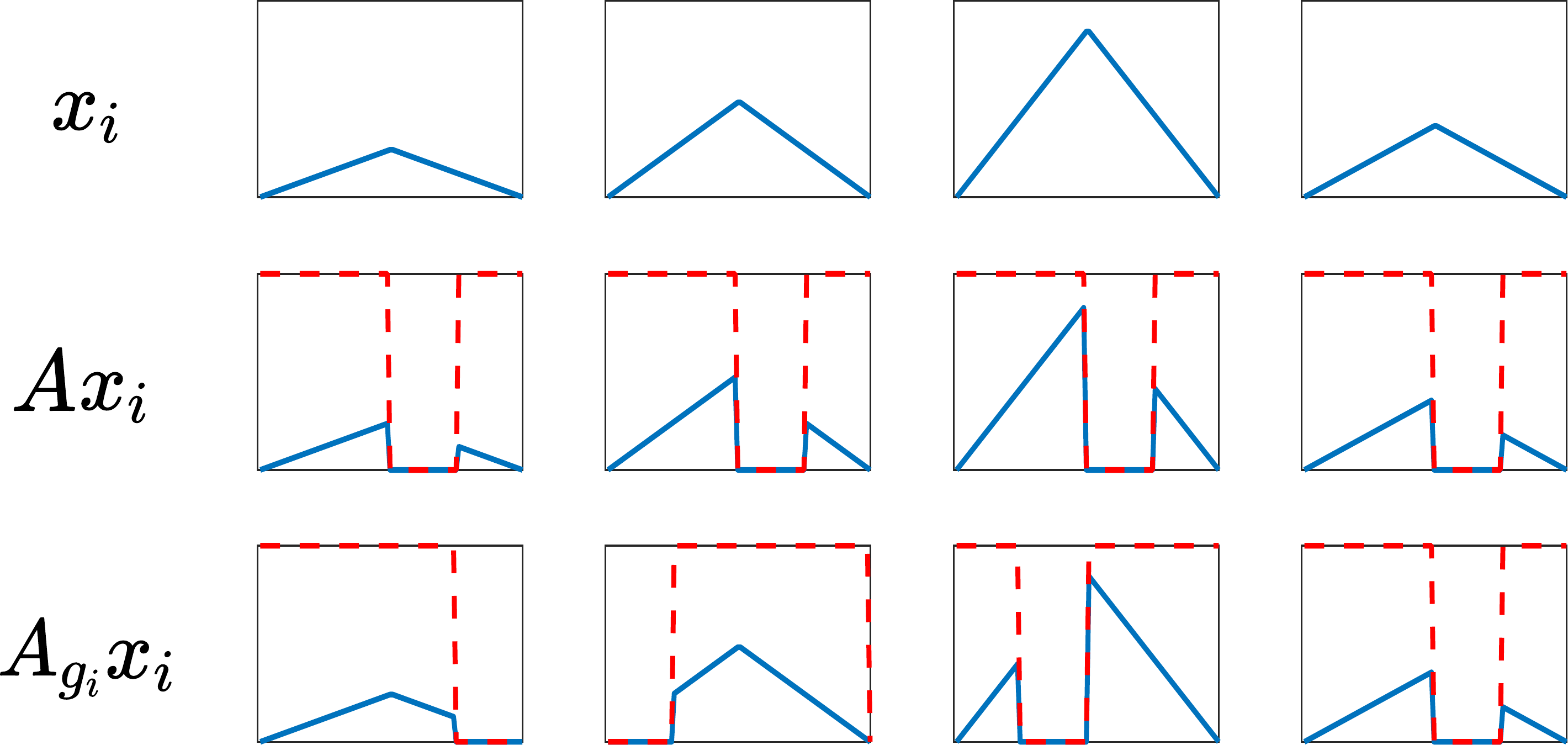}
\caption{Learning a one-dimensional model from incomplete measurement data. Each column shows a different sample from the signal set. The \JT{sensing} mask is shown in red.}
     \label{fig:ex 3.1}
\end{figure}

\subsection{Uniqueness of Any Model?} \label{subsec: highdim}
A natural first question when considering uniqueness of the signal model is: can we recover any \JT{distribution} $p(x)$ observed via a finite number of forward operators $A_g$, even in the case where the support $\signalset$ is the full ambient space $\mathbb{R}^{n}$? We show that, in general, the answer is no.

Uniqueness can be analyzed from the point of view of the characteristic function of $p(x)$, defined as $\varphi(w) = \mathbb{E}\{e^{\mathrm{i}w^{\top}x}\}$ where the expectation is taken with respect to $p(x)$ and $\mathrm{i}=\sqrt{-1}$ is the imaginary unit. If two distributions have the same characteristic function, then they are necessarily the same almost everywhere. Each forward operator provides information about a subspace of the characteristic function  as
\begin{align}
\label{eq: stat}
\mathbb{E}\{e^{\mathrm{i}w^{\top}A_g^{\dagger}y}\} 
&= \mathbb{E}\{e^{\mathrm{i}w^{\top}A_g^{\dagger}A_gx}\} \\
&= \mathbb{E}\{e^{\mathrm{i}(A_g^{\dagger}A_g w)^{\top}x}\} \\
&\JT{= \varphi(A_g^{\dagger}A_gw)}
\end{align}
\JT{where $A_g^{\dagger}A_g$ is a linear projection onto the subspace $\rangeAg$.}
Given that $m<n$, the characteristic function is only observed in the subspaces $\rangeAg$ for all $g\in \{1,\dots,\ntransf\}$. For any finite number of operators, the union of these subspaces does not cover the whole $\mathbb{R}^{n}$, and hence there is loss of information, as the signal model is not uniquely defined. 

In the case of an infinite number of operators, $\ntransf=\infty$, there is a well-known case where model uniqueness is possible:
the Cram\'er-Wold theorem guarantees uniqueness of the signal distribution if all possible one dimensional projections ($m=1$) are available~\citep{cramer1936some,bora2018ambientgan}. 
However, in most practical settings we can only access a finite set of  operators and many distributions are non-identifiable.

\subsection{Uniqueness of Low-Dimensional Models} \label{subsec: low-dim models}
Most models appearing in signal processing and machine learning are assumed to be approximately low-dimensional, with a dimension $k$ which is much lower than the ambient dimension $n$.
As discussed in~\Cref{subsec:CS preliminary}, the low-dimensional property is fundamental to obtain stable reconstructions in ill-posed problems with $m<n$. 
In the rest of paper, we impose the following assumption on the \JT{class of signal models}:
\begin{enumerate}
    \item[\textbf{A1}] The signal set $\signalset$ is either 
    \begin{enumerate}
        \item  A bounded set with box-counting dimension $k$.
        \item An unbounded conic set whose  intersection with the unit sphere has box-counting dimension $k-1$.
    \end{enumerate}
\end{enumerate}
This assumption has been widely adopted in the inverse problems literature, as it is necessary  to guarantee signal recovery. Our definition of dimension covers most models used in practice, such as \JT{single subspace models (principal component analysis),} union of subspaces (convolutional sparse coding models, $k$-sparse models), low-rank matrices and compact manifolds. \JT{It also covers signal sets that are well described by a deep generative model (e.g., a variational autoencoder or a generative adversarial network), whose  box-counting dimension is given by the dimension of the latent space. \Cref{fig: boxdim examples} shows two examples of common signal sets with small box-counting dimension. In real-world problems, we might not know the box-counting dimension of the dataset a priori. Nonetheless, there exist a range of algorithms that compute the box-counting dimension of real datasets~\citep{hein2005intrinsic} which can be applied on measurement data alone\footnote{\JT{If $A$ is one-to-one when restricted to $\signalset$ (which happens for almost every $A$ with $m>2k$ measurements, as explained in~\Cref{subsec:CS preliminary}), the box-counting dimension of the measurement data equals the box-counting dimension of the underlying signal set.}}.}

\begin{figure}[t]
     \centering
 \begin{subfigure}{0.3\textwidth}
     \centering
     \includegraphics[width=.65\textwidth]{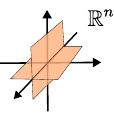}
     \caption{\JT{Union of subspaces}}
 \end{subfigure}
 \hspace{7mm}
 \begin{subfigure}{0.55\textwidth}
     \centering
     \includegraphics[width=\textwidth]{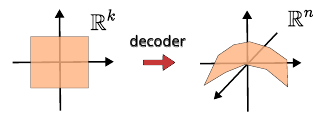}
     \caption{\JT{Generative model}}
 \end{subfigure}
\caption{\JT{Examples of low-dimensional signal sets. a) the intersection of a union of $k$-dimensional subspaces with the unit sphere has box-counting dimension $k-1$. b) the box-counting dimension of a generative model with a Lipschitz continuous decoder equals the dimension of the latent space. }}\label{fig: boxdim examples}
\end{figure}

\JT{In the rest of the paper, we focus on conditions for the identification of the support $\signalset$ instead of the signal distribution $p(x)$, due to the following observation:}
if there is a one-to-one reconstruction function \JT{$f:y\mapsto x$ (which holds for almost every operator with $m>2k$ measurements, as explained in~\Cref{subsec:CS preliminary})}, uniqueness of the support necessarily implies uniqueness of the distribution $p(x)$:
\begin{proposition}
\JT{Assume there is a one-to-one mapping between the measurement set $\yset$ and the support $\signalset$ of the signal distribution $p(x)$. Further assume that the support $\signalset$ is known, then the signal distribution $p(x)$ can be uniquely identified from the measurement distribution $p(y)$.}
\end{proposition}
\begin{proof}
If $\signalset$ is identifiable from the measurement set $\yset$ and there is a measurable one-to-one mapping from each observed signal $y_i$ to $\signalset$, then it is possible to obtain $p(x)$ as the push-forward of the measurement distribution $p(y)$.
\end{proof}

We begin with a simple example which provides important geometrical  intuition of how a low-dimensional model can be learned via multiple projections $A_g$:

\begin{example} \label{ex: 1d line}
Consider a toy signal model with support $\signalset\subset\mathbb{R}^3$ which consists of a one-dimensional linear subspace spanned by $\phi = [1,1,1]^{\top}$, and $\ntransf=3$ measurement operators $A_1,A_2,A_3\in \mathbb{R}^{2\times 3}$ which project the signals into the $x(3)=0$, $x(2)=0$ and $x(1) = 0$ planes respectively, where $x(i)$ denotes the $i$th entry of the vector $x$. 
The example is illustrated in \Cref{fig:toy_line}. The first operator $A_1$ imposes a constraint on $\signalset$, that is, every $x\in\signalset$ should verify $x(1)-x(2)=0$. Without more operators providing additional information about $\signalset$, this constraint yields a plane containing $\signalset$, and there are infinitely many one-dimensional models that would fit the measurement data perfectly. However, the additional operator $A_2$ adds the constraint $x(2)-x(3)=0$, which is sufficient to uniquely identify $\signalset$ as 
\begin{align*}
     \signalset = \hat{\signalset} := \{v \in \mathbb{R}^3 | \;  v(1) - v(2) = v(2) - v(3) = 0 \}
\end{align*}
is the desired 1-dimensional subspace.
Finally, note that in this case the operator $A_3$ does not restrict the signal set further, as the constraint $x(1)-x(3)=0$ is verified by the other two constraints.
\end{example}

\begin{figure}[t]
\centering
\includegraphics[width=1\textwidth]{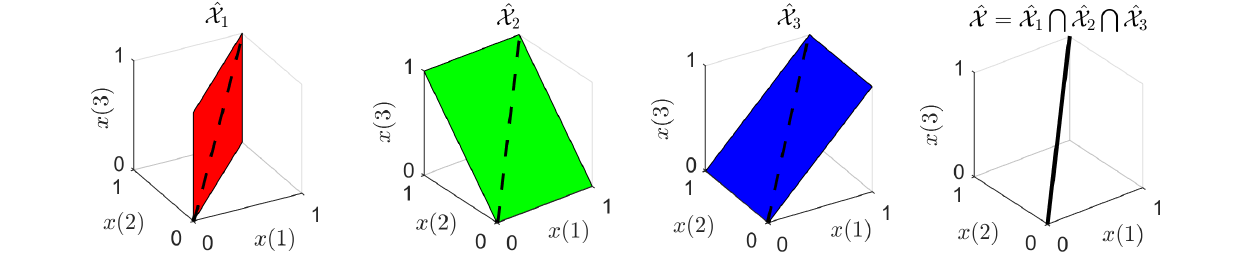}
\caption{Toy example of a 1-dimensional subspace embedded in $\R{3}$. If we only observe the projection of the signal set into the plane $x(3)=0$, then there are infinite possible lines that are consistent with the measurements. Adding the projection into the $x(1)=0$ plane, allows us to uniquely identify the signal model.}
\label{fig:toy_line}
\end{figure}

\subsection{Bounds for Independent Operators}
The ideas from~\Cref{ex: 1d line} can be generalized and formalized as follows: for each projection $A_g$, \JT{we can identify the signal set that is consistent with the observations through the measurement operator $A_g$, that is}
 \begin{equation} \label{eq: constraint Ag}
    \hat{\signalset}_g = \{v \in \mathbb{R}^{n} | \; v = \hat{x}_g + u, \; \hat{x}_g\in \signalset, \; u\in \nullAg \}
\end{equation}
which has dimension at most $n-(m-k)$. \JT{It can clearly be seen that $\signalset \subseteq \hat{\signalset}_g$ by considering the subset with $u = 0$.} The inferred signal set belongs to the intersection of these sets 
\begin{equation}
    \hat{\signalset}=\bigcap_{g\in\group} \hat{\signalset}_g 
\end{equation}
which can be expressed concisely as
\begin{equation} \label{eq:inferred set}
  \hat{\signalset} =  \{ v\in \mathbb{R}^{n} | \; A_1 (x_1 - v) = \dots = 
    A_{\ntransf} (x_{\ntransf} -v) = 0, \; x_1,\dots,x_{\ntransf}\in \signalset \}
\end{equation}
Even though we have derived the set $\hat{\signalset}$ from a purely geometrical argument, the constraints in \Cref{eq:inferred set} also offer a simple algebraic intuition: the inferred signal set consists of the points $v\in \R{n}$ which verify
the following system of equations 
\begin{equation} \label{eq: algo_x}
    \begin{bmatrix}
     A_1 \\ 
     \vdots \\
     A_{\ntransf} 
    \end{bmatrix} v =   \begin{bmatrix}
     A_1x_1 \\ 
     \vdots \\
     A_{\ntransf} x_{\ntransf}
    \end{bmatrix}.
\end{equation}
for all possible choices of $\ntransf$ points $x_1,\dots,x_{\ntransf}$ in $\signalset$.
In other words, given a dataset of $N$ incomplete samples $\{A_{g_i}x_i\}_{i=1}^{N}$, it is possible to build $\hat{\signalset}$ by trying all the possible combinations of $\ntransf$ samples\footnote{Despite providing a good intuition, this procedure for estimating $\signalset$ is far from being practical as it would require an infinite number of observed samples if the dimension of the signal set is not trivial $k>0$.} and keeping only the points $v$ which are the solutions of~\Cref{eq: algo_x}.

It is trivial to see that $\signalset\subseteq \hat{\signalset}$, but when can we guarantee $\signalset=\hat{\signalset}$? 
As in the previous toy example, if there are not enough constraints, e.g., if we have a single $A$ and no additional transformations, the inferred set will have a dimension larger than $k$, containing undesired aliases. In particular, we have the following \JT{necessary condition on the operators}:

\begin{proposition}[Theorem 1 in~\citep{chen2021equivariant}] \label{prop: necessary multA}
\JT{Let the operators $A_1,\dots,A_{\ntransf}\in\R{m\times n}$. The signal set $\signalset$ can be uniquely identified from the sets $\{A_g\signalset\}_{g=1}^{\ntransf}$ \JT{only} if \begin{equation}\label{eq: rank condition}
    \rk{\begin{bmatrix}
    A_{1} \\
    \vdots \\
    A_{\ntransf}
    \end{bmatrix}} = n
\end{equation} and thus \JT{only} if $m\geq n/\ntransf$.}
\end{proposition}
\begin{proof}
In order to have model uniqueness, the system in~\cref{eq: algo_x} should only admit a solution  if $v=x_1=\dots=x_{\ntransf}$. \JT{If the rank condition in~\cref{eq: rank condition} is not satisfied, there is more than one solution for any choice of $x_1,\dots,x_{\ntransf}\in\signalset$.} 
\end{proof}

Note that this necessary condition does not take into account the dimension of the model. As discussed in~\Cref{subsec: highdim}, a sufficient condition for model uniqueness must depend on the dimension of the signal set $k$. The next theorem shows that $k$ additional measurements per operator are sufficient for model identification:

\begin{theorem} \label{theo: multiple op}
\JT{Assume that the signal set $\signalset$ verifies assumption \textup{\textbf{A1}}}. For almost every set of $\ntransf$ operators $A_1,\dots,A_{\ntransf}\in \R{m\times n}$ \JT{w.r.t.\ the Lebesgue measure on $\mathbb{R}^{\ntransf n m}$,} the signal model $\signalset$ can be uniquely identified \JT{from the measurement sets $\{A_g\signalset\}_{g=1}^{\ntransf}$} if the number of measurements per operator  verifies $m> k + n/\ntransf$.
\end{theorem}

The proof is included in~\Cref{app: multop proof}. \JT{It is worth noting that almost every set of $\ntransf$ operators verifies the necessary condition in~\Cref{prop: necessary multA}}. 
If we have a large number of independent operators $\ntransf\geq n$, \Cref{theo: multiple op} states that only $m>k+1$ measurements are sufficient for model identification, which is slightly smaller (if the model is not trivial, i.e., $k>1$) than the number of measurements typically needed for signal recovery $m>2k$. In this case, it is possible to uniquely identify the model, without necessarily having a perfect reconstruction of each observed signal. 
However, as discussed in~\Cref{subsec:CS preliminary}, for $k<m\leq 2k$, the subset of signals which cannot be uniquely recovered is for almost all $A_g$ at most $(2k-m)$-dimensional.



\paragraph{Operators with Different Number of Measurements} \label{subsec: different m}
The results of the previous subsections can be easily extended to the setting where each measurement operator has a different number of measurements, i.e., $A_1\in \R{m_1 \times n},\dots, A_{\ntransf}\in \R{m_{\ntransf} \times n}$. In this case, the necessary condition in Proposition~\ref{prop: necessary multA} is $\sum_{g=1}^{\ntransf} m_g \geq n$, and the sufficient condition in~\Cref{theo: multiple op} is $\frac{1}{\ntransf}\sum_{g=1}^{\ntransf} m_g > k + n/\ntransf$.
As the proofs mirror the ones of Proposition~\ref{prop: necessary multA} and  \Cref{theo: multiple op}, we leave the details to the reader.

\section{Learning with a Single Operator and Group Symmetry} \label{sec: group learning}

We begin with some notation on groups and their associated linear representations.
A group $\group$ with multiplication denoted by $\cdot$ is a set of $|\group|$ elements $g$, such that the product of two elements is equal to another element in $\group$ ($g_1\cdot g_2 \in \group$), $\group$ contains the identity element $1_\group$ and the inverse of every element is contained in $\group$, i.e., $g^{-1}\in\group$ and $g\cdot g^{-1}=1_\group$.  The action of a group in a vector space $V$ (e.g., $\R{n}$ or $\C{n}$) can be represented by invertible transformations $T_g$ acting on $V$, where the $\cdot$ operation is just matrix multiplication, and the transformations are compatible with the group multiplication, i.e., $T_g T_{g'}$ = $T_{g\cdot g'}$. 

\JT{In this section, we assume that we observe the measurement set $\yset=A\signalset$ defined as}
\begin{equation}
    \JT{\yset = \{ y\in\R{m} |  \;y=Ax, \; x\in\signalset \}}.
\end{equation}
and that the signal set $\signalset$ is invariant to the action of a group $\group$, that is $T_g\signalset = \signalset$ for all transformations $T_1,\dots, T_{\ntransf} \in \R{n\times n}$.
As with \Cref{sec:multiple ops}, in practice we will only observe a set of $N$ training measurement data $y_i$ via a single operator, i.e.,
\begin{equation}
    y_i = A x_i
\end{equation}
for $i=1,\dots,N$. We do not assume that the distribution $p(x)$ is also $\group$-invariant, which is a more stringent condition, \red{although this setting is still covered here as a special case, since any $\group$-invariant $p(x)$ has a $\group$-invariant support.}
As with the multiple operator case, we first focus on the noiseless case, leaving the discussion about the effect of noise for~\Cref{sec: noise}.  

The invariance property provides us with a simple but powerful way of learning the model with incomplete measurements~\citep{chen2021equivariant}:
for all group elements the following holds
\begin{align} \label{eq: G operators}
    y_i &= Ax_i \\
      &= AT_g T_g^{-1}x \\
      &= A_g x_{i'} \label{eq:virtual}
\end{align}
where both $x_i$ and $x_{i'}$ belong to the signal set due to the invariance property. Equation~\Cref{eq:virtual} tells us that we can implicitly access $\ntransf$ different measurement operators $A_g=AT_g$, each one associated with a nullspace $\nullAg$ which might differ from $\nullA$. Thus, we have a similar setting
as in~\Cref{sec:multiple ops}, with the additional constraint that the operators $A_g$ are related by some transform $A_g=AT_g$, where the transforms $\{T_g\}_{g\in\group}$ can be seen as the action of some group $\group$.
As the virtual forward operators $A_g$ are related by the group action, we cannot directly apply the results in~\Cref{sec:multiple ops}, which assume that the operators are independent, and we require further analysis. 
The following toy example provides intuition of how the invariance property can help in learning the model:

\begin{example}[Shift invariant toy model]\label{ex: 2D toyb}
Consider again the one-dimensional subspace model presented in \Cref{ex: 2D toy}. We now 
assume that the model is shift invariant, including all shifts of every signal, but we only partially observe signals through a fixed masking operator. While the original non-invariant set $\baseset$ is a one-dimensional linear subspace, the full set $\signalset$ is now a union of linear one-dimensional subspaces $\bigcup_{g\in\group}T_g\baseset$. While $\signalset$ is much larger than $\baseset$, the intrinsic degrees of freedom are the same, i.e., the triangular wave $\phi$. Despite having a single mask, we can access to multiple virtual operators $AT_g$ via \Cref{eq:virtual}, as illustrated in~\Cref{fig:toy invariant}. Solving both model identification and signal recovery problem reduces to high rank (union of subspaces) matrix completion~\citep{eriksson2012high} of the measurement matrix $[y_1,\dots,y_N]$.
\end{example}
\begin{figure}[h]
     \centering
     \includegraphics[width=.5\textwidth]{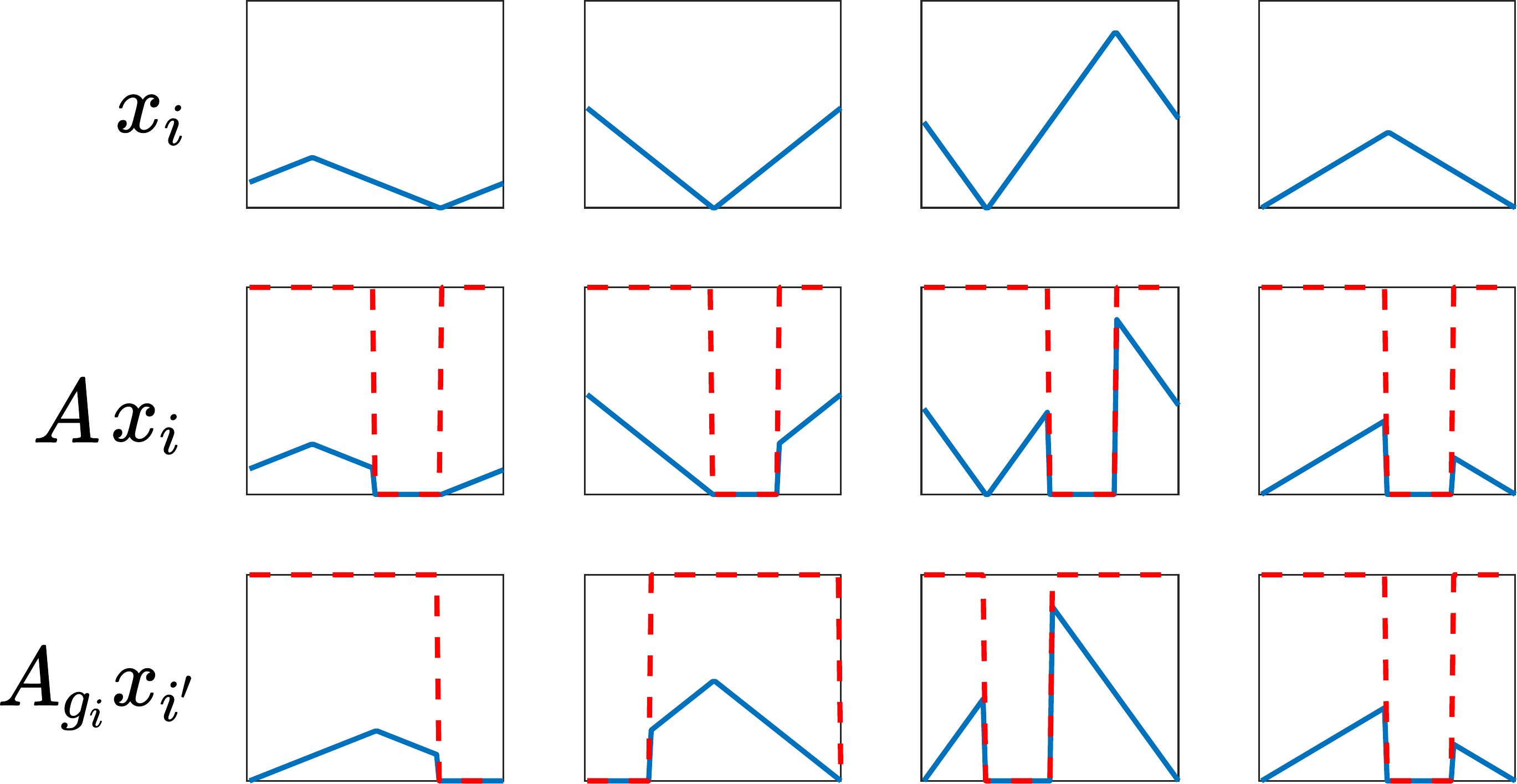}
\caption{Learning a one-dimensional shift invariant model from masked \JT{observations}. Each column shows a different \JT{observation} from the signal set. The \JT{sensing} mask is shown in red. Following~\cref{eq:virtual}, the samples can be reinterpreted as  having measured another valid signal $x_{i'}$ through a different operator $A_{g_i}$.}\label{fig:toy invariant}
\end{figure}

Before delving into the main results of this section, we briefly introduce some fundamental concepts of linear representation theory and discuss some common group actions.

\subsection{Linear Representation Theory Preliminaries} \label{subsec: group theory and examples}

A classical result in linear representation theory states that $\C{n}$ admits a unique decomposition into invariant\footnote{An invariant subspace is a subspace of $\C{n}$ which is invariant to all transformations $\{T_g\}_{g\in\group}$.} subspaces~\citep{serre1977linear}:

\begin{theorem}[Canonical decomposition of a representation, adapted from~\citep{serre1977linear}] \label{theo: irreps}
Let $\group$ be a \red{compact} group acting on $\C{n}$, whose action is represented by the invertible linear mappings $T_g$. There is a unique canonical decomposition of $\C{n}$ into $J$ invariant subspaces as
\begin{equation} \label{eq: canonical decomp}
    \C{n} = V_1 \oplus \dots \oplus V_J 
\end{equation}
where each subspace has dimension $s_jc_j$ with $s_j$ the dimension of the $j$th irreducible representation and $c_j$ its multiplicity. Moreover, any linear representation admits a block-diagonal form \begin{equation}
    T_g = F \Sigma_g F^{-1}
\end{equation}
where $F$ is a basis for~\Cref{eq: canonical decomp} and
\begin{equation}
    \Sigma_g = \left[ \begin{array}{ccc}
\Lambda_1(g) &  &   \\
 & \ddots &   \\
 &   & \Lambda_J(g) \\
\end{array} \right]
\; \text{with} \; 
   \Lambda_j(g) = \left[ \begin{array}{ccc}
\rho_j(g) &  &   \\
 & \ddots &   \\
 &   & \rho_j(g) \\
\end{array} \right] \in \mathbb{C}^{c_js_j\times c_js_j} \; \text{and} \; \rho_j(g)\in \mathbb{C}^{s_j\times s_j}
\end{equation}
The matrices $\rho_j(g)$ are unique (independent of the basis $F$) and correspond to the $j$th irreducible representation of $\group$.
\end{theorem}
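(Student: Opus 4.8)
The plan is to follow the classical route of Serre, replacing finite averages over a finite group by integration against the normalized Haar measure $\mu$ of the compact group $\group$ (assuming implicitly that $g\mapsto T_g$ is continuous, so these integrals are well defined; for a finite group this is just the uniform average and one recovers Serre's argument verbatim). First I would \emph{unitarize} the action: the Hermitian form $\langle u,v\rangle_{\group} = \int_{\group}\langle T_g u,T_g v\rangle\, d\mu(g)$ is $\group$-invariant and positive definite, so with $P$ its Gram matrix we have $T_g^{*}PT_g=P$ for all $g$, and conjugating by $P^{1/2}$ we may assume every $T_g$ is unitary. Complete reducibility is then immediate: if $W\subseteq\C{n}$ is $\group$-invariant, so is its orthogonal complement $W^{\perp}$ (unitaries preserve orthogonality), and iterating on dimension gives $\C{n}=W_1\oplus\cdots\oplus W_r$ with each $W_i$ an irreducible invariant subspace; only finitely many isomorphism classes of irreducibles can occur since $n<\infty$.

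The heart of the argument, and the step I expect to be the main obstacle, is to pass from this (non-unique) decomposition into irreducibles to the \emph{canonical} one and prove its uniqueness. Fixing a list $\rho_1,\dots,\rho_J$ of the distinct irreducible representations of $\group$ that occur, with $s_j=\dim\rho_j$, I would let $V_j$ be the sum of all the $W_i$ that are $\group$-isomorphic to $\rho_j$, so that $\C{n}=V_1\oplus\cdots\oplus V_J$ with $\dim V_j = s_jc_j$ for an integer $c_j\ge 1$. To see that $V_j$ does not depend on the chosen splitting or on any basis, I would exhibit it as the image of the operator $\pi_j = s_j\int_{\group}\overline{\chi_j(g)}\,T_g\, d\mu(g)$, where $\chi_j=\operatorname{tr}\rho_j$; Schur's lemma together with the Peter--Weyl orthogonality relations for the matrix coefficients of $\group$ (the one genuinely nontrivial input) show that $\pi_j$ is the $\group$-equivariant projector onto $V_j$ along $\bigoplus_{i\ne j}V_i$. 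Since $\pi_j$ is built only from $T$, the subspace $V_j$, its dimension $s_jc_j$, hence $c_j$ and $J$, are all intrinsic to the representation.

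Finally, for the block-diagonal form: by construction $V_j$ is a direct sum of $c_j$ copies of the $s_j$-dimensional irreducible $\rho_j$, so choosing for each copy a basis in which the action is exactly $\rho_j$ and concatenating over the $c_j$ copies and then over $j=1,\dots,J$ produces a change-of-basis matrix $F$ with $T_g=F\Sigma_g F^{-1}$, where $\Sigma_g$ is block diagonal and its $j$-th block is $\Lambda_j(g)=\operatorname{diag}(\rho_j(g),\dots,\rho_j(g))$ with $c_j$ copies, as claimed. Uniqueness of the $\rho_j(g)$ then amounts to the observation that, once the list $\rho_1,\dots,\rho_J$ is fixed, any two admissible choices of $F$ differ only by a conjugation that is block diagonal inside each $V_j$ and commutes with $\Sigma_g$; by Schur's lemma such a conjugation acts on the $c_j$ copies of $\rho_j$ by a $\mathrm{GL}_{c_j}(\mathbb{C})$ factor tensored with the identity on $\C{s_j}$, leaving each block $\rho_j(g)$ unchanged.
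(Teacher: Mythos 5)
The paper does not actually prove this theorem: it is imported (with citation) from Serre, and the argument you reconstruct --- unitarization via a Haar-averaged inner product, complete reducibility through orthogonal complements, the isotypic projectors $\pi_j = s_j\int_{\group}\overline{\chi_j(g)}\,T_g\,d\mu(g)$ justified by Schur's lemma and the orthogonality relations to make the decomposition canonical, and Schur's lemma again for the uniqueness of the blocks up to the $\mathrm{GL}_{c_j}$-commutant --- is precisely the standard proof in the cited source (Chapters 1--2 for finite groups, Chapter 4 for the compact case). Your proposal is correct and coincides with that classical route, so there is nothing substantive to compare; the only tacit hypothesis, continuity of $g\mapsto T_g$ so that the Haar integrals exist, is equally implicit in the paper's statement.
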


In words, a group is linked to a unique set of $J$ irreducible representations of dimension $s_1,\dots,s_J$, which serve as fundamental building blocks of any linear group action. A linear group action on $\C{n}$ consists of a specific basis $F$ and a set of multiplicities $c_j$  such that $\sum_{j=1}^{J} s_jc_j = n$. The vector space on which the group acts admits a decomposition into $J$ invariant subspaces of dimension $s_1c_1,\dots,s_Jc_J$, each associated with a different irreducible representation.
To build some intuition about~\Cref{theo: irreps}, we discuss its  implications for common group actions such as shifts, reflection, rotations and permutations on 1D (e.g., audio) or  2D discretely sampled signals (e.g., images):

\subsubsection{Finite Groups} \label{subsubsec: finite groups}

\paragraph{Shifts} The cyclic group of $n$ elements associated with circulant matrices $T_{g_i}$ which shift the signal by $i$ taps/pixels, with $i\in\{0,\dots,n-1\}$. In this case, we have $\ntransf=J=n$ and $c_1=\dots=c_J=1$ and $s_j=1$, where the subspaces are the Fourier modes. 
Here, $F$ is the discrete Fourier transform and $\Lambda$ is a diagonal matrix containing the Fourier transform of the discrete shift operation.

\paragraph{Reflection} The cyclic group of 2 elements, the identity and a reflection of a 1D signal about its central entry ($\ntransf=2$).  For even\footnote{The extension to odd $n$ is trivial.} $n$, we have $J=2$, $s_1=1$ with $c_1=n/2$ composed of vectors 
$[v_1,\dots,v_{n/2},v_{n/2},\dots,v_1]^{\top}$ with $v\in\C{n/2}$, and $s_2=1$ with $c_2=n/2$, composed of vectors $[v_1,\dots,v_{n/2},-v_{n/2},\dots,-v_1]^{\top}$. The matrix $F$ can be built as 
\begin{equation}
    \frac{1}{\sqrt{2}}\left[ \begin{array}{cc}
    \tilde{F} &  \tilde{F} \\
     T_1\tilde{F} &  -T_1 \tilde{F} \\
    \end{array} \right]
\end{equation}
where $T_1\in \R{n/2\times n/2}$ is a
reflection matrix with entries $[T_1]_{i,j}=1$ if $i+j=n/2+1$ and $0$ otherwise, and $\tilde{F}\in \mathbb{C}^{n/2\times n/2}$ is any unitary matrix.

\paragraph{Rotations} To simplify the exposition, here we  consider a 2D circular \JT{sensing} pattern with $n_1$ pixels of diameter and angle sampled every $360/n_2$ degrees (hence $n=n_1n_2$). The cyclic group $\group$ is then represented as the set of all $\ntransf=n_2$ possible rotations.  In this case, we have $J=n_2$, $s_1=\dots=s_J=1$ and $c_1=\dots=c_J=n_1$.  The matrix $F$ is given by the Kronecker product between a 1D discrete Fourier transform along the angular dimension $F_1\in \mathbb{C}^{n_2\times n_2}$ and any unitary matrix along the radial dimension $F_2\in \mathbb{C}^{n_1\times n_1}$, i.e.,  $F_1\otimes F_2$.

\paragraph{Permutations} Full set of permutations of the $n$ entries of the signal with $\ntransf=n!$ elements. This group can model permutations of a masking operator, exchangeable distributions~\citep{eaton1989group} or \red{patches in an image}. \JT{Commonly-used separable distributions~\citep{guo2015near} are also permutation invariant.}. This is a significantly larger group (it contains all the rest as subgroups), where $J=2$, $s_1=c_1=1$ for the subspace spanned by $[1,\dots,1]^{\top}$ and its complement $s_2=n-1$ with $c_2=1$. Here, $F$ is given by any unitary matrix whose first column is the constant vector $[\frac{1}{\sqrt{n}},\dots,\frac{1}{\sqrt{n}}]^{\top}$.

\subsubsection{Infinite Groups} \label{subsubsec: infinite groups}

\red{ Natural signals are often described as continuous functions $x:S\mapsto \R{p}$, where $S$ is a compact domain and $\R{p}$ is the range. For example, an RGB image can be defined using $S=(0,1]^2$ and $p=3$. In this description, the ambient space is generally an infinite-dimensional Hilbert space $\Hilb$  and  group actions are generally continuous, such as translations or arbitrary rotations. }

\red{Formally, the linear representation of a group acting on $\Hilb$ is given by a mapping from $\group$ to the space of unitary linear mappings $T_g:\Hilb \mapsto\Hilb$.
The decomposition into invariant subspaces of~\Cref{theo: irreps} also holds in this setting, where the number of irreducible representations with non-zero multiplicity $J$ can be infinite~\citep[Chapter~4]{serre1977linear}.}

\red{In practice, we generally assume that the signals of interest are bandlimited and, according to Nyquist theorem, can be represented on a discrete grid associated with $\R{n}$. 
Restricted to the bandlimited subspace, the group action\footnote{\red{If an antialising filter is used, the discretization process is an equivariant mapping, i.e., there is a valid representation of $\group$ acting on $\R{n}$}.} has a finite number of non-zero multiplicities $c_j$ and \Cref{theo: irreps} holds despite the infinite number of group elements ($\ntransf=\infty$). 
As Nyquist theorem guarantees a one-to-one mapping between the \JT{signal model in $\R{n}$} and the signal model defined in $\Hilb$, we only need to prove model identifiability using signals and (possibly continuous) group actions on $\R{n}$ in order to prove model identifiability in the continuous case. We illustrate these concepts with two simple examples:}

\paragraph{Translations}
\red{
Consider the space $L_2(0,1]$ of square integrable signals taking values in the $(0,1]$ interval where the group of translations is represented via the left action
\begin{equation}
    T_gx = x(t-g)
\end{equation}
with $g\in(0,1]$, where $t-g$ is modulo 1. This representation has $s_j=c_j=1$ for all $j\in \mathbb{Z}$, where each invariant subspace corresponds to the complex exponential $e^{-\mathrm{i}2\pi jt}$. Assuming that the signals of interest have a maximum frequency $J_{\max}\in \mathbb{N}$, we can represent them on a grid as a vector $\tilde{x}\in\R{n}$, $n=2J_{\max}+1$, with entries given by
\begin{equation}
    \tilde{x}(r)= \int_0^1 x(t) k(t-\frac{r}{n}) dt 
\end{equation}
for $r=1,\dots,n$ where $k(t)=\frac{\sin(\pi tn)}{\sin (\pi t)}$ is the periodic sinc kernel (the ideal\footnote{In practice, images are usually sampled on a grid and non ideal anti-aliasing filters are used hence there is only approximate equivariance.} anti-aliasing filter). On $\R{n}$, the group action is given by diagonal matrices
\begin{equation}
    \tilde{T}_g = F\begin{bmatrix}
    e^{-\mathrm{i}2\pi j_1 g} & & \\
    & \ddots & \\
    & & e^{-\mathrm{i}2\pi j_n g}
    \end{bmatrix}F^{-1}
\end{equation}
where $j_1<\dots<j_n$ are the integers in $[-J_{\max},J_{\max}]$, $F$ is the discrete Fourier transform  and $g\in(0,1]$. This representation has $s_j=c_j=1$ for all $j=1,\dots,n$. It is worth noting that, restricted to a subgroup of $n$ equispaced elements, this group action is equivalent to the discrete shifts example in~\Cref{subsubsec: finite groups}.
}

\paragraph{Continuous Rotations}
\red{Consider the set of continuous signals whose domain $S$ is the unit circle and the range is $\R{}$. The signals can be written as $x(r,\theta)$ with radius $r\in (0,1]$ and angle $\theta\in (0,2\pi]$. The left action of  the group of rotations is given by
\begin{equation}
    T_gx = x(r,\theta-g)
\end{equation}
with $g\in(0,2\pi]$, where $\theta-g$ is modulo $2\pi$. Assuming that the signals of interest have a maximum frequency $J_{\max_1}$, $n_1=2J_{\max_1}+1$, on the radial axis and a maximum frequency $J_{\max_2}$, $n_2=2J_{\max_2}+1$, on the angular axis, we can represent them as vectors $\tilde{x}\in\R{n_1 n_2}$ on a discrete grid by  the convolution with the periodic sinc kernel every $\Delta r=1/n_1$ and $\Delta\theta=2\pi/n_2$. The action of rotations in $\R{n_1n_2}$ has $s_1=\dots=s_J=1$ and $c_1=\dots=c_J=n_1$. Restricted to the subgroup of rotations by $360/n_2$ degrees, the representation  is equivalent to the discrete rotations example in~\Cref{subsubsec: finite groups}.
}

\subsection{Uniqueness of Any $\group$-Invariant Model?} \label{subsec: high dim G models}


As with the case of multiple independent operators, we first ask 
the question of whether it is possible to uniquely identify any $\group$-invariant model from measurements of a single operator with $m<n$, regardless of the dimension of its support $\signalset$. We show that even under the assumption that $p(x)$ is $\group$-invariant, uniqueness is not possible if the support has dimension close to the ambient dimension.
If $p(x)$ is $\group$-invariant, all points in the same orbit,  $\mathcal{O}_x$, defined as
\begin{equation}
    \mathcal{O}_x = \{T_gx | \;x\in \C{n}, \; \forall g\in\group \}
\end{equation}
will necessarily have the same probability. Hence, the distribution is uniquely identified by the distribution over the orbits $\mathcal{O}_x$, instead of the whole space $\C{n}$. This notion is made precise by a fundamental theorem of invariant statistics:

\begin{theorem}[Decomposable measures, adapted from~\citep{eaton1989group}] A $\group$-invariant distribution on $\C{n}$ can be decomposed into a uniform distribution over the elements in $\group$ and a distribution over the quotient space $\C{n}/\group$.
\end{theorem}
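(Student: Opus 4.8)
The plan is to prove the statement by disintegrating the invariant measure over the orbit space and then identifying the conditional measures as uniform. Write $\mu$ for the $\group$-invariant probability distribution on $\C{n}$ and let $\pi:\C{n}\to\C{n}/\group$ be the canonical projection sending a point to its orbit $\mathcal{O}_x$. The orbits are exactly the fibers of $\pi$, and since $\group$ is compact each orbit is a continuous image of $\group$, hence compact and in particular closed; this makes $\C{n}/\group$ a well-behaved (metrizable) quotient and $\pi$ a Borel map. Set $\nu:=\pi_*\mu$, the push-forward onto the quotient — this will be the ``distribution over $\C{n}/\group$'' in the statement.

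First I would invoke the disintegration (Rokhlin) theorem for Borel probability measures on a Polish space: there is a $\nu$-almost-everywhere-unique family $\{\mu_{\mathcal{O}}\}$ of probability measures with $\mu_{\mathcal{O}}$ supported on the fiber $\pi^{-1}(\mathcal{O})=\mathcal{O}$ and $\mu=\int_{\C{n}/\group}\mu_{\mathcal{O}}\,d\nu(\mathcal{O})$. Next I would pin down $\mu_{\mathcal{O}}$ by averaging. Put $\tilde\mu_{\mathcal{O}}:=\int_{\group}T_{g*}\mu_{\mathcal{O}}\,d\lambda_\group(g)$, where $\lambda_\group$ is normalized Haar measure on $\group$. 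Since $T_g\mathcal{O}=\mathcal{O}$, the measure $\tilde\mu_{\mathcal{O}}$ is supported on $\mathcal{O}$ and invariant under the transitive $\group$-action on $\mathcal{O}$, hence it equals the unique such measure, namely the uniform (Haar-pushforward) measure $\omega_{\mathcal{O}}$ on the orbit. On the other hand, using $T_{g*}\mu=\mu$, $\pi\circ T_g=\pi$, and Fubini, one gets $\int_{\C{n}/\group}\tilde\mu_{\mathcal{O}}\,d\nu(\mathcal{O})=\int_{\group}T_{g*}\mu\,d\lambda_\group(g)=\mu$, so by uniqueness of the disintegration $\mu_{\mathcal{O}}=\tilde\mu_{\mathcal{O}}=\omega_{\mathcal{O}}$ for $\nu$-a.e.\ $\mathcal{O}$. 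This already yields $\mu=\int_{\C{n}/\group}\omega_{\mathcal{O}}\,d\nu(\mathcal{O})$, i.e.\ the decomposition into a uniform part on each orbit and a part on the quotient.

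To present it in the explicit product form — $\mu$ as the push-forward of $\lambda_\group\otimes\nu$ — I would fix a Borel cross-section $\sigma:\C{n}/\group\to\C{n}$ with $\pi\circ\sigma=\mathrm{id}$ (an orbit representative for each orbit) and consider $\Phi(g,\mathcal{O})=T_g\,\sigma(\mathcal{O})$. Then $\Phi_*(\lambda_\group\otimes\nu)=\mu$: conditionally on $\mathcal{O}$, the law of $T_g\sigma(\mathcal{O})$ with $g\sim\lambda_\group$ is, by the orbit–stabilizer correspondence, precisely $\omega_{\mathcal{O}}$ irrespective of the stabilizer of $\sigma(\mathcal{O})$, which matches the disintegration above; integrating over $\mathcal{O}\sim\nu$ closes the argument.

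The step I expect to be the main obstacle is the measure-theoretic plumbing around the quotient: ensuring $\C{n}/\group$ carries a standard Borel structure, that the disintegration and the map $\mathcal{O}\mapsto\omega_{\mathcal{O}}$ are measurable, and above all the existence of the Borel cross-section $\sigma$. For finite $\group$ all of this is elementary, and for a compact group acting linearly on $\C{n}$ the orbits are compact (hence closed) and the action is tame, so a Borel transversal exists by standard measurable-selection results; still, this is the technical heart and is exactly what the invariant-statistics references (e.g.\ Eaton) verify in detail.
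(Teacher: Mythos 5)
The paper does not prove this theorem; it is stated as an imported result with a citation to Eaton's monograph on group invariance in statistics. Your argument --- disintegrating $\mu$ over the orbit space, using Haar averaging plus uniqueness of the invariant probability measure on each compact orbit to identify the conditional measures as uniform, and then a Borel cross-section to express $\mu$ as the push-forward of $\lambda_\group\otimes\nu$ --- is correct and is essentially the standard proof found in the cited reference, with the genuinely delicate points (standard Borel structure on $\C{n}/\group$, measurability of the disintegration, existence of the transversal) correctly identified and correctly dispatched for a compact group acting linearly, since the orbits are closed and the action is tame.
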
 

The theorem tells us that we can potentially design a measurement operator with $m<n$ which is one-to-one with respect to the quotient space (having model uniqueness) but not necessarily one-to-one with respect to the signal space (failing to have signal recovery). For infinite compact groups, the quotient space might be significantly smaller than the full signal space.  For example, consider distributions which are invariant to the action of the orthogonal group $O_n$ composed of all orthogonal matrices. These are necessarily isotropic distributions only dependent on the radius  $\|x\|$. In this case, the full space is $n$-dimensional, but the quotient space is only one-dimensional (the sufficient statistics are functions of $\|x\|$), and (non-linear) one-dimensional measurements $A(x) = \|x\|$ are sufficient to identify any $\group$-invariant distribution. 
On the other hand, the orbits of finite groups\footnote{A similar statement holds for smaller infinite and compact groups, such as continuous translations of 1D or 2D signals.}, which are the main focus of this paper, are sets consisting of a finite number of signals, and the quotient space is intrinsically $n$-dimensional:

\begin{theorem} [Dimension of quotient space~\citep{sturmfels2008algorithms}]
The quotient space $\mathbb{C}^n/ \group$ of any finite group acting on $\mathbb{C}^n$ is $n$-dimensional.
\end{theorem}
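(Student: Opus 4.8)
The plan is to show that the quotient $\mathbb{C}^n/\group$, equipped with its natural orbit metric, has the same box-counting dimension as the ambient space $\mathbb{C}^n$, by combining monotonicity of box dimension under $1$-Lipschitz maps with the fact that the quotient map is a local isometry near a point with trivial stabilizer. First I would reduce to the case where $\group$ acts faithfully (replace $\group$ by its image in $GL_n(\mathbb{C})$, which leaves the orbits unchanged) and by unitary matrices: since $\group$ is finite, replacing the Hermitian inner product by its $\group$-average $\langle u,v\rangle_{\group}=\frac{1}{\ntransf}\sum_{g\in\group}\langle T_gu,T_gv\rangle$ makes every $T_g$ unitary, and since all norms on $\mathbb{C}^n$ are bi-Lipschitz equivalent this does not affect box dimensions. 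With the action unitary, $d(\mathcal{O}_x,\mathcal{O}_y):=\min_{g\in\group}\|x-gy\|$ is a well-defined metric on $\mathbb{C}^n/\group$, the norm $\|x\|$ is $\group$-invariant, and hence the quotient is an unbounded conic set whose intersection with the unit sphere is the compact set $\pi(\Sp{2n-1})$, where $\pi$ denotes the quotient map; we apply the box-dimension-of-a-cone convention from Assumption~A1(b) to this section.

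For the upper bound I would observe that $d(\mathcal{O}_x,\mathcal{O}_y)\le\|x-y\|$ (take $g=1_\group$), so $\pi$ is $1$-Lipschitz; a $1$-Lipschitz map cannot increase the covering number $N(\cdot,\epsilon)$ at any scale, hence cannot increase the box-counting dimension of any set, giving $\bdim{\pi(\Sp{2n-1})}\le\bdim{\Sp{2n-1}}=2n-1$, so the quotient cone has box dimension at most $2n$, i.e. at most that of $\mathbb{C}^n$.

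For the matching lower bound I would exhibit a patch of the quotient isometric to a piece of $\Sp{2n-1}$. The set $\bigcup_{g\in\group,\,g\ne1_\group}\ker(T_g-I)$ is a finite union of proper linear subspaces of $\mathbb{C}^n$, hence has empty interior, so we may pick a unit vector $x_0$ outside it; then $\mathcal{O}_{x_0}$ has exactly $\ntransf$ distinct points and $2\eta:=\min_{g\ne1_\group}\|x_0-gx_0\|>0$. For $\delta<\eta/2$ and $x,y\in B(x_0,\delta)$, unitarity and the triangle inequality give $\|x-gy\|\ge2\eta-2\delta>\eta>\|x-y\|$ for every $g\ne1_\group$, so the minimum defining $d$ is attained at $g=1_\group$ and $\pi$ restricted to $B(x_0,\delta)$ is an isometry onto its image; intersecting with the unit sphere, $\pi(\Sp{2n-1})$ contains a set isometric to a spherical cap, which has box dimension $2n-1$, so by monotonicity the quotient cone has box dimension at least $2n$. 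Combining the bounds shows $\mathbb{C}^n/\group$ has box-counting dimension $2n$, the same as $\mathbb{C}^n$ itself, which is the claim. (Alternatively, one may follow the invariant-theoretic route behind the cited reference: by Noether's theorem $\mathbb{C}[x_1,\dots,x_n]$ is integral over the invariant ring $\mathbb{C}[x_1,\dots,x_n]^{\group}$, since each $x_i$ is a root of the monic polynomial $\prod_{g\in\group}(T-T_gx_i)$ with $\group$-invariant coefficients, so the two rings have equal Krull dimension $n$ and the quotient variety $\operatorname{Spec}\mathbb{C}[x]^{\group}$ is $n$-dimensional.)

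The main obstacle is the lower bound, and specifically the verification that $\pi$ is a genuine isometry — not merely an injection — on a ball around a free point: one must control $\|x-gy\|$ for all non-identity $g$ simultaneously and choose the radius accordingly, and one must keep the construction compatible with the paper's convention of measuring a non-compact cone through its unit-sphere section. The reduction to a unitary action is precisely what makes this bookkeeping go through cleanly.
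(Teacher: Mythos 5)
Your argument is correct, but note that the paper does not actually prove this statement -- it imports it with a citation to Sturmfels, where the result is the algebraic one you sketch in your closing parenthesis: $\mathbb{C}[x_1,\dots,x_n]$ is integral over the invariant ring $\mathbb{C}[x_1,\dots,x_n]^{\group}$ (each $x_i$ kills the monic polynomial $\prod_{g\in\group}(T-T_gx_i)$ with invariant coefficients), so the two rings share Krull dimension $n$ and the quotient variety is $n$-dimensional. Your main, metric argument is a genuinely different and self-contained route: Weyl's averaging trick to make the action unitary, the orbit metric $d(\mathcal{O}_x,\mathcal{O}_y)=\min_g\|x-T_gy\|$, the $1$-Lipschitz quotient map for the upper bound, and a free point $x_0$ (which exists since $\bigcup_{g\neq 1_\group}\ker(T_g-I)$ is a finite union of proper subspaces after passing to a faithful action) around which $\pi$ is a local isometry for the lower bound -- all the estimates check out, including the choice $\delta<\eta/2$ giving $\|x-T_gy\|\geq 2\eta-2\delta>\eta>\|x-y\|$. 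What your approach buys is that it directly establishes the statement in the dimension language the paper actually uses (box-counting dimension, with the conic-set convention of Assumption A1), whereas invoking the algebraic theorem leaves an unstated translation from complex algebraic dimension of $\operatorname{Spec}\mathbb{C}[x]^{\group}$ to the box-counting dimension of the metric quotient; what the algebraic route buys is brevity and robustness (no choice of metric on the quotient is needed). The only cosmetic caveat is that your computation yields real box-counting dimension $2n$ versus the theorem's complex dimension $n$; since both say ``full dimension of the ambient space,'' which is all the downstream argument about non-injectivity of $A$ with $m<n$ requires, this is a matter of bookkeeping rather than a gap.
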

 As $\mathbb{C}^n/ \group$ is $n$-dimensional for any finite group, any forward operator $A\in \mathbb{C}^{m\times n}$ with $m<n$ cannot be one-to-one on $\mathbb{C}^n/\group$, hence some invariant distributions will not be unique.
Despite not being able to learn any $\group$-invariant model, we show that, as in~\Cref{sec:multiple ops}, uniqueness among low-dimensional $\group$-invariant models is indeed possible.

\subsection{Low-Dimensional $\group$-Invariant Models} \label{subsec: lowdim G models}

The observation in~\Cref{eq:virtual} tells us that $\group$-invariant models provide access to $\ntransf$ (virtual) operators $AT_g$ with $g\in \group$. Thus, we can apply similar ideas to those used in~\Cref{theo: multiple op} to analyze the problem of learning a $\group$-invariant model, with the additional constraint that the operators are $\group$-related. 
Instead of depending on the ratio between ambient dimension $n$ and the number of transformations $\ntransf$, the bounds depend on the largest ratio of multiplicity and dimension of the irreducible components of the group action (as defined in~\Cref{theo: irreps}), that is
\begin{equation}
    \max_{j} \frac{c_j}{s_j}.
\end{equation}
With this definition in mind, we begin by reformulating the necessary condition in Proposition~\ref{prop: necessary multA}:

\begin{proposition} \label{prop: necessary G structure}
\red{Let $\group$ be a compact group. A necessary condition for model uniqueness with  operators $\{AT_g\}_{g\in\group}$ is $m \geq \max_{j} \frac{c_j}{s_j}$.}
\end{proposition}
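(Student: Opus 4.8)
The plan is to mimic the proof of \Cref{prop: necessary multA}, but to measure the "effective ambient dimension" that the family of virtual operators $\{AT_g\}_{g\in\group}$ can possibly see, using the canonical decomposition of \Cref{theo: irreps}. The key observation is that stacking all virtual operators produces a matrix $\bigl[\,(AT_{g_1})^\top \,\big|\, \cdots \,\big|\, (AT_{g_{\ntransf}})^\top\,\bigr]^\top$ whose row space is contained in $\sum_{g\in\group} (AT_g)^\top$-image, which is a $\group$-invariant subspace of $\C{n}$; hence it is a sum of some of the canonical components $V_1,\dots,V_J$. Within any single isotypic component $V_j$ of dimension $s_jc_j$, the image of the row space of a single $A$-block has dimension at most $m s_j$ (roughly: $A$ restricted to $V_j$ contributes at most $m$ rows, but the $\group$-action can only "spread" those rows over the $s_j$ copies of the irreducible inside each of the $c_j$ multiplicity directions, not across multiplicity directions), so the rank of the stacked matrix inside $V_j$ is at most $\min(s_jc_j, m s_j)$. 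If $m < c_j/s_j$ for the worst $j$, then $m s_j < c_j \le s_j c_j$, so the stacked system never has full rank on $V_j$, and there is a nontrivial common nullvector.

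Concretely, the steps I would carry out are: (1) write $x = F z$ in the canonical basis of \Cref{theo: irreps}, so that $T_g = F\Sigma_g F^{-1}$ and the virtual operators become $AF\Sigma_g$; (2) block the coordinate $z$ according to $V_1\oplus\cdots\oplus V_J$ and observe that $\Sigma_g$ acts block-diagonally, so it suffices to exhibit a direction $\Delta z$ supported on a single component $V_{j^\star}$, with $j^\star = \argmax_j c_j/s_j$, that is killed by $AF\Sigma_g$ for every $g$; (3) recall, following the Schur-lemma structure inside $V_{j^\star}$, that $\Sigma_g$ restricted there is $I_{c_{j^\star}}\otimes \rho_{j^\star}(g)$ acting on $\C{c_{j^\star}}\otimes\C{s_{j^\star}}$, and $AF$ restricted there is an $m\times (c_{j^\star}s_{j^\star})$ matrix; (4) argue that $\bigcap_{g}\ker\bigl(AF\,(I_{c_{j^\star}}\otimes\rho_{j^\star}(g))\bigr)$ has dimension at least $c_{j^\star}s_{j^\star} - m\,s_{j^\star} = s_{j^\star}(c_{j^\star}-m) > 0$ whenever $m < c_{j^\star}/s_{j^\star}$; (5) lift such a $\Delta z$ back to a nonzero $\Delta x = F\Delta z$, set $v = x_1 = \dots = x_{\ntransf}$ with $v + \Delta x$ also a solution of the virtual system \Cref{eq: algo_x} applied to $AT_g$, and conclude non-uniqueness. (As in \Cref{prop: necessary multA}, the statement is really about the rank of the stacked matrix, so no assumption on $\signalset$ beyond $\signalset\neq\{0\}$ is needed — or more precisely we phrase it as: if $m < \max_j c_j/s_j$ the map $v\mapsto (AT_g v)_{g}$ is not injective on $\R{n}$.)

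The main obstacle is step (4): making precise why the common kernel over all $g\in\group$ inside $V_{j^\star}$ still has dimension at least $s_{j^\star}(c_{j^\star}-m)$ rather than collapsing to something smaller once we intersect over the whole (possibly infinite, compact) group. The clean way is to use the averaging/Schur argument: the span $\sum_{g\in\group}(I_{c_{j^\star}}\otimes\rho_{j^\star}(g))^\top (AF|_{V_{j^\star}})^\top$ is a $\group$-invariant subspace of $V_{j^\star}\cong \C{c_{j^\star}}\otimes\C{s_{j^\star}}$, and since $\C{s_{j^\star}}$ carries an irreducible, any invariant subspace is of the form $W\otimes\C{s_{j^\star}}$ for some $W\subseteq\C{c_{j^\star}}$; but this span is generated by the $m$ rows of $AF|_{V_{j^\star}}$, which can force $\dim W \le m$, giving the span dimension $\le m\,s_{j^\star}$ and hence the orthogonal complement (the common kernel) dimension $\ge s_{j^\star}(c_{j^\star}-m)$. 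I would also need to handle the bookkeeping that $AF|_{V_{j^\star}}$ may not have its $m$ rows "aligned" with the tensor structure, which is why the $W\otimes\C{s_{j^\star}}$ form — forced purely by invariance — is the right tool rather than a naive row count. For a compact (possibly infinite) group one replaces the finite sum $\sum_{g}$ by the Haar integral $\int_\group \,dg$ throughout; the invariant-subspace classification is unchanged.
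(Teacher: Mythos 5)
Your strategy is essentially the paper's: both proofs reduce model uniqueness to the requirement that the stacked operator $\left[(AT_{g})^{\top}\right]_{g\in\group}$ have rank $n$, and both bound that rank isotypic component by isotypic component using Schur theory --- the paper via the Gram matrix $\frac{1}{\ntransf}\sum_g T_g a_i a_i^{\top}T_g^{\top}$ and the orthogonality relations of \Cref{lem: block-diagonal orbit}, you via the classification of invariant subspaces of $V_j\cong\C{c_j}\otimes\C{s_j}$ as $W\otimes\C{s_j}$. Your replacement of sums by Haar integrals for infinite compact groups also matches the paper.

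The one step that needs repair is the row count in step (4). A single row $a_i$ restricted to $V_{j}$, reshaped as a $c_{j}\times s_{j}$ matrix, can have rank up to $s_{j}$, and since $\rho_j$ is irreducible the matrices $\rho_j(g)$ span all of $\C{s_j\times s_j}$ (Burnside), so the orbit of $a_i|_{V_j}$ spans $W_i\otimes\C{s_j}$ with $\dim W_i\leq s_j$, i.e.\ a subspace of dimension up to $s_j^2$ --- not $s_j$. Hence $m$ rows give $\dim W\leq m s_{j}$ and a row-space of dimension at most $m s_{j}^{2}$ inside $V_j$ (exactly the paper's block count $m s_j^2$ against $s_jc_j$), and the common kernel has dimension at least $s_{j}(c_{j}-m s_{j})$, not $s_{j}(c_{j}-m)$. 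Your own phrasing betrays the mismatch: $s_{j}(c_{j}-m)>0$ is equivalent to $m<c_{j}$, not to $m<c_{j}/s_{j}$. Fortunately the conclusion is unaffected, since $s_{j}(c_{j}-m s_{j})>0$ holds precisely when $m<c_{j}/s_{j}$, which is the threshold the proposition requires; your argument goes through once the count is corrected. (A cosmetic point: the common nullvector you construct lives in $\C{n}$, so take its real or imaginary part to produce a nonzero element of $\R{n}$.)
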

The proof is detailed in \Cref{app: group proofs}, and similarly to the proof of Proposition~\ref{prop: necessary multA}, requires analyzing the rank of the matrix (or its infinite dimensional equivalent for infinite groups)
\begin{align}
      \begin{bmatrix}
     A_1 \\ 
     \vdots \\
     A_{\ntransf} 
    \end{bmatrix}
\end{align}
with the additional constraint that $A_g = AT_g$. Note that for any group action $n/\ntransf\leq \max_j c_j/s_j \leq n$, hence this is always a more restrictive bound than the one in Proposition~\ref{prop: necessary multA}. The following two  examples present group actions where $\max_j c_j/s_j \gg n/\ntransf$:

\begin{example}
    Consider the group of permutations of the last $4$ elements of a vector of $10$ elements where $\ntransf=4!=24$. The group action has $\max_j c_j/s_j = 6$ associated with the subspace containing the first $6$ elements, whereas $n/\ntransf<1$. 
\end{example}

\begin{example}
    \red{Consider the infinite group of rotations of a signal introduced in~\Cref{subsubsec: infinite groups}. The group action has $\max_j c_j/s_j = n_2$, however $\ntransf=\infty$.}
\end{example}

We now study a sufficient condition for unique model identification. As we have that $\max_j c_j/s_j \geq n/\ntransf$, one might be tempted to extrapolate the bound in~\Cref{theo: multiple op}, and expect that $m>k + \max_j c_j/s_j  $ measurements are sufficient for model identification. However, the following counter-example shows that this is not always sufficient:

\begin{example}\label{ex: worst case ref}
Let $\group$ be the group of reflection with 2 elements $\{e,r\}$ of a signal with even length $n$ \red{(see~\Cref{subsubsec: finite groups})}. Consider the representation given by $T_e= I_n$ where $I_n$ is the $n\times n$ identity matrix and
\begin{equation}
    T_r = \begin{bmatrix}
    I_{n/2} & 0 \\
    0 & -I_{n/2}
    \end{bmatrix}.
\end{equation}
Note that the representation has $J=2$ invariant subspaces with $s_1=s_2=1$ and $c_1=c_2=n/2$. Let $\signalset$ be the set of $d$-sparse signals where $n> 2d$. This conic set has dimension $k=d-1$ according to the definition in \Cref{subsec: low-dim models}. In order to have model uniqueness, we require that the inferred signal set $\hat{\signalset}$ defined in \Cref{eq:inferred set} equals the true set $\signalset$, or equivalently that their difference 
\begin{equation} \label{eq:diff inf example}
    \hat{\signalset}\setminus{\signalset} = \{ v\in \R{n}\setminus{\signalset} | \;  A T_e (x_1 - v) = 
    A  T_r(x_{2} -v) = 0, \; x_1,x_{2}\in \signalset \}
\end{equation}
is empty. This condition can be written in matrix form as
\begin{equation} \label{eq: counterex condition}
    A[\tilde{x}_1-T_ev, \tilde{x}_2-T_rv] \neq 0 .
\end{equation}
where $\tilde{x}_1 = T_e x_1$ and $\tilde{x}_2 = T_rx_2$ are $d$-sparse and thus also belong to the signal set $\signalset$.
Let $\Phi_z = [\tilde{x}_1-T_ev, \tilde{x}_2-T_rv]$, such that condition~\cref{eq: counterex condition} can be written as $A\Phi_z \neq 0$. We consider the (extreme) case where $\tilde{x}_1$ and $\tilde{x}_2$ have disjoint supports, i.e.
$\text{supp}(\tilde{x}_1) = [n/2+1,...n/2+d]$
and 
$\text{supp}(\tilde{x}_2) = [n/2+d+1,...n/2+2d]$. Using coordinates $[a_1,\dots,a_d]^{\top} \in \R{d}$ for $\tilde{x}_1$ and $[b_1,\dots,b_d]^{\top} \in \R{d}$ for $\tilde{x}_2$, we obtain
\begin{equation}
  [\tilde{x}_1-T_ev, \tilde{x}_2-T_rv] =  \begin{bmatrix}
    -v_1 & -v_1 \\
    \vdots & \vdots \\
    -v_{n/2} & -v_{n/2} \\
    a_1 -v_{n/2+1} & v_{n/2+1} \\
    \vdots & \vdots \\
    a_d - v_{n/2+d} & v_{n/2+d} \\
     - v_{n/2+d+1} & b_1+v_{n/2+d+1} \\
    \vdots & \vdots \\
     - v_{n/2+2d} & b_{d}+v_{n/2+2d} \\
    \vdots & \vdots \\
    -v_n & v_n
    \end{bmatrix}
\end{equation}
Setting $a_i=2v_{n/2+i}$, $b_i=-2v_{n/2+d+i}$ for $i=1,\dots,d$ and $v_{j}=0$ for  $j=n/2+2d+1,\dots,n$, we have that
$[\tilde{x}_1-T_ev, \tilde{x}_2-T_rv] = [\mu, \mu]$ with $\mu = -[v_1,\dots,v_{n/2+2d},0,\dots,0]^{\top}$. Hence, condition \cref{eq: counterex condition} can be simplified to
\begin{align}\label{eq: restrA cond}
    \tilde{A} \begin{bmatrix}
    v_1 \\
    \vdots \\
    v_{n/2+2d}
    \end{bmatrix} \neq 0
\end{align}
where $\tilde{A}$ is a $m\times (n/2+2d)$ submatrix of $A$. Thus, the condition in \cref{eq: restrA cond} can only hold for all $[v_1,\dots,v_{n/2+2d}]^{\top}\in \R{n/2+2d}-\{0\}$ if $m\geq n/2+2d$. As this linear representation has $\max_j c_j/s_j = n/2$, and $k=d-1$, we have 
\begin{equation}
    m\geq \max_j c_j/s_j + 2k - 2.
\end{equation}
\end{example}

Our main theorem shows that, for a cyclic group, $2k+1$ additional  measurements are actually sufficient to guarantee model uniqueness for almost every $A$:  

\begin{theorem} \label{theo: one op}
Let $\group$ be a compact cyclic group acting on $\R{n}$ \JT{and assume the signal set~$\signalset$ verifies assumption \textup{\textbf{A1}}}. For almost every $A\in\R{m\times n}$ \JT{w.r.t.\ to the Lebesgue measure in $\R{mn}$,} the signal set can be uniquely identified \JT{from the sets $\{AT_g\signalset\}_{g\in\group}$} if the number of measurements verifies \red{$m > 2k + \max_j c_j + 1$}. In particular, for almost any operator $A\in \mathbb{R}^{m\times n}$, a $\group$-invariant signal set can be identified from \JT{the measurement set $\yset=A\signalset$} if \red{$m > 2k + \max_j c_j +1$}.
\end{theorem}

The proof is detailed in \Cref{app: group proofs}. As all the irreducibles of a cyclic group have dimension $s_j=1$~\citep{serre1977linear}, the bound can also be expressed as $m > 2k + \max_j c_j/s_j +1$.
Unlike the case of multiple independent operators, having enough measurements to guarantee model identification also guarantees unique signal recovery due to the additional $k$ measurements. However, the experiments in~\Cref{sec:experiments} suggest that $m>2k+\max_j c_j/s_j$ measurements are only required in worst-case scenarios such as~\Cref{ex: worst case ref}, whereas $m>k+\max_j c_j/s_j$ are required for typical models.

Although~\Cref{theo: one op} is specific to actions of cyclic groups, it can be applied to any group action, as all groups contain at least one cyclic subgroup. For example, if the group action consists of the composition of rotations and reflection, which is not cyclic nor abelian, we can apply~\Cref{theo: one op} by only considering rotations or reflection, which are cyclic when considered separately. However, if the cyclic subgroup is significantly smaller than the full group, \Cref{theo: one op} might not be tight. We conjecture that the bound in~\Cref{theo: one op} holds for any compact group:

\begin{conjecture}\label{conj: all groups}
Let $\group$ be any compact group acting on $\R{n}$ \JT{and assume the signal set~$\signalset$ verifies assumption \textup{\textbf{A1}}}. For almost every $A\in\R{m\times n}$ \JT{w.r.t.\ to the Lebesgue measure in $\R{mn}$,} the signal set can be uniquely identified \JT{from the sets $\{AT_g\signalset\}_{g\in\group}$} if the number of measurements verifies \red{$m > 2k + \max_j c_j/s_j + 1$}. In particular, for almost any operator $A\in \mathbb{R}^{m\times n}$, a $\group$-invariant signal set can be identified from \JT{the measurement set $\yset=A\signalset$} if \red{$m > 2k + \max_j c_j/s_j +1$}.
\end{conjecture}

The \emph{almost every} property can be hard to interpret in practical scenarios. A simple example of matrices that will hold this property with probability 1 are compressive sensing matrices (e.g., iid random Gaussian entries). However, there is an important subset of measurement operators of Lebesgue measure zero in  $\mathbb{R}^{m\times n}$ that do not verify this property and are treated in the next subsection. 

\subsection{Equivariant Measurement Operators}
In order to obtain additional information in the nullspace of $A$, Proposition~\ref{prop: necessary multA} requires that the measurements operators $AT_1,\dots,AT_{\ntransf}$ do not share the same nullspace. 
Operators which fail to bring information in the nullspace of $A$ possess the property of being $\group$-equivariant:

\begin{theorem} \label{theo: badAs}
The set of forward operators $\{AT_g\}_{g\in\group}$ share the same nullspace for all $g\in \group$ if and only if $A$ is an equivariant map, i.e., it verifies
\begin{equation}\label{eq:equivariant map}
    A T_g = \tilde{T}_g A
\end{equation}
for all $g\in\group$, where $\tilde{T}_g:\group\mapsto\C{m\times m}$ is a linear representation of $\group$ acting on $\C{m}$. Moreover, any equivariant linear operator can be decomposed as $A= \tilde{F} \Lambda F^{-1}$, where $F$ is the basis associated to the group representation in \Cref{theo: irreps},  $\tilde{F}$ is any basis of $\C{m}$ and $\Lambda\in\C{m\times n}$ has the following block-diagonal structure
\begin{equation}
\label{eq:schur}
    \Lambda = \left[ \begin{array}{ccc}
\Lambda_{1} &  &   \\
 & \ddots &   \\
 &   & \Lambda_{J} \\
\end{array} \right]
\; \text{with} \; 
   \Lambda_{j} = \left[ \begin{array}{ccc}
B_{j} &  &   \\
 & \ddots &   \\
 &   & B_{j} \\
\end{array} \right] \in \mathbb{C}^{\tilde{c}_{j}s_{j}\times c_{j}s_{j}} \; \text{and} \; B_j\in \mathbb{C}^{\tilde{c}_{j}\times c_{j}}
\end{equation}
where $\tilde{c}_j$ are the multiplicities of the representation $\tilde{T}_g$, such that $m=\sum_{j=1}^{J} s_{j} \tilde{c}_{j}$.
\end{theorem}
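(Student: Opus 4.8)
The plan is to prove the two directions of the equivalence separately, and then extract the structural decomposition from Schur's lemma applied to the intertwining map $A$. For the direction that $\group$-equivariance implies a common nullspace: suppose $AT_g = \tilde T_g A$ for all $g$, with $\tilde T_g$ invertible. If $x \in \nullA$, then $AT_g x = \tilde T_g A x = 0$, so $T_g x \in \nullA$; conversely if $T_g x \in \nullA$ then $\tilde T_g^{-1} A T_g x = Ax = 0$. Hence $\nullAg = T_g^{-1}\nullA = \nullA$ for every $g$ (using invertibility of $T_g$ to move between $\nullA$ and $\nullAg = \{x : AT_g x = 0\}$), which is precisely the statement that all $\{AT_g\}$ share the same nullspace.

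For the converse: assume $\nullA = \nullAg$ for all $g$, i.e. $T_g \nullA = \nullA$, so $\nullA$ is a $\group$-invariant subspace of $\C{n}$. The idea is to define $\tilde T_g$ on the range $\rangeA = A\C{n}$ by $\tilde T_g(Ax) := A T_g x$. This is well-defined: if $Ax = Ax'$ then $x - x' \in \nullA$, so $T_g(x-x') \in \nullA$, so $AT_gx = AT_gx'$. One checks $\tilde T_g$ is linear, that $\tilde T_g \tilde T_{g'} = \tilde T_{g g'}$ (so it is a representation on $\rangeA$), and that each $\tilde T_g$ is invertible on $\rangeA$ with inverse $\tilde T_{g^{-1}}$. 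Then extend $\tilde T_g$ arbitrarily to an invertible map on all of $\C{m}$ — concretely, pick a complement $W$ of $\rangeA$ and let $\tilde T_g$ act as the identity on $W$ (or, to keep it a genuine group representation compatible with the decomposition, use complete reducibility to split off $W$ as an invariant complement and define the action there). By construction $AT_g = \tilde T_g A$ on $\C{n}$, which is \Cref{eq:equivariant map}.

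For the decomposition $A = \tilde F \Lambda F^{-1}$: write everything in the canonical bases given by \Cref{theo: irreps}, i.e. $T_g = F\Sigma_g F^{-1}$ and $\tilde T_g = \tilde F \tilde\Sigma_g \tilde F^{-1}$, where $\tilde T_g$ now has multiplicities $\tilde c_j$ (possibly zero for some $j$). Set $\Lambda := \tilde F^{-1} A F$. The equivariance relation becomes $\Lambda \Sigma_g = \tilde\Sigma_g \Lambda$ for all $g$, i.e. $\Lambda$ is an intertwiner between the representations $\Sigma_g$ and $\tilde\Sigma_g$. Decomposing $\Lambda$ into blocks indexed by the isotypic components, Schur's lemma forces the block mapping the $j$-th isotypic component of $\Sigma$ to the $j'$-th isotypic component of $\tilde\Sigma$ to vanish when $j \neq j'$ (inequivalent irreducibles), and within a fixed $j$ the block must itself be block-structured: it acts as $B_j \otimes I_{s_j}$ for some $B_j \in \C{\tilde c_j \times c_j}$, since any $\rho_j(g)$-to-$\rho_j(g)$ intertwiner is a scalar multiple of the identity. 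Collecting these gives the claimed block-diagonal form of $\Lambda$ in \Cref{eq:schur}, with $m = \sum_j s_j \tilde c_j$ coming from counting dimensions of $\C{m}$.

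I expect the main obstacle to be bookkeeping rather than conceptual: carefully setting up the intertwiner so that the irreducible components of $\tilde T_g$ are a \emph{sub}-multiset of those of $T_g$ (some $\tilde c_j$ may be zero, and one must argue that no irreducible type appears in $\tilde T_g$ that is absent in $T_g$, which follows because $\rangeA$ is a quotient — equivalently a sub-representation — of $\C{n}$), and then organizing the Schur-lemma block computation so the tensor structure $B_j \otimes I_{s_j}$ comes out cleanly in the ordering used in \Cref{eq:schur}. The well-definedness and invertibility arguments in the converse direction are routine, and the $\Leftarrow$ direction is immediate.
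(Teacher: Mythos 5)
Your proof is correct, but it reaches the equivalence by a genuinely different route than the paper. For the direction (shared nullspace $\Rightarrow$ equivariance), the paper factors $A = MV^{\top}$ with $V$ an orthonormal basis of the row space, notes that if all $AT_g$ have the same row space then $AT_g = U_g A$ with $U_g$ invertible, and checks that $g\mapsto U_g$ is a homomorphism; you instead construct $\tilde T_g$ intrinsically as the induced quotient representation on the image $A\C{n}\cong \C{n}/\nullA$, using $T_g\nullA=\nullA$ for well-definedness, and extend it by the identity on a complement in $\C{m}$. These are the same fact seen from dual sides (row space versus nullspace); your version is arguably cleaner in that it never invokes the pseudo-inverse and needs only invertibility of the induced maps (the paper's passing claim that the factor $Q_g$ is orthogonal is neither needed nor true in general), while the paper's version is more concrete and matrix-explicit. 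The easy direction and the block decomposition agree in spirit: the paper simply cites the Schur-lemma statement as standard, whereas you sketch the isotypic-component computation, correctly flagging that irreducible types of $\tilde T_g$ absent from $T_g$ force the corresponding blocks of $\Lambda$ to vanish. One cosmetic caution: you write $\rangeA$ for the image $A\C{n}\subset\C{m}$, whereas the paper reserves $\rangeA$ for the range of $A^{\dagger}$ inside $\R{n}$; keep the two apart to avoid confusion.
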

The proof is detailed in \Cref{app: group proofs}. This necessary condition on $A$ has important practical implications, as equivariant operators appear often in real-world settings which are discussed in next subsection. 

\begin{figure}[t]
\centering
\includegraphics[width=.8\textwidth]{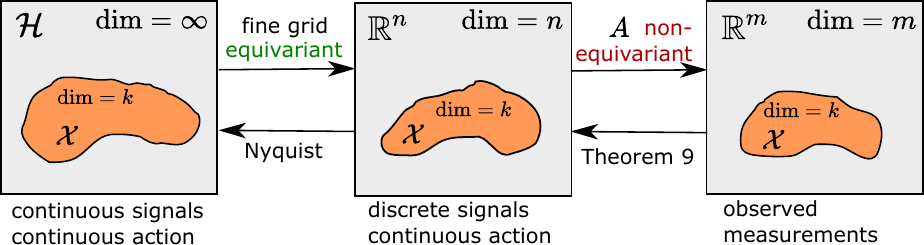}
\caption{Identification of models with continuous signals and \JT{continuous} group actions. Continuous signals are well described as points in an infinite dimensional Hilbert space $\Hilb$, and symmetries in the model are described as a continuous group acting on $\Hilb$. As discussed in~\Cref{subsubsec: infinite groups}, if the signals are bandlimited, we can represent them on a discrete grid $\R{n}$. Nyquist theorem ensures this mapping is equivariant and one-to-one, thus we have a valid group action on $\R{n}$. \Cref{theo: one op} guarantees that we can uniquely identify the model from measurement data alone via almost every (thus non-equivariant) mapping $A$ with $m>2k+\max_j c_j+1$ measurements if $\ntransf$ is a cyclic group.}
\label{fig:cont diagram}
\end{figure}
\red{We end this subsection with a remark concerning  continuous signals and groups actions (see~\Cref{fig:cont diagram}) defined in an infinite-dimensional Hilbert space. As discussed in~\Cref{subsubsec: infinite groups}, bandlimited signals $x$ can be represented in $\R{n}$ on a discrete grid after an anti-aliasing filter. This procedure is an equivariant mapping, as we have a valid linear representation of the group acting on $\R{n}$. Contrary to the grid \JT{sensing} step, the measurement process $A$ should be non-equivariant in order to be able to uniquely identify the model from measurement data alone.}

\subsection{Consequences for Common Group Actions}\label{subsec:group actions}
We discuss the implications of \Cref{theo: one op} and \Cref{theo: badAs} for the group actions introduced in~\Cref{subsec: group theory and examples}.

\paragraph{Translations/Shifts}  \red{Continuous translations and shifts have the irreducible representations with non-zero multiplicities.} A generic sufficient condition for model identification in \red{both} cases is $m> 2k + 2$.
Unique model identification is impossible with measurement operators consisting of a subset of Fourier basis vectors which are shift equivariant. This is the case of multiple inverse problems such as deblurring, super-resolution, accelerated MRI and limited angle CT. However, it is possible to uniquely identify the signal model using translation invariance in other inverse problems such as image inpainting or compressed sensing.

\paragraph{Reflection} As this is a small group, in order to guarantee model identification our theory requires $m> 2k+n/2+1$ measurements, significantly more than those needed for signal recovery. 

\paragraph{Rotations}
\red{If we consider signals defined on the unit circle and a 2D radial \JT{sensing} pattern} with $n_1$ pixels of diameter and angle sampled every $360/n_2$ degrees (hence $n=n_1n_2$), we have $\max_j c_j/s_j = n_1$ and thus need $m>2k+n_1+1$ for model identification, which is also more measurements than required for signal recovery.

General rotations of a Cartesian grid require some degree of interpolation for off-of-grid values. Thus, this set of transformations do not form an exact group action in general. However, if the signals are sufficiently band-limited and supported within a radius from the center, the Cartesian grid can be approximated by the radial grid, and thus we need $m>2k+n_1+1$ measurements.

Rotations are useful for learning the signal model in inverse problems which depend on Fourier measurements, as long as these measurements are not of circular stripes of Fourier space, as illustrated in~\Cref{fig:fourier pat}. Two important examples where learning is possible are limited angle CT~\citep{chen2021equivariant} and accelerated MRI~\citep{chen2021robust}, as the Fourier patterns are generally not rotationally invariant.

\begin{figure}[t]
\centering
\includegraphics[width=.7\textwidth]{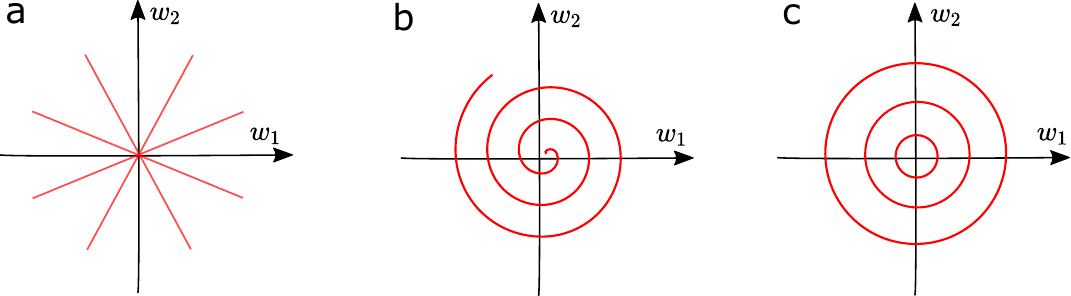}
\caption{Forward operators consisting of partial Fourier measurements. (\textbf{a}) The Radon transform appearing in tomography problems consists of lines in Fourier space. (\textbf{b}) Spiral frequency strategies are commonly used in MRI. (\textbf{c}) Circular Fourier pattern of measurements. 
Shift invariance does not provide any information in the nullspace. Rotation invariance provides additional information in both (\textbf{a}) and (\textbf{b}) but does not provide additional information in (\textbf{c}).}
\label{fig:fourier pat}
\end{figure}

\paragraph{Rotations by 90 degrees and Reflection}
 Consider the non-commutative group  of 8 elements whose action is composed of 90 degree rotations and reflection of a 2D signal defined on a regular grid. This group action has $\max_j c_j/s_j \approx n/8 $ for large $n$. \red{We conjecture that $m>2k+n/8+1$ measurements are sufficient for model identification}, significantly more than the case of rotations of a radial grid.

\paragraph{Permutations} Group of all permutations of $n$ entries which has $\max_j c_j/s_j=1$. Note that the previous examples are subgroups of this group. In this case, signal recovery also guarantees model identification.  The only equivariant measurement operator which does not provide additional information is the constant vector $A = \alpha[1,\dots,1]^{\top}$ with $\alpha\in\mathbb{R}$.

\section{Noisy Measurement Data} \label{sec: noise}
Surprisingly, the results of this paper are also theoretically valid if the measurements are corrupted by independent additive noise $\epsilon$, i.e., $y = A_{g}x + \epsilon$, as long as the noise distribution is \emph{known} and has a nowhere zero characteristic function (e.g., Gaussian noise):
\begin{proposition} \label{prop:noise}
For a fixed noise distribution, if its characteristic function is nowhere zero, then there is a one-to-one mapping between the space of clean measurement distributions and noise measurement distributions. 
\end{proposition}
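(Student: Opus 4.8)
The plan is to work in the Fourier (characteristic function) domain and exploit that additive independent noise acts there by multiplication. Fix one operator $A_g$ and let $z = A_g x$ denote the clean measurement vector, with characteristic function $\varphi_z(w) = \expectation{e^{\mathrm{i}w^{\top}z}}$ for $w\in\R{m}$, and write $\psi(w) = \expectation{e^{\mathrm{i}w^{\top}\epsilon}}$ for the (known) characteristic function of the noise. Since $\epsilon$ is independent of $x$, hence of $z$, the characteristic function of the noisy measurement $y = z+\epsilon$ factorises as $\varphi_y(w) = \varphi_z(w)\,\psi(w)$ — the same computation as in~\Cref{eq: stat}, now carrying the noise term along. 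The ``add noise'' operation is therefore the map $\Phi$ sending the law of $z$ to the law of $y$, i.e.\ convolution with the fixed noise law; it is plainly well defined and surjective onto the set of attainable noisy measurement distributions, so it only remains to prove it is injective.

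For injectivity, suppose two clean measurement distributions produce the same noisy distribution. Then $\varphi_z(w)\psi(w) = \varphi_{z'}(w)\psi(w)$ for all $w\in\R{m}$, and since $\psi$ is nowhere zero by hypothesis we may divide to get $\varphi_z(w) = \varphi_{z'}(w)$ for every $w$; the uniqueness theorem for characteristic functions then forces the two clean laws to coincide. This gives the claimed bijection between clean and noisy measurement distributions for a single operator. The full observation model is just the family of these per-operator distributions (equivalently, the joint law of a pair $(y,g)$), and since the noise is independent throughout, conditioning on $g$ reduces to the single-operator computation; thus the bijection holds for the whole model. In particular, every identifiability statement established in the noiseless sections only ever uses the clean measurement distributions — or their characteristic functions restricted to $\rangeAg$ — and these are recoverable from the noisy ones, so all such results transfer verbatim to the noisy setting. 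For the infinite/continuous group actions of~\Cref{sec: group learning} one simply uses the virtual operators $A_g = AT_g$; the noise still lives in the $m$-dimensional measurement space, so nothing changes.

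There is no serious obstacle here: the statement is essentially immediate once the factorisation $\varphi_y = \varphi_z\psi$ is in place, and the only delicate points are bookkeeping — making precise what ``the space of measurement distributions'' means and checking that conditioning on the operator index is harmless, as above. It is worth noting that the hypothesis that $\psi$ is nowhere zero is genuinely needed: were $\psi$ to vanish on a set $W\subset\R{m}$, the values $\varphi_z(w)$ for $w\in W$ would be annihilated by the convolution and two clean laws differing only there would become indistinguishable, so the proposition is sharp in this respect. Gaussian, Laplace and Cauchy noise all meet the hypothesis, whereas uniform noise (whose characteristic function is a sinc) does not.
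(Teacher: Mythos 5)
Your proof is correct and follows essentially the same route as the paper's: both arguments rest on the factorisation of the noisy measurement's characteristic function as the product of the clean one with the noise's, followed by division using the nowhere-zero hypothesis and the uniqueness theorem for characteristic functions, applied operator by operator. The extra remarks on surjectivity, sharpness of the hypothesis, and examples of admissible noise distributions are sound additions but not needed for the statement.
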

\begin{proof}
Consider the noisy measurements associated to the $g$th operator $A_g$, as $z=y+\epsilon$, where $z$ are the observed noisy measurements, $y$ are the clean measurements and $\epsilon$ is additive noise (independent of $y$). 
The characteristic function of the sum of two independent random variables is given by the multiplication of their characteristic functions, i.e.,
\begin{equation}
    \varphi_z(w) = \varphi_y(w) \varphi_\epsilon(w)
\end{equation}
where $\varphi_z$, $\varphi_y$ and $\varphi_\epsilon$ are the characteristic functions the noisy measurement, clean measurements and noise distributions, respectively. If the characteristic function of the noise distribution is nowhere zero, we can uniquely identify the characteristic function of the clean measurement distribution as 
\begin{equation}
 \varphi_y(w)   = \varphi_z(w)/ \varphi_\epsilon(w)
\end{equation}
The clean measurement distribution is fully characterized by its characteristic function $\varphi_y(w)$. We end the proof by noting that the same reasoning applies to the measurements of every operator $A_g$ with $g\in \{1,\dots,\ntransf\}$.
\end{proof} 

If the clean measurement distribution can be uniquely identified, we can then apply \Cref{theo: multiple op} and Proposition~\ref{prop: necessary multA} for the case of independent operators, and \Cref{theo: one op,theo: badAs} and Proposition~\ref{prop: necessary G structure} for the case of $\group$-related operators. Note that this only guarantees model identifiability and makes no claims on the sample complexity of any learning process.

\section{Algorithms}
In this section, we summarize some different approaches for solving  model identification and/or signal recovery problems. \red{We recall the learning settings:
\begin{description}
    \item[Multiple operators] We observe $N$ measurement vectors $y_i$, each associated with one out of $\ntransf$ operators, $y_i = A_{g_i}x_i+\epsilon_i$, where $A_{g_i}$ is known.
    \item[Single operator, $\group$-invariant model] We observe $N$ measurement vectors $y_i$, associated with a single (known) operator, $y_i = Ax_i+\epsilon_i$, and assume that the set of signals $\signalset$ is $\group$-invariant.
\end{description}
}

\subsection{Learning a Generative Model}
One approach is to first learn the low-dimensional support $\signalset$ and then reconstruct the signals. This can be done in practice by estimating $\hat{\signalset}$ within a class of low-dimensional models $\mathcal{C}$ while satisfying measurement consistency:
\begin{align}
    \argmin_{\hat{\signalset},x_1,\dots,x_N} &\; \sum_{i=1}^{N}\| y_i- A_{g_i}x_i \|^2 \\
    \text{s.t.}  \quad & \hat{\signalset} \in \mathcal{C}, \; x_i \in \hat{\signalset}
\end{align}
For example, low-rank matrix completion algorithms~\citep{cai2010singular,candes2009exact} assume that the signal set (columns of the matrix) consists of a $k$-dimensional linear subspace. Sparse subspace clustering with missing entries methods~\citep{yang2015sparse,ongie2017algebraic} assume that the signal set is a union of low-dimensional subspaces. These approaches are closely related to $k$-sparse dictionary learning approaches~\citep{studer2012dictionary}, which can also be trained  using only incomplete measurements. Union of subspace models are also closely related to a mixture of Gaussians with low-rank covariance matrices, which can also be used to learn from incomplete measurements~\citep{yang2015mixture}.

In some inverse problems, a direct inversion $y\mapsto x$ might be hard to obtain due to a high level of noise affecting the measurements. In this setting, it might be possible to identify the signal distribution  from incomplete data, without recovering each individual signal.  
For example, the signal distribution $p(x)$ can be modelled with a generative deep network. This strategy can be applied in learning problems with independent operators, $\group$-structured operators or $\group$-invariant signal sets with a single operator. 

\red{The case of multiple operators is summarized as follows. Let $q^{\textrm{data}}_g(y)$ be the empirical measurement distribution of observations linked to the $g$th operator. The learning strategy can be written as
\begin{align}
    \argmin_{\hat{p}(x)} &\; \sum_{g=1}^{\ntransf} d( \hat{q}_g(y) , q^{\textrm{data}}_g(y) ) \\
    \text{s.t.}  \quad & \hat{q}_g(y) = \int \ell(y|A_gx)\hat{p}(x)dx \quad g=1,\dots,\ntransf 
\end{align}
where $\ell(y|A_gx)$ denotes the distribution of measurements given $A_gx$ (i.e., the likelihood), and $d(\cdot,\cdot)$ is a (pseudo) divergence, which is also generally learned using generative adversarial networks (GANs). This approach has been adopted by AmbientGAN~\citep{bora2018ambientgan} for various imaging problems.}

\red{The case of a single operator and a $\group$-invariant model follows a similar learning strategy:
\begin{align}
    \argmin_{\hat{p}(x)} &\;  d( \hat{q}(y) , q^{\textrm{data}}(y) ) \\
    \text{s.t.}  \quad & \hat{q}(y) = \int \ell(y|Ax)\hat{p}(x)dx  \\
    & \hat{p}(x) \; \text{is $\group$-invariant}
\end{align}
where $q^{\textrm{data}}$ denotes the empirical distribution of the observations. This approach has been adopted by Cryo-GAN~\citep{gupta2020multi} for the cryo electron-microscopy problem.}


\subsection{Learning to Invert}  \label{subsec: learning to invert}
If we observe signals through multiple forward operators, an alternative  approach consists of directly learning the reconstruction function $f: (g,y)\mapsto x$ without explicitly learning the signal model. To this end, we can use the following training loss:
\begin{align} \label{eq:multop loss}
    \argmin_{f\in \mathcal{F}} &\; \sum_{i=1}^{N} \| y_i- A_{g_i} f(g_i,y_i) \|^2 .
\end{align}
The family of functions $\mathcal{F}$ is some parameterized function space \red{which captures the low-dimensionality of the signal set $\signalset$ (e.g., a deep neural network with an autoencoder architecture). Without the low-dimensional constraint on $\mathcal{F}$,  we can have $f(g,y)=A_g^{\dagger}y$  which is a minimizer of \cref{eq:multop loss}  but fails to learn the signal set.
For example, we can choose $f(g,y) = \phi(A_g^{\dagger}y)$ where $\phi(\cdot)$ plays the role of a denoiser function (independent of the forward operator) which projects $A_g^{\dagger}y$ to a low-dimensional set $\hat{\signalset}$. }
More generally, the network architecture can be an unrolled optimization algorithm~\citep{monga2021algorithm} which incorporates the forward operator $A_g$  and a denoiser subnetwork which is operator independent, e.g., acting as a \red{(low-dimensional)} proximal operator in the unrolled algorithm.

In cases where the model is assumed to be $\group$-invariant \JT{and $A$ is not itself equivariant (c.f.,~\Cref{theo: badAs})}, the reconstruction function $f:y\mapsto x$ can be learned by enforcing that the composition of the forward operator and reconstruction $f\circ A$ yields a $\group$-equivariant mapping on $\signalset$. To this end, we can enforce approximate equivariance via the following training loss:
\begin{align} \label{eq:equiv imaging}
    \argmin_{f\in \mathcal{F}} &\; \sum_{i=1}^{N} \| y_i- Af(y_i) \|^2 \\
    \text{s.t.}  &\quad f(AT_{g_i} \tilde{x}_i)  = T_{g_i} \tilde{x}_i \quad \forall \tilde{x}_i=f(y_i) \label{eq: equivariance constraint}
\end{align}
\JT{The additional equivariance constraint in~\cref{eq: equivariance constraint} stops the network from learning the identity mapping, since $I\circ A =A$ is not equivariant. }
This strategy was proposed in the equivariant imaging  framework~\citep{chen2021equivariant} and a similar approach is presented in~\citep{tachella2022sampling} for the case of multiple forward operators. In~\citep{chen2021equivariant}, the authors also consider adding an adversarial network to try to impose similarity between the estimated sample signal model $\hat{\signalset}$ and its transforms $\{T_g \hat{\signalset}\}_{g\in\group}$. The equivariant imaging loss in~\Cref{eq:equiv imaging} can be extended to handle noisy measurements~\citep{chen2021robust} using Stein's unbiased risk estimator~\citep{stein1981estimation}.


\section{Experiments} \label{sec:experiments}
We perform a series of numerical experiments  to illustrate the theoretical bounds presented in \Cref{sec:multiple ops,sec: group learning}. 

\subsection{Low-Dimensional Subspace Model}

We consider the problem of learning a $k$-dimensional subspace model from incomplete observations, where the signals $x_i$ are generated using a standard Gaussian distribution on the subspace. 
We first evaluate the case where each observation
$y_i$ is obtained by randomly choosing one out of $\ntransf$ operators $A_1,\dots,A_{\ntransf}\in\R{m \times n}$, each composed of iid Gaussian entries of mean 0 and standard deviation $n^{-1/2}$. In order to recover the signal matrix $X = [x_1,\dots,x_N]$, we solve the following matrix completion problem 
\begin{align}\label{eq:matrix comp}
    \arg\min_{X} \;&\|X\|_{*} \\
    \text{s.t. } A_{g_i}x_i &= y_i \quad \forall i=1,\dots,N \nonumber
\end{align}
where $\|\cdot\|_{*}$ denotes the nuclear norm. A recovery is considered successful if $\frac{\sum_i\|\hat{x}_i-x_i\|^{2}}{\sum_i\|x_i\|^{2}}<10^{-1}$, where $\hat{x}_i$ is the estimated signal for the $i$th sample. We use a standard matrix completion algorithm~\citep{cai2010singular} to solve~\Cref{eq:matrix comp}. The ambient dimension is fixed at $n=50$, and the experiment is repeated for $k=1,10,40$. For each experiment we set $N=150k$ in order to have enough samples to estimate the subspaces. \Cref{fig:multG_results} shows the probability of recovery over $25$ Monte Carlo trials for different numbers of measurements $m$ and operators $\ntransf$. The reconstruction probability exhibits a sharp transition which follows the generic sufficient condition $m>k+n/\ntransf$ of \Cref{theo: multiple op}. 

 \begin{figure}[t]
\centering
\includegraphics[width=\textwidth]{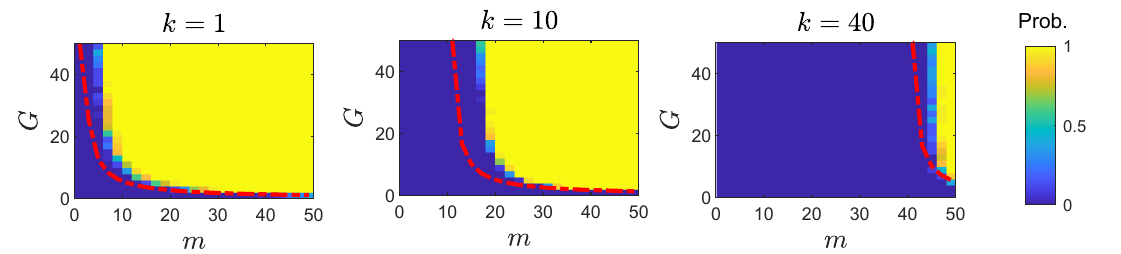}
\caption{Reconstruction probability of a $k$-dimensional subspace using incomplete measurements arising from $\ntransf$ independent operators for different $k$. The curve in red shows the bound of \Cref{theo: multiple op}, $m>k+n/\ntransf$.}
\label{fig:multG_results}
\end{figure}

Second, we perform the same experiment using $\group$-related operators $A_g = AT_g$.  \Cref{fig:results shifts} shows the probability of correct recovery for the group of shifts, where $\max_j  c_j/s_j=1$, whereas \Cref{fig:results ref} shows the recovery probability for the reflection group where $\max_j  c_j/s_j=n/2$. In both cases, the transition in probability of recovery follows $m \approx k + \max c_j/s_j$, i.e., requiring approximately $k$ less measurements than the sufficient condition of~\Cref{theo: one op}. As discussed in~\Cref{subsec: lowdim G models}, it is likely that these additional $k$ measurements are only required in worst-case scenarios such as~\Cref{ex: worst case ref}.

\begin{figure}[t]
     \centering
 \begin{subfigure}{0.4\textwidth}
     \centering
     \includegraphics[width=\textwidth]{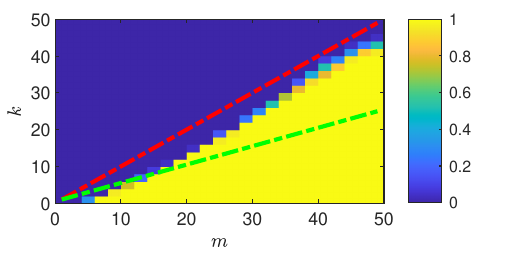}
     \caption{Shift}
     \label{fig:results shifts}
 \end{subfigure}
 \begin{subfigure}{0.4\textwidth}
     \centering
     \includegraphics[width=\textwidth]{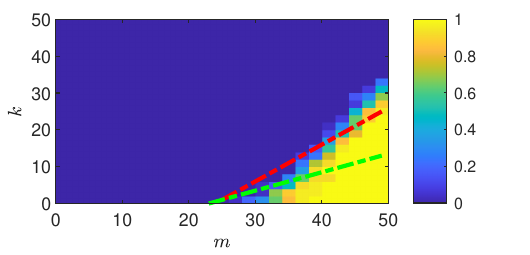}
     \caption{Reflection}
     \label{fig:results ref}
 \end{subfigure}
\caption{Reconstruction probability of a $k$-dimensional subspace using incomplete measurements arising from (a) shifted and (b) reflected operators $AT_g$. The bounds $m>k+\max_jc_j/s_j + 1$ and $m>2k+\max_jc_j/s_j + 1$ are plotted in the red and green dashed lines respectively.}
\end{figure}

\subsection{Deep Networks}

\subsubsection{MNIST via Multiple Operators}

We next consider the problem of directly learning the reconstruction function $f:(g,y)\mapsto x$ using measurements from  multiple compressed sensing operators, as described in~\Cref{subsec: learning to invert}. We train a network that aims to achieve data consistency for all the training data  via the unsupervised loss in~\Cref{eq:multop loss}.
We use the standard MNIST dataset which has an approximate box-counting dimension $k=12$~\citep{hein2005intrinsic}. The dataset contains $N=60000$ training samples, and these are partitioned such that $N/\ntransf$ different samples are observed via each operator. The entries of the forward operators are sampled from a Gaussian distribution with zero mean and variance $n^{-1}$. The test set consists of $10000$ samples, which are also randomly divided into $\ntransf$ parts, one per operator. We define $f(g,y) = \phi\circ A_g^{\dagger}$ where $\phi:\mathbb{R}^{n}\mapsto\mathbb{R}^{n}$ is a trainable network whose aim is to map $A_{g}^{\dagger}y$ to the signal set~$\signalset$. The networks are trained using the Adam optimizer.

 \begin{figure}[t]
\centering
\includegraphics[width=.6\columnwidth]{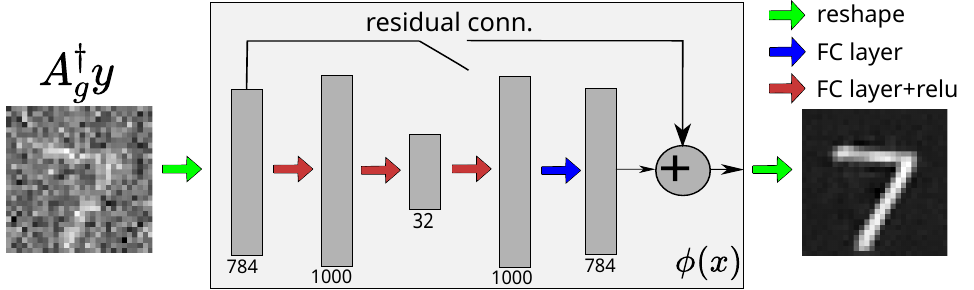}
\caption{Schematic of the fully connected autoencoder used in the MNIST experiments. Number of nodes are indicated at the bottom of each layer.}
\label{fig:network}
\end{figure}

\begin{table}[]
\centering
\begin{tabular}{|l|c|c|c|}
\hline
 & $\phi=$Id & $\phi$ w. residual & $\phi$ w/o residual \\ \hline
PSNR & 10.02 & 10.32 & 21.36  \\ \hline
\end{tabular}
\caption{Average PSNR in dB achieved by a residual network and a non-residual network for the MNIST reconstruction task with $\ntransf=40$ operators, each with $m=100$ measurements.}
\label{tab:residual}
\end{table}

When evaluating the ability of neural networks to perform unsupervised learning from incomplete data it is necessary to ensure that we are not just observing the inductive bias of the network, which has been shown can provide a powerful image model without any training~\citep{ulyanov2018deep,tachella2021nonlocal}. In order to minimize the impact of the inductive bias of the networks' architecture, we use fully connected layers which do not exploit any spatial image prior. It is easy to see that a valid solution to the optimization problem in~\cref{eq:multop loss} is just the identity $\phi(x) = x$.  This reflects the fact from the theory that in general the problem is not solvable and that we need to impose a low-dimensional prior on the signal. To avoid learning the identity, we require that $\phi(x)$ exploits the low-dimensionality of the signal set, so we use an autoencoder architecture with 3 hidden layers with 1000, 32 and 1000 neurons, as shown in~\Cref{fig:network}. We use relu non-linearities between layers, except at the output of the last layer.

\begin{figure}[t]
\centering
\begin{subfigure}{0.43\textwidth}
     \centering
\includegraphics[width=1\columnwidth]{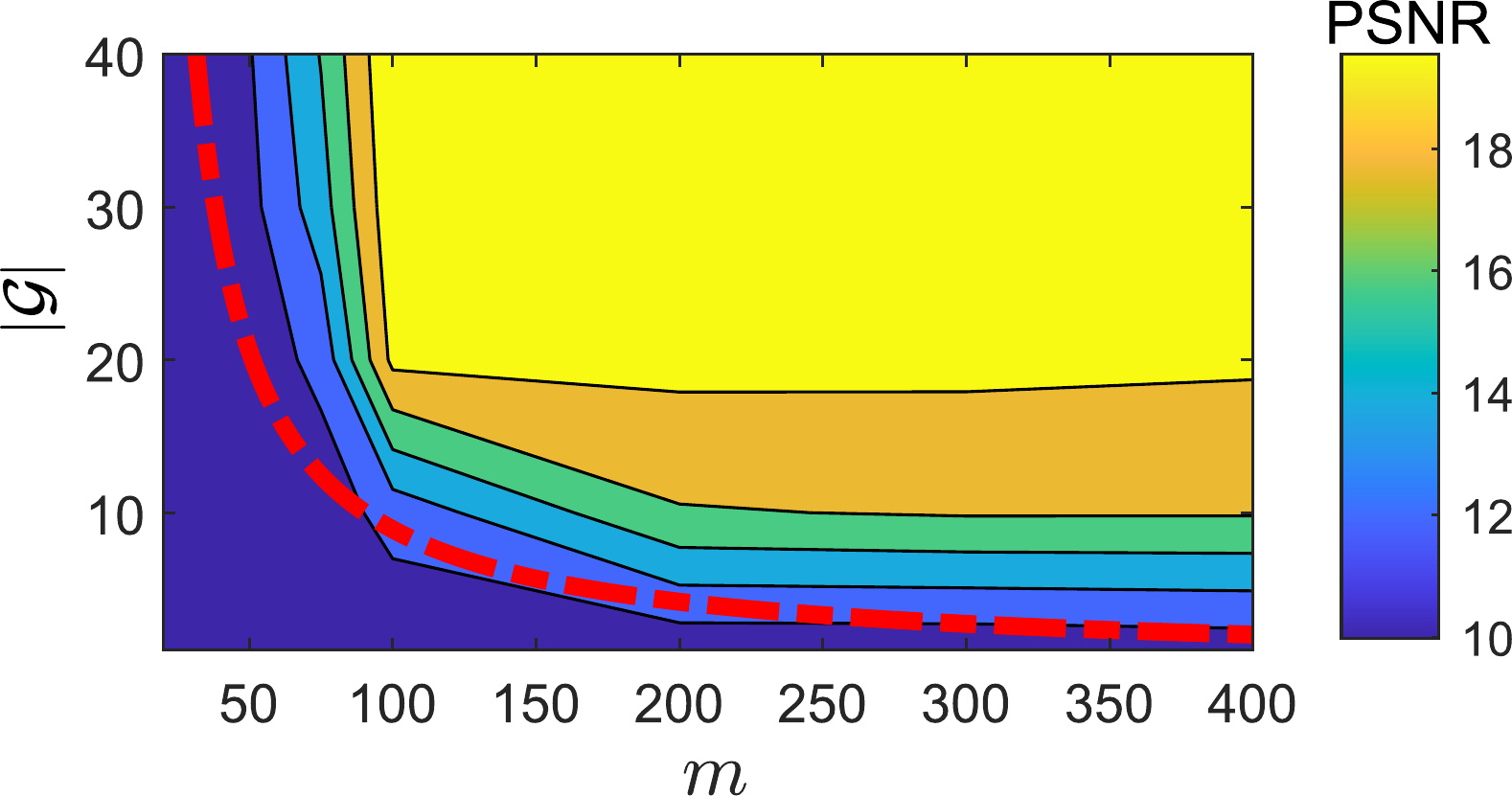}
\caption{}
\label{fig:mnist_cs_results}
\end{subfigure}
\hfill
\begin{subfigure}{0.49\textwidth}
\centering
\includegraphics[width=1\columnwidth]{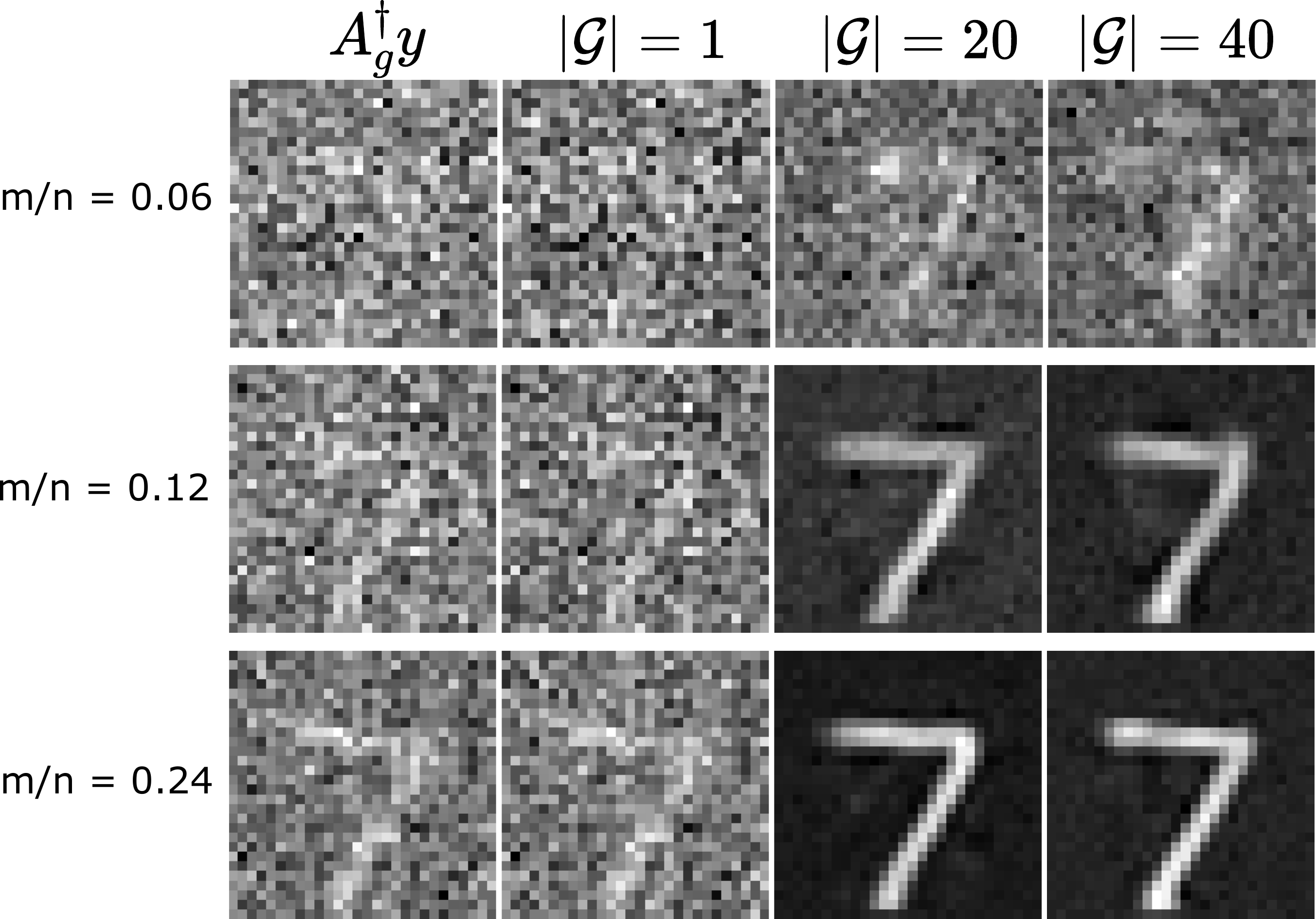}
\caption{}
\label{fig:results_cs}
\end{subfigure}
\caption{(a) Average test PSNR and (b) reconstructed images for the MNIST dataset, for different number of random Gaussian operators and measurements per operator. The curve in red shows the sufficient condition of \Cref{theo: multiple op}, $m>k+n/\ntransf$.}
\end{figure}

\begin{figure}[t]
\centering
\begin{subfigure}{0.45\textwidth}
     \centering
\includegraphics[width=1\columnwidth]{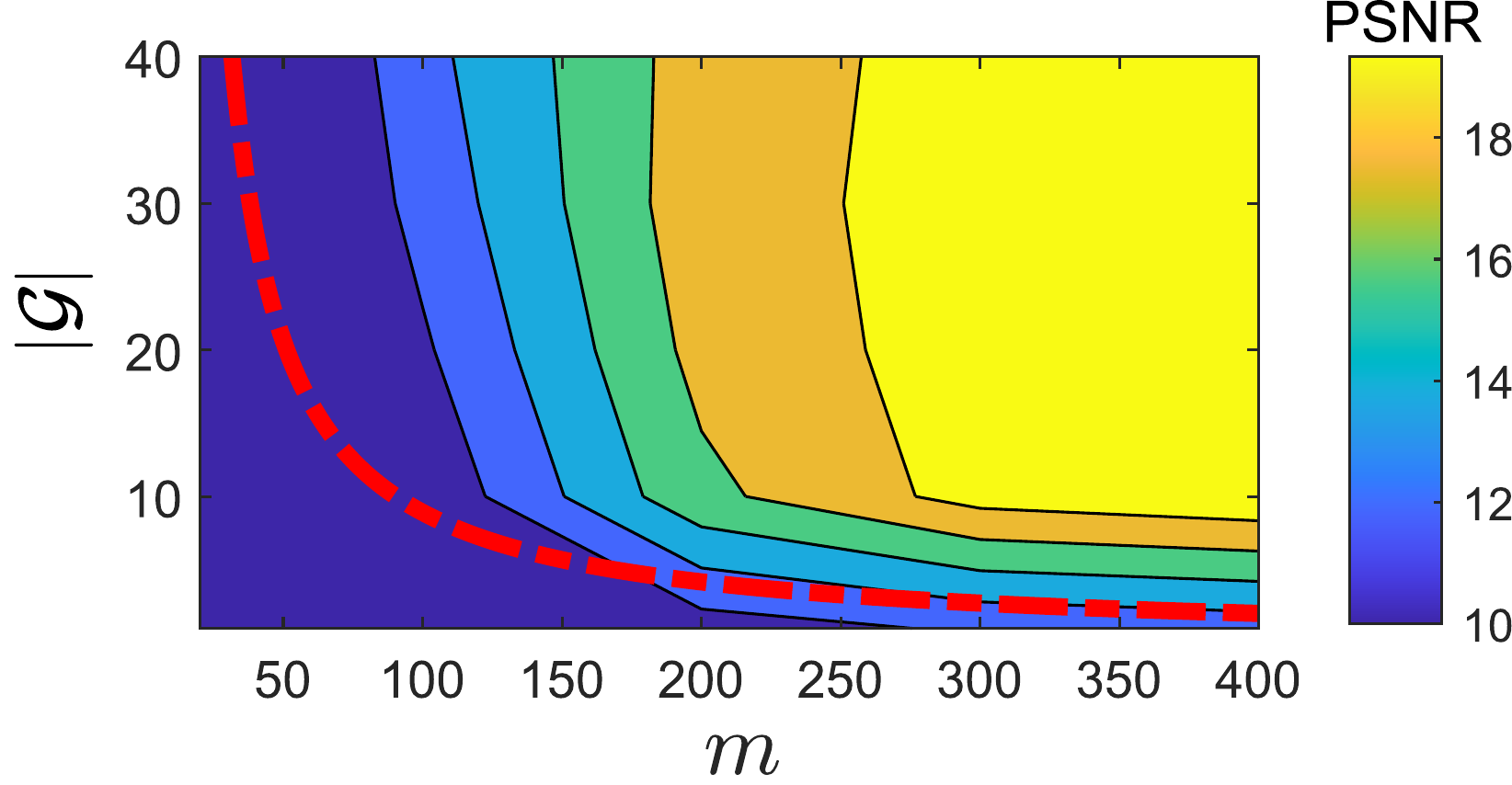}
\caption{}
\label{fig:mnist_ipt_results}
\end{subfigure}
\hfill
\begin{subfigure}{0.49\textwidth}
\centering
\includegraphics[width=1\columnwidth]{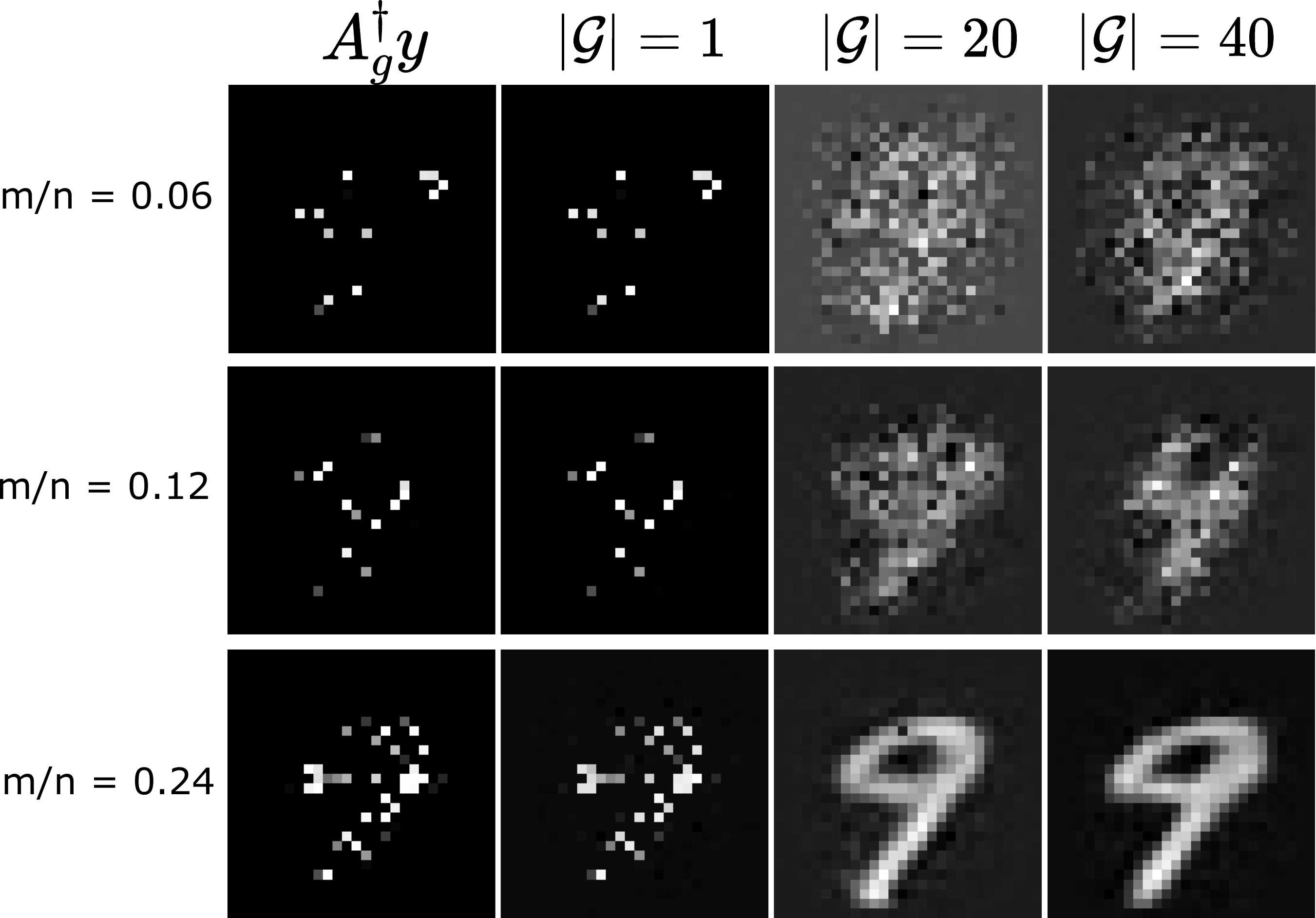}
\caption{}
\label{fig:results_ipt}
\end{subfigure}
\caption{(a) Average test PSNR and (b) reconstructed images for the MNIST dataset, for different number of random inpainting operators and measurements per operator. The curve in red shows the necessary condition of \Cref{theo: multiple op}, $m>k+n/\ntransf$.}
\end{figure}

\Cref{tab:residual} shows the performance for the case of $\ntransf=40$ operators with $m=100$ measurements each for $\phi(x)$ being (i) the identity, (ii) the autoencoder with a residual connection which allows it to learn the identity mapping and (iii) the autoencoder with no residual connection which enforces a low-dimensional representation. The network with residual connection fails to learn, obtaining a similar performance to the simple pseudo-inverse, whereas the autoencoder without residual obtains a significant improvement of more than $10$ dB. This is in line with our theory where we have seen that a requirement for unsupervised learning from incomplete measurements is that the model is low-dimensional. We next explore the ability of the autoencoder without residual connection to perform unsupervised learning as a function of the number of operators $\ntransf$ and measurements $m$.

\Cref{fig:mnist_cs_results} shows the average test peak-signal-to-noise ratio (PSNR) achieved by the trained model for $\ntransf=1,10,20,30,40$ and $m=1,100,200,300,400$. The results follow closely the bound presented in~\Cref{subsec: low-dim models} which is indicated by the red dashed line, as the network is  only able to  learn the reconstruction mapping when the sufficient condition $m>k+n/\ntransf$ is verified. In \JT{sensing} regimes below this condition, the performance is similar to simply applying the pseudo-inverse $A_g^{\dagger}$. \Cref{fig:results_cs} shows examples of reconstructed images for networks trained with different number of operators and measurements.

Second, we replace Gaussian operators for $\ntransf$ different random inpainting masks. The inpainting operators have a diagonal structure which has zero measure in $\mathbb{R}^{m\times n}$, however our sufficient condition still provides a reasonable lower bound on predicting the performance, as shown in~\Cref{fig:mnist_ipt_results}. It is likely that due to the coherence between measurement operators and images (both operators and MNIST images are sparse), more measurements are required to obtain good reconstructions than in the case of Gaussian operators. \Cref{fig:results_ipt} shows examples of reconstructed images for different number of operators and measurements.

\subsubsection{$\group$-Invariant MNIST}
Lastly, we consider the problem of learning a $\group$-invariant version of the MNIST dataset from incomplete measurements using deep convolutional neural networks. In this setting, the goal is to learn the reconstruction function $x_i=f(y_i)$ where $f$ is a trainable network. In order to learn from measurement data alone, we leverage the equivariant imaging approach of~\Cref{subsec: learning to invert}.
This technique has been very successfully applied to large scale CT and MRI imaging problems~\citep{chen2021equivariant,chen2021robust}. Here we demonstrate on a simple example that it also respects the measurement bounds presented in this paper. As in~\citep{chen2021equivariant,chen2021robust}, we compare 4 learning algorithms:
\begin{itemize}
    \item \textbf{Pseudo-inverse}: Reconstruction by applying the pseudo-inverse to the observed measurements $y_i=A^{\dagger}x_i$ (baseline of no learning).
    \item \textbf{Supervised}: Standard training using ground truth pairs $\{(x_i,y_i)\}_{i=1}^{N}$.
    \item \textbf{Unsupervised}: Training from incomplete measurements $\{y_i\}_{i=1}^{N}$ alone without enforcing $\group$-invariance of the reconstructed signal set.
    \item \textbf{Unsupervised equivariant}: Training from incomplete measurements $\{y_i\}_{i=1}^{N}$ alone. In order to enforce $\group$-invariance of the reconstructed signal set, we use the equivariant training loss of~\Cref{eq:equiv imaging}.  
\end{itemize}
In order to generate a $\group$-invariant dataset, we augment the standard MNIST dataset by applying the  transformations associated with the group action. In all cases, we use a single forward operator $A$ with iid Gaussian entries, and the U-Net network of~\citep{chen2021equivariant}. 
\begin{figure}[t]
     \centering
 \begin{subfigure}{0.42\textwidth}
     \centering
     \includegraphics[width=\textwidth]{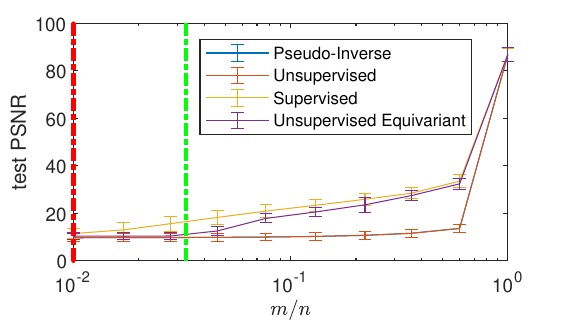}
     \caption{}
 \end{subfigure}
 \hfill
 \begin{subfigure}{0.49\textwidth}
     \centering
     \includegraphics[width=\textwidth]{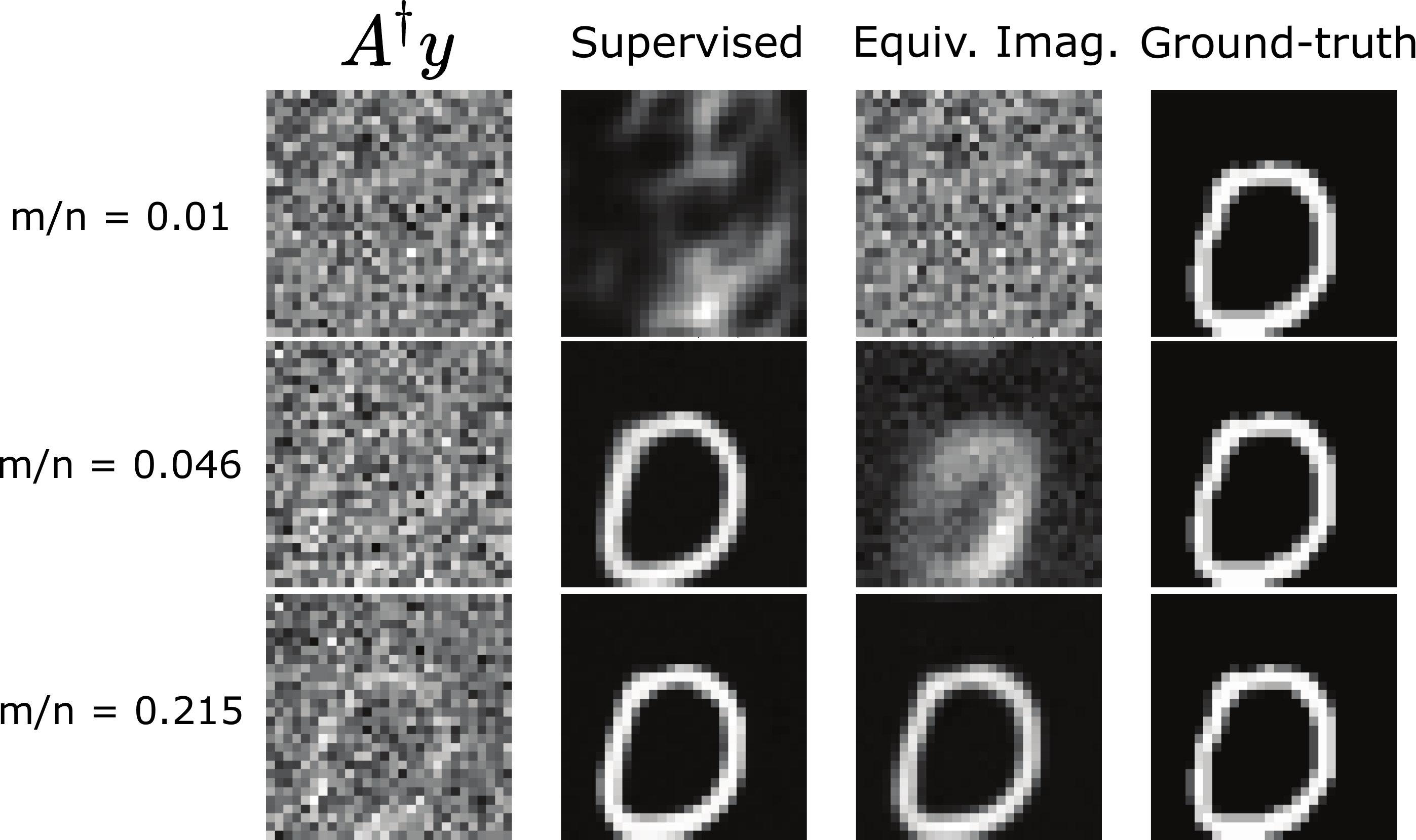}
     \caption{}
 \end{subfigure}
\caption{Shift-invariant MNIST experiments. (a) Average test PSNR  and (b) reconstructions obtained by the competing methods for different compression ratios. The unsupervised method with no equivariance and pseudo-inverse perform similarly.  \red{The green and red dash-dotted lines show the necessary and sufficient model identifiability conditions, respectively.}} 
\label{fig:mnist_shifts}
\end{figure}

First, we  evaluate the group of horizontal and vertical shifts. \red{This is an abelian group with irreducibles of dimension $s_j=1$, however it is not cyclic. As $\max_j  c_j/s_j=1$ for this group action, assuming Conjecture~\ref{conj: all groups} holds,} we expect that both supervised and unsupervised methods are able to reconstruct as long as $m>2k+1$. \Cref{fig:mnist_shifts} shows the reconstructed images and test peak-to-signal ratio (PSNR) for different compression ratios $m/n$. The unsupervised method with no equivariance fails to learn for all $m$, converging to the linear pseudo-inverse.
When the number of measurements is not enough for reconstruction, i.e., $m/n< 2k/n \approx 0.03$, both supervised and unsupervised methods fail. For $m/n>0.07$, the unsupervised equivariant method is able to perform as well as the fully supervised one as expected. There is an intermediate regime $0.03<m/n<0.07$ where the supervised method performs better than the unsupervised equivariant. While there are enough measurements for model uniqueness, this discrepancy might be attributed to not having enough measurements for stable model identification.

\begin{figure}[t]
     \centering
 \begin{subfigure}{0.45\textwidth}
     \centering
     \includegraphics[width=\textwidth]{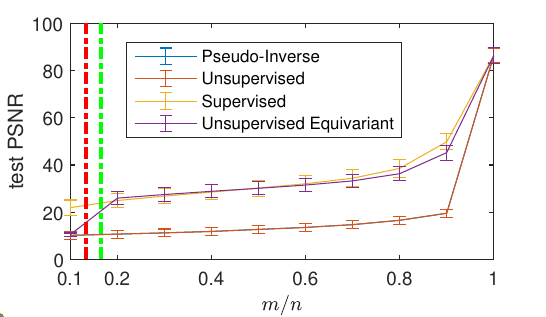}
     \caption{}
 \end{subfigure}
 \hfill
 \begin{subfigure}{0.49\textwidth}
     \centering
     \includegraphics[width=\textwidth]{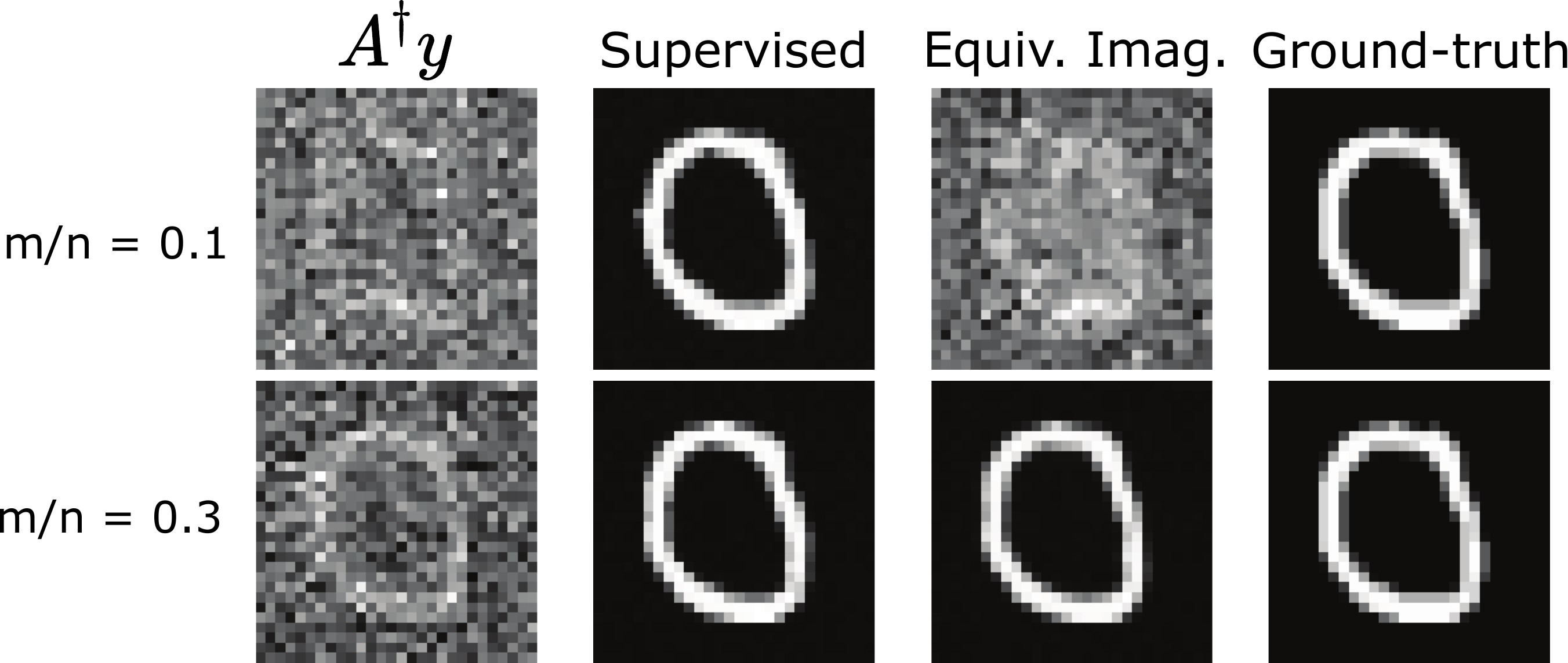}
     \caption{}
 \end{subfigure}
\caption{Reflection-invariant MNIST experiments.  (a) Average test PSNR  and (b) reconstructions obtained by the competing methods for different compression ratios. The unsupervised method with no equivariance and pseudo-inverse perform similarly. \red{The green and red dash-dotted lines show the necessary and sufficient model identifiability conditions, respectively.} }
\label{fig:mnist_refl}
\end{figure}

Secondly, we evaluate the group of 90 degree rotations and reflection, which has $\ntransf=8$ elements. This is a not a cyclic group (nor abelian) which is not covered by~\Cref{theo: one op}. However, we can obtain a bound from~\Cref{theo: one op} by considering the cyclic subgroup of 90 degree rotations. The full group action has $\max_j  c_j/s_j = 0.134 n$, whereas the largest cyclic subgroup  has $\max_j  c_j/s_j = 0.25 n$.  In both cases, the unsupervised equivariant method cannot achieve the same performance as the supervised one for small $m$. \Cref{fig:mnist_refl} shows the reconstructed images and test PSNR for different compression ratios $m/n$. As expected, the unsupervised method with no equivariance fails to learn for all $m$, converging to the linear pseudo-inverse.
For $m\geq 0.2 n$, both supervised and  unsupervised  equivariant perform equally well. This result is aligned with Conjecture~\ref{conj: all groups}, as the full group action offers a tighter sufficient condition of $m>0.16 n$, in comparison to the condition associated with the largest cyclic subgroup, $m>0.28n$. 

\section{Conclusions and Future Work}
We have presented fundamental bounds for learning models from incomplete measurements, either by using multiple measurement operators, or by exploiting a group symmetries of the signal set. Our bounds characterize the interplay between the fundamental properties of the inverse problem: the ambient dimension, the signal model dimension, and the number of measurement operators or symmetry of the model. Moreover, the bounds are agnostic of the learning algorithms and provide useful necessary and sufficient conditions for designing principled sensing strategies.

\JT{An interesting avenue of research is to extend the bounds in this paper to more general classes of signal models, including compressible  distributions~\citep{gribonval2012compressible}, e.g., Bernoulli-Gaussian or heavy tailed models, and other approximately low-dimensional models, e.g., models whose support is close to a low-dimensional set.}
We leave the study of robustness to noise as well as the extension of the theory for future work. 
Another interesting avenue of research is to extend our results to more general semigroups, whose elements do not necessarily have an inverse, but can be useful to capture certain symmetries in the signal model such as invariance to scale~\citep{florack1994linear, worrall2019deep}.

\section{Acknowledgements}
This work is supported by the ERC C-SENSE project (ERCADG-2015-694888).


\bibliographystyle{plainnat}
\bibliography{bibliography}

\appendix
\section*{Appendix A.}
\label{app:theorem}

\section{Proofs Preliminaries} \label{app: proof prelim} \label{sec: main proof}
Let $M$ be a matrix in $\C{m\times n}$ and $x$ be a vector in $\C{n}$. In the proofs, we will often use the `vec trick'
\begin{equation} \label{eq:vectrick}
    M x = (x^{\top} \otimes I_m)  \vect{M}
\end{equation}
where $\otimes$ denotes the Kronecker product, $I_m$ is an $m \times m$ identity matrix and $\vect{M}\in \C{mn}$ is the column-wise vectorization of the matrix $M$. 
We will also use the following result by~\citep{sauer1991embedology}:

\begin{lemma} [Lemmas 4.5 and 4.6 in \citep{sauer1991embedology}] \label{lemma:sauer}

Let $S$ be a bounded subset of $\R{n}$, and let $G_0, G_1,\dots,G_t$ be Lipschitz maps from $S$ to $\R{m}$. For each integer $r\geq0$, let $S_r$ be the subset of $z\in S$ such that the rank of the $m\times t$ matrix 
\begin{equation}
    \Phi_z = [G_1(z),\dots,G_t(z)]
\end{equation}
is $r$, and let $\bdim{S_r} = k_r$. For each $\alpha\in\R{t}$ define $G_{\alpha}(z)=G_0+\sum_{i=1}^{t}\alpha_i G_i(z)=G_0 + \Phi_z\alpha $. 
If for all integers $r\geq 0$ we have that $r > k_r$, then $G_{\alpha}^{-1}(0)$ is empty for almost every $\alpha\in \R{t}$.
\end{lemma}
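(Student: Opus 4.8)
The plan is to show directly that the exceptional parameter set
\[
B=\{\alpha\in\R{t}:G_\alpha^{-1}(0)\neq\emptyset\}=\{\alpha\in\R{t}:\exists\,z\in S,\ G_0(z)+\Phi_z\alpha=0\}
\]
has Lebesgue measure zero. The first step is to stratify $S$ by the rank of $\Phi_z$, writing $S=\bigcup_{r=0}^{\min(m,t)}S_r$ and accordingly $B=\bigcup_r B_r$ with $B_r=\{\alpha:\exists\,z\in S_r,\ G_0(z)+\Phi_z\alpha=0\}$. Since the rank of an $m\times t$ matrix ranges over a finite set, this is a finite union, so it suffices to show each $B_r$ is null. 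For $r=0$ I note that the hypothesis $0>k_0=\bdim{S_0}$ forces $S_0=\emptyset$ (box dimension is nonnegative on any nonempty bounded set), hence $B_0=\emptyset$; the real work lies in $r\geq1$.

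Fix $r\geq1$. The geometric heart of the argument is that, for $z\in S_r$, whenever $\Phi_z\alpha=-G_0(z)$ is solvable its solution set is the affine subspace $-\Phi_z^{\dagger}G_0(z)+\ker\Phi_z$ of dimension $t-r$, and its component in the $r$-dimensional row space $\mathcal{R}(\Phi_z^{\top})$ is pinned to the single point $c(z)=-\Phi_z^{\dagger}G_0(z)$. Thus each admissible $z$ contributes a set of parameters of codimension $r$ in $\R{t}$, while choosing $z$ carries only $k_r$ degrees of freedom; the hypothesis $r>k_r$ is precisely the statement $k_r+(t-r)<t$ that these two dimension counts cannot conspire to fill $\R{t}$.

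To convert this count into a measure bound I would use a covering argument, first restricting to a ball $\{\|\alpha\|\le R\}$ and later taking a union over $R\in\mathbb{N}$. Cover $S_r$ by $N(S_r,\epsilon)\le\epsilon^{-(k_r+\delta)}$ balls of radius $\epsilon$ centred at points $z_j\in S_r$. For any admissible $\alpha$ solving for some $z\in B(z_j,\epsilon)\cap S_r$, the Lipschitz bounds on $G_0,\dots,G_t$ give $\Phi_{z_j}\alpha+G_0(z_j)=e$ with $\|e\|=\order{(1+R)\epsilon}$; projecting onto $\mathcal{R}(\Phi_{z_j}^{\top})$ yields $\|P_{\mathcal{R}(\Phi_{z_j}^{\top})}(\alpha-c(z_j))\|\le\|\Phi_{z_j}^{\dagger}\|\,\|e\|$. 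Hence such $\alpha$ lie in a slab that is free (radius $R$) in $t-r$ directions and of width $\order{\|\Phi_{z_j}^{\dagger}\|\epsilon}$ in the remaining $r$ directions, of Lebesgue measure $\order{\|\Phi_{z_j}^{\dagger}\|^{r}\epsilon^{r}R^{t-r}}$. Summing over the $\epsilon^{-(k_r+\delta)}$ balls and choosing $\delta<r-k_r$ makes the total bound scale like $\epsilon^{\,r-k_r-\delta}\to0$.

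The main obstacle is that $\|\Phi_z^{\dagger}\|=1/\sigma_r(\Phi_z)$ is not uniformly bounded on $S_r$: although the rank is exactly $r$, the smallest nonzero singular value $\sigma_r(\Phi_z)$ may approach $0$, so the slab width can blow up and the naive sum need not converge. I would resolve this by a dyadic decomposition $S_{r,l}=\{z\in S_r:2^{-l}\le\sigma_r(\Phi_z)<2^{-l+1}\}$, where $l$ is bounded below since $S$ is bounded and the $G_i$ Lipschitz (so $\sigma_r(\Phi_z)$ is uniformly bounded above), giving countably many pieces. Monotonicity of box dimension gives $\bdim{S_{r,l}}\le k_r$, so the covering bound applies verbatim on each piece with $\|\Phi_z^{\dagger}\|\le2^{l}$, yielding measure $\order{2^{lr}R^{t}\epsilon^{\,r-k_r-\delta}}\to0$; hence each $B_{r,l}\cap\{\|\alpha\|\le R\}$ is null. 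A countable union over $l$ and $R$, together with the finite union over $r$, then shows that $B$ is Lebesgue-null, which is the assertion that $G_\alpha^{-1}(0)$ is empty for almost every $\alpha$.
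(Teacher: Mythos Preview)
Your argument is correct and follows essentially the same covering strategy the paper sketches (itself a summary of Sauer et al.): stratify by rank, cover $S_r$ by $\mathcal{O}(\epsilon^{-k_r})$ balls, observe that near each center the admissible $\alpha$ lie in a slab of $\mathcal{O}(\epsilon^r)$ measure, and use $r>k_r$ to send the total to zero. Your treatment is in fact more careful than the paper's sketch: you explicitly localize to $\|\alpha\|\le R$ and add a dyadic decomposition in $\sigma_r(\Phi_z)$ to control the blow-up of $\|\Phi_z^{\dagger}\|$, a genuine technical point the sketch glosses over.
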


The proof of this result follows standard covering arguments and may be sketched as follows. From the dimensionality assumption, the set $S_r$ can be essentially covered by  $\mathcal{O}(\epsilon^{-k_r})$ $\epsilon$-balls. Furthermore, for any $z \in S_r$, the probability (measured w.r.t. $\alpha \in \R{t}$) that $G_{\alpha}(z)$ maps to the neighborhood of $0$ scales as $\epsilon^r$. Hence the probability of this happening for any of the points in the cover scales as $\epsilon^{r-k_r}$. If we take $r> k_r$ then the probability of such an event tends to zero as we shrink $\epsilon$. Full details can be found in the proofs in~\citep{sauer1991embedology}.

\section{Proof of~\Cref{theo: multiple op}} \label{app: multop proof}
\begin{proof}
In order to have model uniqueness, we require that the inferred signal set $\hat{\signalset}$ defined in \Cref{eq:inferred set} equals the true set $\signalset$, or equivalently that their difference 
\begin{equation} \label{eq:diff inf multop}
    \hat{\signalset}\setminus{\signalset} = \{ v\in \R{n} \setminus{\signalset} | \;  A_1 (x_1 - v) = \dots =  
    A_{\ntransf} (x_{\ntransf} -v) = 0, \; x_1,\dots,x_{\ntransf}\in \signalset \}
\end{equation}
is empty, where $\setminus{}$ denotes set difference. Let $S\subset \R{n(\ntransf+1)}$ be the set of all vectors $z=[v,x_1,\dots,x_{\ntransf}]^{\top}$ with  $v\in \R{n}\setminus{\signalset}$ and $x_1,\dots,x_{\ntransf}\in \signalset$. The difference set defined in \Cref{eq:diff inf multop} is empty if and only if for any $z\in S$  we have
\begin{align}  \label{eq:multA}
  \underbrace{\begin{bmatrix}
  - A_1 & A_{1} &  & \\ 
   \vdots & & \ddots & \\ 
  - A_{\ntransf}& &  & A_{\ntransf}
 \end{bmatrix}}_{G_\alpha\in\R{m\ntransf \times n(\ntransf+1) }}
  \underbrace{\begin{bmatrix}
  v \\
   x_{1}\\
   \vdots \\ 
    x_{\ntransf}\\
 \end{bmatrix}}_{z\in S \subset \R{n(\ntransf+1)}} &\neq 0 \\
 G_{\alpha}(z) &\neq 0
\end{align}
where $G_\alpha$ maps $z\in S\subset \R{n(\ntransf+1)}$ to  $\R{m\ntransf}$. Let $\alpha = [\vect{A_1}^{\top},\dots,\vect{A_{\ntransf}}^{\top}]^{\top}\in \R{mn\ntransf}$, then as a function of $\alpha$ we can also write \Cref{eq:multA} as 
 \begin{equation} \label{eq: alpha mult}
  \begin{bmatrix}
   (x_1-v)^{\top} \otimes I_{m} &  & \\ 
   & \ddots & \\ 
   &  & (x_{\ntransf}-v)^{\top} \otimes I_{m}
 \end{bmatrix} \alpha \neq 0
\end{equation}
where we used the `vec trick' in~\cref{eq:vectrick}. As $v$ does not belong to the signal set, the matrix on the left hand side of \Cref{eq: alpha mult} has rank $m\ntransf$ for all  $z\in S$. We treat the cases of bounded and conic signal sets separately, showing in both cases that, for almost every $\alpha\in \R{mn\ntransf}$, the condition in \Cref{eq: alpha mult} holds for all $z\in S$ if $m>k+n/\ntransf$:


\begin{description}
    \item[Bounded signal set] Decompose $S$ into a countable union of bounded subsets $S=\bigcup_{\rho\geq1}S_\rho$ defined as
    \begin{equation}
      S_\rho =  \{ z \in \R{n(\ntransf+1)} \; | z = [v^{\top},x_1^{\top},\dots,x_{\ntransf}^{\top} ]^{\top}, x_1,\dots,x_{\ntransf}\in \signalset,\; \|v\|_2\leq \rho \}.
    \end{equation}
    where $\bdim{S_\rho}\leq k\ntransf+n$. Thus, Lemma~\ref{lemma:sauer} states that for almost every $\alpha$ all $z\in S$ verifies \Cref{eq: alpha mult} if $m>k+n/\ntransf$.
    
    \item[Conic signal set] If the signal set is conic, then $S$ is also conic. Let $B$ be a bounded set $B$ containing an open neighbourhood of $0$. As $\bdim{S\cap B}\leq \ntransf k+n$, Lemma~\ref{lemma:sauer} states that for almost every $\alpha$, all $z\in S$ verifies \Cref{eq: alpha mult} as long as $m> k + n\ntransf$. 
\end{description}
\end{proof}

\section{Proofs for $\group$-invariant models} \label{app: group proofs}
We next present a lemma which will be useful for the proofs of Proposition~\ref{prop: necessary G structure} and~\Cref{theo: one op}.

\begin{lemma} \label{lem: block-diagonal orbit}
Let  $v\in\C{n}$ and let the decomposition of  $v$ into the $J$ invariant subspaces of a $\group$-action can be written as:
\begin{equation}
    v = \begin{bmatrix}
     v_{1} \\
     \vdots \\
     v_{J}
    \end{bmatrix} \in \C{n}
    \text{ where }
    v_j = \begin{bmatrix}
     v_j^{1} \\
     \vdots \\
     v_j^{c_j}
    \end{bmatrix} \in \C{s_jc_j}
    .
\end{equation}
where $v_{j}^{\ell} \in \C{s_j}$ corresponds to the $\ell$th copy out of $c_j$ multiplicities of the $j$th invariant subspace. \red{We have
\begin{equation}
  \frac{1}{\ntransf}\sum_{g\in\group} T_g vv^{\top} T_g^{\top} = M_v M_v^{\top}
\end{equation}
with
\begin{equation} \label{eq: block_matrix}
 M_v =  \begin{bmatrix}
    [v_1^{1},v_1^{2},\dots,v_1^{c_1}]^{\top} \otimes I_{s_1}  & & \\
    & \ddots & \\
    & & [v_J^{1},v_J^{2},\dots,v_J^{c_J}]^{\top} \otimes I_{s_J} \\
    \end{bmatrix} .
\end{equation}
The result also holds for infinite compact groups by replacing the sum over the group with an integral.}
\end{lemma}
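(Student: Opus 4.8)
The plan is to work in the canonical basis of \Cref{theo: irreps} and reduce the identity to the Schur orthogonality relations applied block by block. Since $\group$ is compact we may assume, after averaging an inner product (Weyl's unitarian trick), that the representation $g\mapsto T_g$ is orthogonal, so that $T_g^{\top}=T_g^{-1}$; hence $\frac{1}{\ntransf}\sum_{g}T_gvv^{\top}T_g^{\top}=\frac{1}{\ntransf}\sum_{g}(T_gv)(T_gv)^{\top}$, i.e.\ the empirical second moment of the orbit of $v$. Taking $F$ orthonormal and absorbing it (working in the block-diagonal frame $T_g=\Sigma_g$ of \Cref{theo: irreps}) and writing $v$ in the block form of the statement, the $(j,j')$ block of this matrix is $\frac{1}{\ntransf}\sum_g \Lambda_j(g)\, v_j v_{j'}^{\top}\,\Lambda_{j'}(g)^{\top}$, whose $(\ell,\ell')$ sub-block equals $\frac{1}{\ntransf}\sum_g \rho_j(g)\, v_j^{\ell}(v_{j'}^{\ell'})^{\top}\,\rho_{j'}(g)^{\top}$. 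One can equivalently vectorise and observe that the average is the orthogonal projection of $v\otimes v$ onto the $\group$-invariant subspace of the representation $g\mapsto T_g\otimes T_g$ on $\C{n^2}$, but the block-by-block computation is the most transparent route.

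First I would treat the off-diagonal blocks $j\neq j'$. Because $\rho_{j'}$ is orthogonal, $\rho_{j'}(g)^{\top}=\rho_{j'}(g^{-1})$, so the sub-block equals $\frac{1}{\ntransf}\sum_g \rho_j(g)\,B\,\rho_{j'}(g^{-1})$ with $B=v_j^{\ell}(v_{j'}^{\ell'})^{\top}$; a one-line reindexing ($hg\mapsto g'$) shows this matrix intertwines $\rho_{j'}$ and $\rho_j$, and since these are inequivalent irreducibles Schur's lemma forces it to vanish. Hence the orbit second moment is block diagonal with respect to the $J$ invariant subspaces, which already matches the block-diagonal shape of $M_vM_v^{\top}$. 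For the diagonal blocks $j=j'$, I would apply Schur's lemma again: $\frac{1}{\ntransf}\sum_g \rho_j(g)\,B\,\rho_j(g)^{-1}$ lies in the commutant of the irreducible $\rho_j$, hence is a scalar multiple of $I_{s_j}$, and identifying the scalar by taking traces produces a quantity determined by $(v_j^{\ell})^{\top}v_j^{\ell'}$. Collecting these sub-blocks over $\ell,\ell'$ shows the $j$th diagonal block is $\bigl([v_j^{1},\dots,v_j^{c_j}]^{\top}[v_j^{1},\dots,v_j^{c_j}]\bigr)\otimes I_{s_j}$, which is exactly the $j$th diagonal block of $M_vM_v^{\top}$ once $M_v$ is read off as in the statement. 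The infinite compact case is handled identically, replacing $\frac{1}{\ntransf}\sum_{g\in\group}$ by the normalised Haar integral and invoking the Peter--Weyl orthogonality relations; the decomposition of \Cref{theo: irreps} still applies since, restricted to the band-limited subspace $\R{n}$, only finitely many multiplicities $c_j$ are non-zero.

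The main obstacle here is bookkeeping rather than depth. The key point is that the intertwining and scalar-collapse arguments need the factor $\rho_{j'}(g)^{\top}$ to be turned into $\rho_{j'}(g)^{-1}$, which is legitimate precisely because the $T_g$, and therefore all the $\rho_j$, can be taken orthogonal; without that reduction neither the off-diagonal cancellation nor the $I_{s_j}$-proportionality of the diagonal blocks is automatic. A secondary point is to check that absorbing the change of basis $F$ and carrying the normalisation constants through Schur's relations is consistent with the precise form of $M_v$ claimed above (this is immediate for the cyclic/abelian groups relevant to \Cref{theo: one op}, where every $s_j=1$).
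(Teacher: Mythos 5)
Your proof is correct and follows essentially the same route as the paper's: both work in the canonical basis of \Cref{theo: irreps}, decompose $v$ into its invariant-subspace components, and reduce each $(j,j')$ block of the orbit second moment to the orthogonality of matrix coefficients of inequivalent irreducibles --- the paper invokes the Schur orthogonality relations directly (organised via the vec trick), whereas you rederive them from Schur's lemma plus a trace computation. The $1/s_j$ normalisation you flag for $s_j>1$ is a real discrepancy but a harmless one (the orthogonality relation as stated in the paper's proof elides the same factor), since the lemma is only ever used through the rank of $M_vM_v^{\top}$ and the factor disappears in the cyclic case $s_j=1$.
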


\begin{proof}
Without loss of generality, we assume that the linear representation of $\group$ is block-diagonalized in the canonical basis ($F$ in \Cref{theo: irreps} equals the identity). 
Using the `vec trick' in~\cref{eq:vectrick} and the decomposition of $v$ into invariant subspaces, \red{we have
\begin{align}
    T_gv &= 
     \begin{bmatrix}
    \rho_1(g)v_1^1
    \\ \vdots \\
    \rho_J(g)v_J^{c_J}
    \end{bmatrix}
    \\
 &=  \begin{bmatrix}
   \left( [v_1^{1},v_1^{2},\dots,v_1^{c_1}]^{\top} \otimes I_{s_1} \right)\text{vec}\rho_1(g) \\ \vdots \\
    \left( [v_J^{1},v_J^{2},\dots,v_J^{c_J}]^{\top} \otimes I_{s_J} \right)\text{vec}\rho_J(g)
    \end{bmatrix}
 \\ \label{eq:block decomp Tv}
 & = \begin{bmatrix}
  R_1\text{vec}\rho_1(g) \\ \vdots \\
   R_J\text{vec}\rho_J(g)
    \end{bmatrix}
\end{align}
where the last line defines $R_j := \left([v_j^{1},v_j^{2},\dots,v_1^{c_j}]^{\top} \otimes I_{s_j} \right) \in \C{s_jc_j\times s_j^2}$ for $j=1,\dots,J$. Using this decomposition, we can compute the  $(j,j')$ block of $\sum_{g\in\group} T_gvv^{\top}T_g^{\top}$ as 
\begin{align}
    \frac{1}{\ntransf}\sum_{g\in\group}  R_j \text{vec}\rho_{j}(g) \text{vec}\rho_{j'}(g)^{\top} R_{j'}^{\top} 
   &=  R_j  \left( \frac{1}{\ntransf}\sum_{g\in\group}\text{vec}\rho_{j}(g) \text{vec}\rho_{j'}(g)^{\top}  \right)R_{j'}^{\top}
\end{align}
where the middle term can be evaluated using the orthogonality relations of irreducible representations~\citep[Chapter~2]{serre1977linear}:
\begin{equation}
    \frac{1}{\ntransf} \sum_{g\in\group}\text{vec}\rho_{j}(g) \text{vec}\rho_{j'}(g)^{\top} = \begin{cases}
    I_{s_j^2} \quad \text{if} \quad j=j' \\
    0 \quad \text{ otherwise}
    \end{cases}.
\end{equation}
Considering all blocks, we have\footnote{For complex matrices $M$, $M^{\top}$ denotes the conjugate transpose.} $ \frac{1}{\ntransf}\sum_{g\in\group} T_g vv^{\top} T_g^{\top} = M_v M_v^{\top}$
with
\begin{equation}
 M_v =  \begin{bmatrix}
    R_1  & & \\
    & \ddots & \\
    & & R_J \\
    \end{bmatrix} .
\end{equation}}

\red{For infinite compact groups acting on $\C{n}$, we can obtain the same result by replacing the sum over the group by an integral~\citep[Chapter~4]{serre1977linear}, where the orthogonality relations are given by
\begin{equation}
    \int_{\group}\text{vec}\rho_{j}(g) \text{vec}\rho_{j'}(g)^{\top}dg = \begin{cases}
    I_{s_j^2} \quad \text{if} \quad j=j' \\
    0 \quad \text{ otherwise}
    \end{cases}.
\end{equation}}

\end{proof}

\textbf{Proof of Proposition~\ref{prop: necessary G structure}.}
\begin{proof}
\red{We begin with the case of finite groups, and then extend it to infinite (but compact) groups.} In order to have model uniqueness it is necessary that the matrix 
\begin{equation} \label{eq: ATs}
   M^{\top} = \begin{bmatrix}
     AT_1 \\ 
     \vdots \\
     AT_{\ntransf} 
    \end{bmatrix} 
\end{equation}
has rank $n$. This matrix contains the orbits of the measurement vectors $\{a_i\}_{i=1}^{m}$ (the rows of $A$). 
Letting $R= \frac{1}{\ntransf}MM^{\top}$ and using Lemma~\ref{lem: block-diagonal orbit}, we have  
\begin{align}\label{eq: tildeM}
 R &= \sum_{i=1}^{m}  \frac{1}{\ntransf}\sum_{g\in\group}  T_g a_i a_i^{\top}T_g^{\top}  \\
 &= \sum_{i=1}^{m}  M_{a_i} M_{a_i}^{\top} \\
 &=[M_{a_1}, \dots, M_{a_m} ] [M_{a_1}, \dots, M_{a_m} ]^{\top} \label{eq: last_tildeM}
\end{align}
where  $M_{a_i}\in \C{n \times \sum_j s_j^2}$ is the block-diagonal matrix in~\cref{eq: block_matrix} associated to $a_i$. Note that $\rk{M^{\top}} = \rk{R} = \rk{[M_{a_1}, \dots, M_{a_m} ]}$. Furthermore, due to the block-diagonal structure of the submatrices $M_{a_i}$, $[M_{a_1}, \dots, M_{a_m} ]$ can also be rearranged in block diagonal form with blocks of size $s_j c_j \times m s_j^2$. Thus, $M^{\top}$ has rank $n$ only if all the blocks verify $ms_j^2\geq s_jc_j$, which yields the bound
\begin{equation}
m\geq \max_j  c_j/s_j.
\end{equation} 
\red{If the group has infinite elements, the matrix in \cref{eq: ATs} is not well-defined as it would have infinite entries. However, as the decomposition into finite invariant subspaces and Lemma~\ref{lem: block-diagonal orbit} still hold, we can compute the dimension of the subspace spanned by $\{T_ga_i\}_{g\in\group, i=1,\dots,m}$ with \cref{eq: tildeM} by replacing the finite sum over group elements by an integral and obtain the bound $m\geq \max_j  c_j/s_j$.}

\end{proof}

We now present two useful technical lemmas  for proving~\Cref{theo: one op}. 

\begin{lemma} \label{lem: v-variety}
Let $T\in \R{n\times n}$ be the linear representation of the generator of a finite cyclic group of order $\ntransf$. Let $v \in \R{n}$ and  $B \in \R{n \times r+1}$ with $r+1\leq n$, such that $M = B-[Tv,\dots,T^{r+1}v]$ has rank $r$ and the first $r$ columns are linearly independent. The set $\Omega\subset \R{n}$ of $v$ verifying the rank assumption is contained in an affine variety of dimension at most $r+\sum_{j\in\mathcal{J}} c_j$ where $c_j$ denotes the multiplicity of the $j$th irreducible representation of the group action and $\mathcal{J}$ is a subset of $r$ out of $\ntransf$ irreducibles.
\end{lemma}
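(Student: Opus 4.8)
The plan is to exploit the cyclic structure by diagonalizing the generator $T$ over $\mathbb{C}$, and then to translate the rank-deficiency condition on $M = B - [Tv,\dots,T^{r+1}v]$ into a collection of polynomial (in fact, determinantal) equations in the coordinates of $v$. First I would decompose $\R{n}$ (tensored up to $\C{n}$ if needed) into the invariant subspaces $V_1,\dots,V_J$ of the cyclic action from \Cref{theo: irreps}; since the group is cyclic all $s_j=1$, so each $V_j$ has dimension $c_j$ and $T$ restricted to $V_j$ is multiplication by a root of unity $\omega_j$. Writing $v = \sum_j v_j$ with $v_j \in V_j \cong \C{c_j}$, the vector $T^\ell v$ has $j$-block equal to $\omega_j^\ell v_j$. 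Thus the matrix $[Tv,\dots,T^{r+1}v]$ has a block structure where the $j$-block is $v_j \otimes [\omega_j,\omega_j^2,\dots,\omega_j^{r+1}]$, i.e. a rank-one (in the $c_j$-versus-$(r+1)$ sense) outer product. This is the key structural simplification, analogous to the role of \Cref{lem: block-diagonal orbit} but now tracking the explicit Vandermonde-type dependence on the powers $\ell$.

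Next I would argue about the rank. The hypothesis says $M$ has rank exactly $r$ with its first $r$ columns independent; equivalently, the last column of $M$ lies in the span of the first $r$ columns, which is a set of $n-r$ polynomial equations (vanishing of all $(r+1)\times(r+1)$ minors of $M$, or more cleanly the condition that $\operatorname{rank}(M)\le r$ while the leading $r\times r$ block is full rank). The point is that the dependence of these equations on $v$ is mild: within each irreducible block the columns $T^\ell v_j = \omega_j^\ell v_j$ differ only by a scalar $\omega_j^\ell$, so the column span of the block contributed by $V_j$ is at most one-dimensional as a function of the $r+1$ column index, and the $v_j$ that actually matter are controlled by which $r$ of the $J$ blocks are "active" in the span of the first $r$ columns. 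This is where the index set $\mathcal{J}$ (a choice of $r$ out of the $\ntransf$ — equivalently $J$, for a cyclic group $J=\ntransf$ when the action is faithful — irreducibles) enters: for a fixed combinatorial pattern of which blocks participate, the locus of admissible $v$ is cut out inside a linear space of dimension $r + \sum_{j\in\mathcal{J}} c_j$, and taking the union over the finitely many patterns (then bounding by the largest) gives the stated affine variety bound. I would organize this as: fix $\mathcal{J}$, show $\Omega_{\mathcal{J}}$ is contained in a linear (affine) subspace of the claimed dimension by showing the coordinates $v_j$ for $j\notin\mathcal{J}$ are determined (or forced to satisfy linear relations reducing their freedom) by the $v_j$, $j\in\mathcal{J}$, together with $r$ free parameters coming from the coefficients expressing column $r+1$ in terms of the first $r$; then $\Omega = \bigcup_{\mathcal{J}}\Omega_{\mathcal{J}}$.

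The main obstacle I anticipate is the careful bookkeeping in the "forcing" step: making precise why, once a rank-$r$ pattern is fixed, the blocks outside $\mathcal{J}$ contribute no additional degrees of freedom beyond the affine-linear structure — i.e. controlling the interaction between the unknown combination coefficients (the $r$ scalars $\alpha_1,\dots,\alpha_r$ with $M_{:,r+1} = \sum_{t\le r}\alpha_t M_{:,t}$) and the unknown $v_j$. Because $M$ depends on $v$ both through $B$ (fixed) and through the $T^\ell v$ terms, the minor equations are not purely linear in $v$; the trick will be that, after using the rank-one-per-block structure, the nonlinearity is confined to products $\alpha_t\, \omega_j^t\, v_j$, and one can absorb this by a change of variables or by stratifying on the support of $\{\alpha_t\}$, keeping each stratum inside a linear space of the advertised dimension. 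I would also need to handle the reduction from $\C{}$ back to $\R{}$ (the real variety is contained in the real points of the complex one, and real/complex dimension bounds match here because the box-counting / Zariski dimension of an affine variety is preserved). I expect the statement to fall out once the block decomposition is in place, with the dimension count being exactly $r$ (for the $\alpha$'s / the column-span freedom) plus $\sum_{j\in\mathcal{J}}c_j$ (for the free $v_j$ in the active blocks).
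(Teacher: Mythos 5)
Your overall architecture matches the paper's: diagonalize $T$ into its irreducible (one-dimensional, since the group is cyclic) components so that the $j$th block of $T^\ell v$ is $\omega_j^\ell v_j$, translate the rank condition into determinantal/linear-dependence equations on $v$, stratify, and count free parameters as $r$ plus the coordinates of the blocks that escape being determined. However, the central quantitative claim --- that at most $r$ of the irreducible blocks can remain free, so that $|\mathcal{J}|=r$ --- is asserted but never established, and your proxy for it (``which $r$ of the $J$ blocks are active in the span of the first $r$ columns'') is not the right criterion: a block can contribute to the column span and still have $v_j$ completely determined, and nothing in span-counting caps the number of \emph{undetermined} blocks at $r$. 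Concretely, writing $M_{:,r+1}=\sum_{t=1}^{r}\alpha_t M_{:,t}$ and restricting to a coordinate $i$ lying in block $j$ gives $B_{i,r+1}-\sum_t\alpha_t B_{i,t}= v_i\bigl(\omega_j^{r+1}-\sum_{t=1}^r\alpha_t\omega_j^t\bigr)$, so $v_i$ stays free exactly when $\omega_j$ is a root of $P(\omega)=\omega^{r+1}-\sum_t\alpha_t\omega^t=\omega\,Q(\omega)$ with $Q$ monic of degree $r$. Since the $\omega_j$ are distinct roots of unity (hence nonzero), at most $r$ of them can be roots of $Q$; this is the step that forces $|\mathcal{J}|\le r$, and it is exactly what the paper obtains in equivalent Fourier language (the cofactor vector $\mu$ has at most $r+1$ consecutive nonzero entries with $\mu_{r+1}\neq 0$, so by the Donoho--Stark uncertainty principle its DFT has at most $r$ zeros). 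Without this root-counting argument your stratification only yields a bound $r+\sum_{j\in\mathcal{J}'}c_j$ for an uncontrolled index set $\mathcal{J}'$, which is not enough. The remaining ingredients of your plan --- stratifying on the dependence coefficients, passing from $\C{n}$ back to $\R{n}$, and taking the finite union over patterns --- are sound and mirror the paper's proof.
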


\begin{proof}
By assumption, $M = B-[Tv,\dots,T^{r+1}v]$ has rank $r$. All $(r+1)\times (r+1)$ minors of matrices with rank at most $r$ are necessarily zero. Minors are given by polynomial equations on the entries of $M$. Thus, the (at most) rank-$r$ condition on $M$ can be translated into a set of polynomial equations on $v$ which must equal zero.
As the zero set of polynomial equations, the subset of $v\in\R{n}$ which verify this constraint is a variety~\citep{cox2013ideals}.  We will use these polynomial equations to show that this set has dimension at most $r+\sum_{j\in\mathcal{J}} c_j$ where $\mathcal{J}$ is a subset of $r$ out of $\ntransf$ irreducibles. The polynomials depend on both $B$ and $v$, thus the $v$-variety will vary smoothly as a function of $B$. In order to simplify the analysis, we study the set of complex $v\in\C{n}$ that verify the rank condition, noting that the set of real $v\in \R{n}$ is just a  subset of the complex setting.

Using~\Cref{theo: irreps}, the matrix $M$ can be written as
\begin{align}
    M
    & = \begin{bmatrix}
   B_{1,1} -\rho_1(1)v_1 &  B_{1,2} -\rho_1(2)v_1 & \dots & B_{1,r+1} -\rho_1(r+1)v_1\\
B_{2,1} -\rho_{j_2}(1)v_2 & B_{2,2} -\rho_{j_2}(2)v_2  & \dots &  B_{2,r+1} -\rho_{j_2}(r+1)v_2\\
   \vdots &  \vdots & \ddots & \vdots\\
   B_{n,1} -\rho_J(1)v_n  &  B_{n,2} -\rho_J(2)v_n & \dots & B_{n,r+1} -\rho_J(r+1)v_n \\
    \end{bmatrix} 
\end{align}
where we use the index $j_i$ to indicate the irreducible representation associated with the $i$th row. For example, if $J=2$ and $c_1=c_2=2$, we have that $j_1 = j_2 = 1$ and $j_3 = j_4 = 2$. Moreover, as we are dealing with cyclic groups, we have that $\rho_j(g)=e^{- \mathrm{i} 2\pi jg/\ntransf}$~\citep[Chapter~5]{serre1977linear}.

By assumption, there is at least one $r\times r$ invertible submatrix within the first $r$ columns.  Without loss of generality, we assume that this submatrix is given by the first $r$ rows. For any other choice of rows, the results are identical with a re-indexing. We then consider the $n-r$  minors of size $(r+1)\times (r+1)$ corresponding to the first $r+1$ columns together with the first $r$ rows and the $i$th row:
\begin{equation}
   \text{det} \begin{bmatrix}
   B_{1,1} -\rho_1(1)v_1 &  \dots & B_{1,r+1} -\rho_1(r+1)v_1\\
   \vdots &   \ddots & \vdots\\
   B_{r,1} -\rho_{j_r}(1)v_r  & \dots & B_{r,r+1} -\rho_{j_r}(r+1)v_r \\
   B_{i,1} -\rho_{j_{i}}(1)v_{i} & \dots & B_{i,r+1} -\rho_{j_{i}}(r+1)v_{i} \\
    \end{bmatrix} 
\end{equation}
for $i=r+1,\dots,n$. 
Applying Laplace's expansion on the last row of each minor, we have that 
\begin{equation}
     \sum_{g=1}^{r+1} \left(B_{i,g} - \rho_{j_i}(g)v_{i} \right) \mu_g = 0
\end{equation}
where $\mu_1,\dots,\mu_{r+1}$ are the determinants of $r\times r$ matrices which only depend on the first $r+1$ columns of $B$ and $v_1,\dots,v_r$. Let $\mu = [\mu_1,\dots,\mu_{r+1},0,\dots,0]^{\top}\in\C{\ntransf}$, where $\mu_{g} = 0$ for all $g> r+1$. Note that $\mu$ cannot be identically zero, as $\mu_{r+1}\neq 0$ is the determinant of the submatrix with the first $r$ columns and rows, which is invertible. The equations can be rewritten as 
\begin{equation} \label{eq: mat constraint mu}
  \begin{bmatrix} 
  \hat{\mu}_{j_{r+1}}& &  \\
  & \ddots & \\
  & & \hat{\mu}_{j_n}
  \end{bmatrix} \begin{bmatrix}
  v_{r+1} \\
  \vdots \\
  v_n
  \end{bmatrix} = \begin{bmatrix}
  d_{r+1} \\
  \vdots \\
  d_n
  \end{bmatrix}
\end{equation}
where $\hat{\mu}_j = \sum_{g=1}^{\ntransf} \rho_{j}(g) \mu_g $ are the $j$th coefficient of the discrete Fourier transform of the vector $\mu$ (which has at most $r+1$ consecutive nonzero elements) and $d_i=\sum_{g=1}^{r+1} B_{i,g}\mu_g$. For all $i\in \{r+1,\dots,n\}$ where $\hat{\mu}_{j_i}\neq 0$, we have $v_i=d_i/\hat{\mu}_{j_i}$ in order to satisfy the rank-$r$ constraint. As the Fourier transform of a vector with at most $r+1$ non-zero elements which are consecutive has at most $r$ zero coefficients~\citep[Lemma~5]{donoho1989uncertainty}, we have that at most $r$ coefficients $\hat{\mu}_{j}=0$. As each $\hat{\mu}_j$ is repeated at most $c_j$ times in~\cref{eq: mat constraint mu}, there are at most $\sum_{j\in\mathcal{J}} c_j$ zeros along the diagonal in~\cref{eq: mat constraint mu}, where $\mathcal{J}$ is a subset of $r$ out of $\ntransf$ irreducibles. Locally, we are free to vary the first $r$ components $v_1,\dots,v_r$ without changing the rank of the submatrix. From \Cref{eq: mat constraint mu}, we are also free to locally vary at most $\sum_{j \in \mathcal{J}} c_j$ components of the remaining $v_i$ associated with the zero set of $\hat{\mu}_{j_i}$. Hence, the set of $v$ that verifies the rank-$r$ constraint has a dimension equal or smaller than $r + \sum_{j\in\mathcal{J}} c_j$.
\end{proof}

\begin{lemma} \label{lem: cyclic subgroup}
For any infinite compact cyclic group $\group_1$ acting on $\R{n}$, there is a finite cyclic subgroup $\group_2\subset\group_1$ such that the restriction of the linear representation of $\group_1$ to $\group_2$ has the same multiplicities of the irreducible representations.
\end{lemma}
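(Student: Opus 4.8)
The plan is to exploit that a compact cyclic (monothetic) group is abelian, so that \Cref{theo: irreps} gives a decomposition $\C{n} = V_1 \oplus \dots \oplus V_J$ into one-dimensional isotypic components ($s_j = 1$), indexed by \emph{distinct} continuous characters $\chi_1,\dots,\chi_J : \group_1 \to \C{}$, with $\dim V_j = c_j$. The key observation is that restricting a one-dimensional representation to a subgroup can never make it reducible; hence the only way the restriction of the representation $T$ to a subgroup $\group_2 \subseteq \group_1$ can change the list of multiplicities is by \emph{merging} two of these components, i.e.\ by forcing $\chi_j|_{\group_2} = \chi_{j'}|_{\group_2}$ for some $j \neq j'$. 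So the lemma reduces to the purely group-theoretic statement: there is a finite cyclic subgroup $\group_2 \subseteq \group_1$ on which the finitely many characters $\chi_1,\dots,\chi_J$ remain pairwise distinct. Given such a $\group_2$, the restricted representation has isotypic components $V_1,\dots,V_J$ attached to the still-distinct characters $\chi_j|_{\group_2}$, with unchanged multiplicities $c_j$ and $s_j = 1$ (a finite cyclic group has only one-dimensional irreducibles); in particular $\max_j c_j/s_j$ is preserved, which is exactly what is needed to feed \Cref{lem: v-variety} in the proof of \Cref{theo: one op}.

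To produce $\group_2$, I would first fix a topological generator $g_0$ of $\group_1$; it has infinite order because $\overline{\langle g_0 \rangle} = \group_1$ is infinite. The values $\chi_1(g_0),\dots,\chi_J(g_0)$ are already pairwise distinct in $\C{}$: if $\chi_j(g_0) = \chi_{j'}(g_0)$ for $j \neq j'$, then the nontrivial character $\chi_j \overline{\chi_{j'}}$ would vanish on the dense subgroup $\langle g_0 \rangle$ and hence, by continuity, on all of $\group_1$ --- impossible. Next I would use the explicit structure of the action: since $T(\group_1)$ is a compact abelian group acting on a finite-dimensional space, after a change of basis it is simultaneously diagonal, so it sits inside a maximal torus and the characters take the form $\chi_j(g) = e^{\mathrm{i} 2\pi \langle a_j, t(g)\rangle}$ for distinct integer vectors $a_1,\dots,a_J \in \mathbb{Z}^d$, where $t : \group_1 \to (\mathbb{R}/\mathbb{Z})^d$ gives the torus coordinates. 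It then suffices to find an element of finite order whose coordinates $(p_1/N,\dots,p_d/N)$ satisfy $\langle a_j - a_{j'}, p\rangle \not\equiv 0 \pmod{N}$ for all pairs $j \neq j'$, since this precisely separates the restricted characters. Taking $N$ a large prime, each of the $\binom{J}{2}$ bad conditions cuts out a proper affine subspace of $(\mathbb{Z}/N)^d$ --- nonzero once $N$ exceeds every entry appearing in the differences $a_j - a_{j'}$ --- and a union of $\binom{J}{2}$ such subspaces cannot cover $(\mathbb{Z}/N)^d$ for $N$ large; any $p$ outside the union automatically generates a cyclic group of order $N$ (as $N$ is prime and $p \neq 0$), and $\group_2 := \langle (p_1/N,\dots,p_d/N) \rangle$ is the desired subgroup. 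In the case most relevant here, $d = 1$ (translations and rotations, where $\group_1 = \mathbb{R}/\mathbb{Z}$ and the $a_j$ are simply the integer frequencies present in $T$), this collapses to the trivial choice $\group_2 = \{ k/N : 0 \le k < N\}$ for any $N > \max_j a_j - \min_j a_j$.

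I expect the main obstacle to be not any single calculation but making the reduction airtight: fixing the precise meaning of ``infinite compact cyclic group'' (a monothetic compact group, hence abelian with a dense cyclic subgroup), and checking that when such a group acts continuously on $\R{n}$ it really does contain finite cyclic subgroups of arbitrarily large order --- which comes down to the image being a compact abelian Lie group of the form $(\mathbb{R}/\mathbb{Z})^d$, possibly times a finite cyclic factor, whose torsion subgroup is dense. Two routine points to verify along the way: the kernel of $T$ acts trivially on every $V_j$, so the multiplicities only depend on the image representation; and complexifying $\R{n}$ to $\C{n}$ does not alter the multiplicities in the invariant-subspace decomposition of \Cref{theo: irreps}. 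Once the character picture is set up, the separation step is an elementary Kronecker-type approximation plus a counting argument, and the non-connected case $(\mathbb{R}/\mathbb{Z})^d \times \mathbb{Z}/m$ is handled in the same way by choosing the generator to mix both factors.
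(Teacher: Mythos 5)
Your proposal is correct and its core mechanism is the same as the paper's: since the group is abelian, all irreducibles are one-dimensional characters indexed by integer frequencies, restriction of a character to a subgroup stays irreducible, and so it suffices to exhibit a finite cyclic subgroup of order large enough that the finitely many characters remain pairwise distinct on it. The paper does exactly this for $\group_1=\mathbb{R}/\mathbb{Z}$ (elements parameterized by $g\in(0,1]$, characters $\rho_j(g)=e^{-\mathrm{i}2\pi j g}$ with integers $j_1<\dots<j_J$ and $j_1=-j_J$), taking the subgroup of $p$-th roots with $p>2j_J$ and verifying distinctness of the restricted characters via the orthogonality relation $\frac{1}{p}\sum_{r}\rho_j(r)\rho_{j'}^{*}(r)=\delta_{jj'}$ --- which is precisely your $d=1$ case with $N>\max_j a_j-\min_j a_j$. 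Where you go further is in not assuming the group is the circle: you treat a general monothetic compact image of the form $(\mathbb{R}/\mathbb{Z})^d\times\mathbb{Z}/m$ and separate the characters by the counting argument over $(\mathbb{Z}/N)^d$ for a large prime $N$. That extra generality is genuinely needed if one reads ``infinite compact cyclic'' as ``monothetic'' (higher-dimensional tori are monothetic), whereas the paper's proof implicitly restricts to $d=1$, which happens to cover all the group actions it uses (translations, rotations). Your two ``routine points'' --- passing to the image representation modulo $\ker T$, and the fact that complexifying $\R{n}$ to $\C{n}$ does not change the multiplicities --- are indeed the only loose ends, and both are harmless here.
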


\begin{proof}
Following \Cref{theo: irreps}, the representation of a compact infinite cyclic group $\group_1$ in $\C{n}$ is given by
\begin{equation}\label{eq:full group rep}
    T_{g} = F\begin{bmatrix}
    e^{-\mathrm{i}2\pi j_1 g } & & \\
    & \ddots & \\
   & &   e^{-\mathrm{i}2\pi j_J g } 
    \end{bmatrix}F^{-1}
\end{equation}
for the elements $g\in(0,1]$. This representation contains $J=n/\sum_j c_j$ distinct irreducibles given by $\rho_j(g) = e^{-\mathrm{i}2\pi j g }$ with integers $j_1< \dots< j_J$. The linear representation has the same form on $\R{n}$ as in~\Cref{eq:full group rep}, with the additional constraint that for each $j_i$ the conjugate pair $-j_i$ is also present in the linear representation (and thus $j_1 = -j_J)$. The restriction to a finite subgroup $\group_2$ with $p$ elements is given by
\begin{equation}\label{eq:subgroup rep}
    \tilde{T}_{r} = F
    \begin{bmatrix}
    e^{-\mathrm{i}2\pi j_1 \frac{r}{p} } & & \\
    & \ddots & \\
   & &   e^{-\mathrm{i}2\pi j_J \frac{r}{p} } 
    \end{bmatrix}
    F^{-1}
\end{equation}
for the elements $r=1,\dots,p$. In order to ensure that the representation of $\group_2$ has the same multiplicities than the one of $\group_1$, we need that no two distinct irreducibles of $\group_1$ are mapped to a common irreducible of the finite subgroup. For any subgroup of size $p>2j_J$, the diagonal entries in \Cref{eq:subgroup rep}, $\rho_{j}(r)=  e^{-\mathrm{i}2\pi j \frac{r}{p} }$, are orthogonal as vectors in $\C{p}$ for any $j\neq j'$ as 
\begin{equation}  
   \frac{1}{p} \sum_{r=1}^{p} \rho_{j}(r)\rho_{j'}^{*}(r) = \begin{cases}
    1 \quad \text{if} \quad j=j' \\
    0 \quad \text{ otherwise}
    \end{cases}.
\end{equation}
where $*$ denotes complex conjugation.
Thus, they are distinct irreducible representations of $\group_2$. By~\Cref{theo: irreps}, the representation in \Cref{eq:subgroup rep} is in diagonal form, and   \Cref{eq:full group rep} and \Cref{eq:subgroup rep} have the same multiplicities.
\end{proof}

\textbf{Proof of~\Cref{theo: one op}}.

\begin{proof}
\red{Our proof focuses on finite cyclic groups, whose irreducible representations have dimension $s_j=1$. If the cyclic group is infinite and compact, we can restrict its action to a finite subgroup with the same multiplicities $c_j$ of the irreducible representations using Lemma~\ref{lem: cyclic subgroup}.} Similarly to \Cref{theo: multiple op}, we have to prove that $\hat{\signalset}\setminus{\signalset}$ is empty for almost every $A\in\R{m\times n}$ if $m>2k +\max_j c_j+1$. 
In this case, this is equivalent to

\begin{align}  \label{eq:AZ}
  \underbrace{\begin{bmatrix}
  - AT_1 & AT_{1} &  & \\ 
   \vdots & & \ddots & \\ 
  - AT_{\ntransf}& &  & AT_{\ntransf}
 \end{bmatrix}}_{G_\alpha\in\R{n(\ntransf+1) \times m\ntransf}}
  \underbrace{\begin{bmatrix}
  v \\
   x_{1}\\
   \vdots \\ 
    x_{\ntransf}\\
 \end{bmatrix}}_{z\in \R{n(\ntransf+1)}} &\neq 0 \\
 G_{\alpha}(z) &\neq 0
\end{align}
for any $x_1,\dots,x_{\ntransf}\in\signalset$ and $v\in\R{n}\setminus{\signalset}$. \Cref{eq:AZ} can be rewritten as 
\begin{equation}
\label{eq: APhi}
    A \Phi_z \neq 0
\end{equation}
where 
\begin{equation} \label{eq:phiz}
    \Phi_z = [T_1(x_1-v),\dots,T_{\ntransf}(x_{\ntransf}-v)] \in \R{n\times \ntransf}.
\end{equation}

Moreover, letting $T_1 = T$ be the linear representation of the generator of the group, we can write $T_r = T^r$ where $T^r$ denotes the $r$th power of $T$. Thus, we can rewrite~\cref{eq:phiz} as 
\begin{equation} \label{eq:phiz2}
    \Phi_z = [T(x_1-v),\dots,T^{\ntransf}(x_{\ntransf}-v)] \in \R{n\times \ntransf}.
\end{equation}

Equation~\cref{eq: APhi} requires that the nullspace of $A$ does not contain the range of $\Phi_z$ for any choice of $z$. We perform a separate analysis for the cases of full rank or rank-deficient $\Phi_z$. We decompose $S= S_1 \bigcup S_2$, where $S_1$ is the set of $z$ such that $\Phi_z$ has full rank (i.e., $\rk{\Phi_z} = \min \{ n,\ntransf\}$) and $S_2$ is the set of $z$ such that $\Phi_z$ is rank-deficient.

We begin with analyzing the (simpler) full-rank case associated with $S_1$. If $\ntransf\geq n$, then $\rk{\Phi_z}=n$ and \cref{eq: APhi} implies the trivial inequality $A\neq0$ which has measure zero in $\R{m\times n}$.  If $\ntransf< n$, then, in order to apply Lemma~\ref{lemma:sauer}, we need to compute the dimension of 

\begin{equation}
    S_1 = \{ z\in (\R{n}\setminus{\signalset}) \times \signalset^{\ntransf} | \; \rk{\Phi_z} = \ntransf \}
\end{equation}
As being full-rank is an open condition, $S_1$ has the same dimension as $(\R{n}\setminus{\signalset}) \times \signalset^{\ntransf}$. The analysis of this set can be done in the same way as in the proof of~\Cref{theo: multiple op} (which we do not repeat here), where we have in both cases that almost every $A\in\R{m\times n}$ with $m> k + n/\ntransf$ verifies $A\Phi_z\neq 0$. Moreover, as $n/\ntransf \geq \max_j c_j$ for any linear representation of a cyclic group, we have that
\begin{align}
  m&> k+n/\ntransf > 2k + \max_j c_j +1 .
\end{align}
    
We now treat the rank-deficient case associated with $S_2$. Let $\Phi_{z,r} \in \R{n\times r}$ denote the first $r$ columns of $\Phi_z$. 
As no column of $\Phi_z$ can be exactly 0, we can decompose $S_2$ as
\begin{equation}
    S_2 = \bigcup_{r=0}^{\min\{n-1,\ntransf-1\}} S_{2,r}
\end{equation}
 where $S_{2,r}$ is the subset of $z\in S$ such that the first $r$ columns are linearly independent (i.e., $\rk{\Phi_{z,r}}=r$) but the first $r+1$ columns of $\Phi_z$ are rank-deficient (i.e., $\rk{\Phi_{z,r+1}}=r$).  We will consider  $A\Phi_{z,r} \neq 0 $, as this necessarily implies $A\Phi_z \neq 0$. 
In order to apply Lemma~\ref{lemma:sauer}, we need to bound the dimension of the set\footnote{Note that we do not consider $x_{r+2},\dots,x_{\ntransf}$ in $S_{2,r}$ as, for any fixed $v,x_1,\dots,x_{r+1}$, they yield the same submatrix $\Phi_{z,r}$.}
    \begin{equation}
      S_{2,r} =  \{ \tilde{z} = [v^{\top},x_1^{\top},\dots,x_{r+1}^{\top}]^{\top} \in (\R{n}\setminus{\signalset})\times  \signalset^{r+1}  | \; \rk{\Phi_{z,r}} =  \rk{\Phi_{z,r}} = r\}.
    \end{equation}
By Lemma~\ref{lem: v-variety}, the set of $v$ in $z$ verifying the constraint $\rk{\Phi_{z,r}} =  \rk{\Phi_{z,r+1}} = r$ is a variety of $\R{n}$ of dimension $d \leq r + \sum_{\mathcal{J}} c_j$ which varies smoothly as a function of $x_1,\dots,x_{r+1}$, where $c_j$ denotes the multiplicity of the $j$th irreducible representation of the group action and $\mathcal{J}$ is a subset of $r$ out of $\ntransf$ irreducibles.  
A variety consists of a finite union of irreducible\footnote{Despite having the same name, the term \emph{irreducible} variety is not related to the irreducible group representations.} varieties, where each irreducible has the structure of  a  smooth manifold~\citep{cox2013ideals}. 
Thus, we decompose $S_{2,r}$ as a finite union of irreducible varieties, $S_{2,r} \subseteq \bigcup_{\ell} S_{2,r,\ell}$, and divide each variety further as a countable union of local (bounded) neighbourhoods, yielding 
\begin{equation}
    S_{2,r}\subseteq \bigcup_{\ell} \bigcup_{i=1}^{\infty} S_{2,r,\ell,i}
\end{equation}
As the $v$-variety has a smooth dependence on $x_1,\dots,x_{r+1}$ by Lemma~\ref{lem: v-variety}, for each subset $S_{2,r,\ell,i}$, we can build a smooth locally Lipschitz mapping $f: S_{2,r,\ell,i} \subset\R{n(r+2)}\mapsto\R{n(r+2)}$  such that $f(S_{2,r,\ell,i})$ has a simple product structure $\{0\}^{n-d} \times (0,1)^d  \times (W \cap \signalset^{r+1})$ where $W$ is local  neighborhood of $\R{n(r+1)}$ (see~\citep[Chapter~7]{falconer2004fractal}). As a Lipschitz mapping does not increase the box-counting dimension of a set~\citep{robinson2010dimensions}, we have 
\begin{align}
    \bdim{S_{2,r,\ell,i}} &\leq \bdim{\{0\}^{n-d}} + \bdim{(0,1)^d} + \bdim{W \cap \signalset^{r+1}} \\
    &\leq d + k(r+1) \\ 
    & \leq r+ \sum_{j\in\mathcal{J}} c_j + k(r+1)
\end{align}
for both  bounded and conic signal sets $\signalset$. As we have $\rk{\Phi_{z,r}}=r$, according to Lemma~\ref{lemma:sauer}, for almost every $A\in\R{m\times n}$ there is no $z\in S_{2,r,\ell,i}$ which verifies $A\Phi_{z,r}\neq 0$ as long as $m$ verifies 
\begin{align} \label{eq:mbound}
    mr&>  k(r+1) + \sum_{j\in\mathcal{J}} c_j +r \\
    m&>  k (1+1/r) + \sum_{j\in\mathcal{J}} c_j/r + 1\\
    m&> 2k + \max_j c_j + 1
\end{align} 
where the last inequality is obtained by using $r \geq 1$ and uses the fact that $|\mathcal{J}|=r$. As the countable union of events of measure zero have measure  zero, then for almost every  $A\in\R{m\times n}$ with $m>2k+\max_j c_j + 1$,  all $z\in S_{2,r}$ verifies $A\Phi_z\neq 0$ for all possible ranks $r$.

\end{proof}

\textbf{Proof of~\Cref{theo: badAs}.}
\begin{proof}
We need to show that the following statements are equivalent:
\begin{enumerate}[label=(\roman*)]
    \item The range space of $(AT_g)^{\dagger}$ is the same for all $g\in\group$.
    \item $A$ is a  $\group$-equivariant map.
    \item $A$ can be decomposed as $A=\tilde{F}\Sigma F^{-1}$ where $\Sigma$ has the block-structure in~\Cref{theo: irreps}.
\end{enumerate}
We first prove (i)$\implies$(ii). Let $V\in \R{m \times n}$ be an orthogonal basis for the range space of $A^{\dagger}$, such that $A=M V^{\top}$ where $M\in\R{m\times m}$ is an invertible matrix. Due to (i), we have 
\begin{align}
    AT_g &= M Q_g V^{\top} \\
    &= \underbrace{M Q_g M^{-1}}_{U_g} MV^{\top} \\
    &= U_g A
\end{align}
where $Q_g\in \R{m\times m}$ is an orthogonal matrix and $U_g = M Q_g M^{-1}\in \R{m\times m}$ is invertible. 
The mapping $g\mapsto U_g$ is a valid linear representation (e.g., see ~\citep[Chapter~1]{serre1977linear}) as we have that for any two group elements $g$ and $g'$, $ U_{g\cdot g'} A = A T_{g\cdot g'} = AT_{g}T_{g'} = U_{g'} AT_g = U_gU_{g'} A$  and thus $U_{g\cdot g'} = U_g U_{g'}$. As we have $U_gA = AT_g$ where $U_g$ and $T_g$ are two linear representations of $\group$, $A$ is an equivariant map.

We now prove (ii)$\implies$(i). 
Using the decomposition $A = M V^{\top}$ and (ii) we have
\begin{align}
    AT_g &= \tilde{T}_g A\\
          &= \tilde{T}_g M V^{\top} \\
          &= \tilde{U}_g V^{\top}
\end{align}
where $\tilde{U}_g =\tilde{T}_g M$ is an invertible matrix. As $A$ and $AT_g$ share the same right singular vectors given by the matrix $V$, they have the same range space for all $g\in\group$.

The proof of (ii)$\iff$(iii) follows from~\Cref{theo: irreps} and is a standard result of linear representation theory which can be found in, for example, \citep{reeder2014notes} and \citep{stiefel2012group}.
\end{proof}

\vskip 0.2in

\end{document}